\documentclass{article} 
\usepackage{iclr2024_conference,times}
\iclrfinalcopy

\usepackage{hyperref}
\usepackage{url}
\usepackage{smile}

\usepackage{subcaption}
\usepackage[capitalize,noabbrev]{cleveref}
\usepackage{amsfonts}
\usepackage{nicefrac}       



\usepackage{enumitem}
\usepackage{microtype}
\usepackage{graphicx}
\usepackage{amsthm}
\usepackage{amsmath}

\usepackage{subfigure}
\usepackage{enumerate}
\usepackage{enumitem}
\usepackage{pgfplots}
\usepackage{comment}
\usepackage{natbib}

\usepackage{booktabs}

\DeclareFontFamily{U}{mathx}{}
\DeclareFontShape{U}{mathx}{m}{n}{<-> mathx10}{}
\DeclareSymbolFont{mathx}{U}{mathx}{m}{n}
\DeclareMathAccent{\widehat}{0}{mathx}{"70}
\DeclareMathAccent{\widecheck}{0}{mathx}{"71}
\DeclareMathAccent{\widebar}{0}{mathx}{"73}
\newcommand{\la}{\langle}
\newcommand{\ra}{\rangle}
\newcommand{\qvalue}{Q}
\newcommand{\vvalue}{V}

\allowdisplaybreaks
\usepackage{colortbl}
\definecolor{LightCyan}{rgb}{0.8, 0.9, 1}
\definecolor{LightGray}{gray}{0.9}

\usepackage{colortbl}
\definecolor{LightCyan}{rgb}{0.8, 0.9, 1}
\usepackage{xcolor}

\ifdefined\final
\usepackage[disable]{todonotes}
\else
\usepackage[textsize=tiny]{todonotes}
\fi
\setlength{\marginparwidth}{1.2in}

\setlength{\textfloatsep}{0.5pt}
\setlength\floatsep{0.1pt}
\setlength\intextsep{0.1pt}
\setlength{\abovecaptionskip}{0.1pt}
\setlength{\belowcaptionskip}{0.1pt}
\setlength{\parskip}{0.01em}
\usepackage[compact]{titlesec}
\titlespacing*{\section}
{0pt}{0.05ex}{0.05ex}
\titlespacing*{\subsection}
{0pt}{0.05ex}{0.05ex}
\titlespacing*{\subsubsection}
{0pt}{0.05ex}{0.05ex}
\makeatletter
\newcommand*{\rom}[1]{\expandafter\@slowromancap\romannumeral #1@}
\makeatother
\title{Pessimistic Nonlinear Least-Squares Value Iteration for Offline Reinforcement Learning}


\author{Qiwei Di$^{1}$, 
Heyang Zhao$^{1}$, 
Jiafan he$^{1}$, 
Quanquan Gu$^{1}$\\
$^1$Department of Computer Science, University of California, Los Angeles\\
\texttt{\{qiwei2000,hyzhao,jiafanhe19,qgu\}@cs.ucla.edu,}
}

%

\pgfplotsset{compat=1.18}
\begin{document}

\maketitle

\begin{abstract}
Offline reinforcement learning (RL), where the agent aims to learn the optimal policy based on the data collected by a behavior policy, has attracted increasing attention in recent years. While offline RL with linear function approximation has been extensively studied with optimal results achieved under certain assumptions, many works shift their interest to offline RL with non-linear function approximation.
However, limited works on offline RL with non-linear function approximation have instance-dependent regret guarantees.
    In this paper, we propose an oracle-efficient algorithm, dubbed Pessimistic Nonlinear Least-Square Value Iteration (PNLSVI), for offline RL with non-linear function approximation. Our algorithmic design comprises three innovative components: (1) a variance-based weighted regression scheme that can be applied to a wide range of function classes, (2) a subroutine for variance estimation, and (3) a planning phase that utilizes a pessimistic value iteration approach. Our algorithm enjoys a regret bound that has a tight dependency on the function class complexity and achieves minimax optimal instance-dependent regret 
    when specialized to linear function approximation. Our work extends the previous instance-dependent results within simpler function classes, such as linear and differentiable function to a more general framework.
\end{abstract}

\section{Introduction}


Offline reinforcement learning (RL), also known as batch RL, is a learning paradigm where an agent learns to make decisions based on a set of pre-collected data, instead of interacting with the environment in real-time like online RL. The goal of offline RL is to learn a policy that performs well in a given task, based on historical data that was collected from an unknown environment. Recent years have witnessed significant progress in developing offline RL algorithms that can leverage large amounts of data to learn effective policies. These algorithms often incorporate powerful function approximation techniques, such as deep neural networks, to generalize across large state-action spaces. They have achieved excellent performances in a wide range of domains, including the games of Go and chess \citep{silver2017mastering,schrittwieser2020mastering}, robotics \citep{gu2017deep,levine2018learning}, and control systems \citep{degrave2022magnetic}.

Several works have studied the theoretical guarantees of offline tabular RL and proved near-optimal sample complexities in this setting \citep{xie2021policy,shi2022pessimistic,li2022settling}. However, these algorithms cannot handle real-world applications with large state and action spaces. Consequently, a significant body of research has devoted to offline RL with function approximation. For example, \citep{jin2021pessimism} proposed the first efficient algorithm for offline RL with linear MDPs, employing the principle of pessimism. Subsequently, numerous works have presented a range of algorithms for offline RL with linear function approximation, as seen in \citet{zanette2021provable,min2021variance,yin2022near,xiong2023nearly,nguyen2023instance}. Among them, some works have instance-dependent (a.k.a., problem-dependent) upper bound \citep{jin2021pessimism,yin2022near,xiong2023nearly,nguyen2023instance}, which matches the worst-case result when dealing with the ``hard instance'' and performs better in easy cases. 

To address the complexities of working with more complex function classes, recent research has shifted the focus towards offline reinforcement learning (RL) with general function approximation \citep{chen2019information,xie2021bellman}. Utilizing the principle of pessimism first used in \citet{jin2021pessimism}, \citet{xie2021bellman} enforced pessimism at the initial state over the set of functions consistent with the Bellman equations. Their algorithm requires solving an optimization problem over all the potential policies and corresponding version space, which includes all functions with lower Bellman-error. To overcome this limitation, \citet{xie2021bellman} proposed a practical algorithm which has a poor dependency on the function class complexity. Later, \citet{cheng2022adversarially} proposed an adversarially trained actor critic method based upon the concept of relative pessimism. Their result is not statistically optimal and was later improved by \citet{zhu2023importance}. Both algorithms' implementation relies on a no-regret policy optimization oracle. Another line of works, such as \citep{zhan2022offline,ozdaglar2023revisiting,rashidinejad2021bridging}, sought to solve the offline RL problem through a linear programming formulation, which requires some additional convexity assumptions on the policy class. However, most of these works only have worst-case regret guarantee. The only exception is \citet{yin2022offline}, which studies the general differentiable function class and proposed an LSVI-type algorithm. For more general function classes, how to get instance-dependent characterizations is still an open problem. Therefore, a natural question arises: 

    \emph{Can we design a computationally tractable algorithm that is statistically efficient with respect to the complexity of nonlinear function class and has an instance-dependent regret bound?}
    
We give an affirmative answer to the above question in this work. Our contributions are listed as follows:

\begin{itemize}[leftmargin=*]
    \item We propose a pessimism-based algorithm Pessimistic Nonlinear Least-Square Value Iteration (PNLSVI) designed for nonlinear function approximation, which strictly generalizes the existing pessimism-based algorithms for both linear and differentiable function approximation \citep{xiong2023nearly,yin2022offline}. Our algorithm is oracle-efficient, i.e., it is computationally efficient when there exists an efficient regression oracle and bonus oracle for the function class (e.g., generalized linear function class). The bonus oracle can also be reduced to a finite number of calls to the regression oracle.
    \item We introduce a new type of $D^2$-divergence to quantify the uncertainty of an offline dataset, which naturally extends the role of the elliptical norm seen in the linear setting and the $D^2$-divergence in \citet{gentile2022achieving,agarwal2023vo,ye2023corruption} for online RL. We prove an instance-dependent regret bound characterized by this new $D^2$-divergence. Our regret bound has a tight dependence on complexity of the function class, i.e., $\tilde O(\sqrt{\log \cN})$ with $\cN$ being the cardinality of the underlying function class, which improves the $\tilde O(\log \cN)$ dependence \footnote{In \citep{yin2022offline}, they denote by $d$ the complexity of the underlying function class, which is essentially $\log \cN$ using our notation.} in \citep{yin2022offline} and resolves the open problem raised in their paper.
\end{itemize}

\noindent\textbf{Notation:} In this work, we use lowercase letters to denote scalars and use lower and uppercase boldface letters to denote vectors and matrices respectively.  For a vector $\xb\in \RR^d$ and matrix $\bSigma\in \RR^{d\times d}$, we denote by $\|\xb\|_2$ the Euclidean norm and $\|\xb\|_{\bSigma}=\sqrt{\xb^\top\bSigma\xb}$. For two sequences $\{a_n\}$ and $\{b_n\}$, we write $a_n=O(b_n)$ if there exists an absolute constant $C$ such that $a_n\leq Cb_n$, and we write $a_n=\Omega(b_n)$ if there exists an absolute constant $C$ such that $a_n\geq Cb_n$. We use $\tilde O(\cdot)$ and $\tilde \Omega(\cdot)$ to further hide the logarithmic factors. For any $a \leq b \in \RR$, $x \in \RR$, let $[x]_{[a,b]}$ denote the truncate function $a\cdot \ind(x \leq a) + x \cdot \ind (a \leq x \leq b) + b \cdot \ind (b \leq x)$, where $\ind(\cdot)$ is the indicator function. For a positive integer $n$, we use $[n]=\{1,2,..,n\}$ to denote the set of integers from $1$ to $n$. 
\section{Related Work}

\noindent\textbf{RL with function approximation. } As one of the simplest function approximation classes, linear representation in RL has been extensively studied in recent years \citep{jiang2017contextual, dann2018oracle, yang2019sample, jin2020provably, wang2020optimism, du2019good, sun2019model, zanette2020frequentist, zanette2020learning, weisz2021exponential, yang2020reinforcement, modi2020sample, ayoub2020model, zhou2021nearly, he2021logarithmic,zhong2022pessimistic}. Several assumptions on the linear structure of the underlying MDPs have been made in these works, ranging from the \emph{linear MDP} assumption \citep{yang2019sample, jin2020provably, hu2022nearly, he2022nearly, agarwal2023vo} to the \emph{low Bellman-rank} assumption \citep{jiang2017contextual} and the \emph{low inherent Bellman error} assumption \citep{zanette2020learning}. Extending the previous theoretical guarantees to more general problem classes, RL with nonlinear function classes has garnered increased attention in recent years \citep{wang2020reinforcement, jin2021bellman, foster2021statistical, du2021bilinear, agarwal2022model, agarwal2023vo}. Various complexity measures of function classes have been studied including Bellman rank \citep{jiang2017contextual}, Bellman-Eluder dimension \citep{jin2021bellman}, Decision-Estimation Coefficient \citep{foster2021statistical} and generalized Eluder dimension \citep{agarwal2023vo}. Among these works, the setting in our paper is most related to \citet{agarwal2023vo} where $D^2$-divergence \citep{gentile2022achieving} was introduced in RL to indicate the uncertainty of a sample with respect to a particular sample batch. 

\noindent\textbf{Offline tabular RL. } There is a line of works integrating the principle of pessimism to develop statistically efficient algorithms for offline tabular RL setting \citep{rashidinejad2021bridging, yin2021towards, xie2021policy, shi2022pessimistic, li2022settling}. More specifically, \citet{xie2021policy} utilized the variance of transition noise and proposed a nearly optimal algorithm based on pessimism and Bernstein-type bonus. Subsequently, \citet{li2022settling} proposed a model-based approach that achieves minimax-optimal sample complexity without burn-in cost for tabular MDPs. \citet{shi2022pessimistic} also contributed by proposing the first nearly minimax-optimal model-free offline RL algorithm. 

\noindent\textbf{Offline RL with linear function approximation. } \citet{jin2021pessimism} presented the initial theoretical results on offline linear MDPs. They introduced a pessimism-principled algorithmic framework for offline RL and proposed an algorithm based on LSVI \citep{jin2020provably}. \citet{min2021variance} subsequently considered offline policy evaluation (OPE) in linear MDPs, assuming independence between data samples across time steps to obtain tighter confidence sets and proposed an algorithm with optimal $d$ dependence.  \citet{yin2022near} took one step further and considered the policy optimization in linear MDPs, which implicitly requires the same independence assumption. \citet{zanette2021provable} proposed an actor-critic-based algorithm that establishes pessimism principle by directly perturbing the parameter vectors in a linear function approximation framework. Recently, \citet{xiong2023nearly} proposed a novel uncertainty decomposition technique via a reference function, and demonstrated their algorithm matches the performance lower bound up to logarithmic factors.

\paragraph{Offline RL with general function approximation. } \citet{chen2019information} examined the assumptions underlying value-function approximation methods and established an information-theoretic lower bound. \citet{xie2021bellman} introduced the concept of Bellman-consistent pessimism, which enables sample-efficient guarantees by relying solely on the Bellman-completeness assumption. \citet{uehara2021pessimistic} focused on model-based offline RL with function approximation under partial coverage, demonstrating that realizability in the function class and partial coverage are sufficient for policy learning. \citet{zhan2022offline} proposed an algorithm that achieves polynomial sample complexity under the realizability and single-policy concentrability assumptions. \citet{nguyenviper} proposed a method of random perturbations and pessimism for neural function approximation. For differentiable function classes, \citet{yin2022offline} made advancements by improving the sample complexity with respect to the planing horizon $H$. However, their result had an additional dependence on the dimension $d$ of the parameter space, whereas in linear function approximation, the dependence is typically on $\sqrt{d}$. Recently, a sequence of works focus on proposing statistically optimal and practical algorithms under single concentrability assumption. \citet{ozdaglar2023revisiting} provided a new reformulation of linear-programming. \citet{rashidinejad2022optimal} used the augmented Lagrangian method and proved augmented Lagrangian is enough for statistically optimal offline RL. \citet{cheng2022adversarially} proposed an actor-critic algorithm by formulating the offline RL problem into a Stackelberg game. However, their result is not statistically optimal and \citet{zhu2023importance} improved it by combining the marginalized importance sampling framework and achieved the optimal statistical rate. We leave a comparison of these works in Appendix \ref{comparison}.
 
\section{Preliminaries}\label{section3}

In our work, we consider the inhomogeneous episodic Markov Decision Processes (MDP), which can be denoted by a tuple of $\cM\big(\cS, \cA, H, \{r_h\}_{h=1}^H, \{\PP_h\}_{h=1}^H\big)$. In specific, $\cS$ is the state space, $\cA$ is the finite action space, $H$ is the length of each episode. For each stage $h\in [H]$, $r_h: \cS \times \cA \rightarrow [0,1]$ is the reward function\footnote{While we study the deterministic reward functions for simplicity, it is not difficult to generalize our results to stochastic reward functions.} and $\PP_h(s'|s,a)$ is the transition probability function, which denotes the probability for state $s$ to transfer to next state $s'$ with current action $a$. A policy $\pi := \{\pi_h\}_{h=1}^{H}$ is a collection of mappings $\pi_h$ from a state $s\in \cS$ to the simplex of action space $\cA$. For simplicity, we denote the state-action pair as $z := (s,a)$.
For any policy $\pi$ and stage $h\in[H]$, we define the value function $V_h^{\pi}(s)$ and the action-value function $Q_h^{\pi}(s,a)$ as the expected cumulative rewards starting at stage $h$, which can be denoted  as follows:
\begin{align}
Q^{\pi}_h(s,a) &=r_h(s,a) + \EE\bigg[\sum_{h'=h+1}^H r_{h'}\big(s_{h'}, \pi_{h'}(s_{h'})\big)\big| s_h=s,a_h=a\bigg],\ V_h^{\pi}(s) = Q_h^{\pi}\big(s, \pi_{h}(s)\big),\notag
\end{align}
where $s_{h'+1}\sim \PP_h(\cdot|s_{h'},a_{h'})$ denotes the observed state at stage $h'+1$.
By this definition, the value function $\vvalue_h^{\pi}(s)$ and action-value function $\qvalue_h^{\pi}(s,a)$ are bounded in $[0,H]$.
In addition, 
we define the optimal value function $V_h^*$ and the optimal action-value function $Q_h^*$ as $V_h^*(s) = \max_{\pi}V_h^{\pi}(s)$ and $Q_h^*(s,a) = \max_{\pi}Q_h^{\pi}(s,a)$. We denote the corresponding optimal policy by $\pi^*$. 
 For any function $\vvalue: \cS \rightarrow \RR$, we denote $[\PP_h \vvalue](s,a)=\EE_{s' \sim \PP_h(\cdot|s,a)}\vvalue(s')$ and $[\text{Var}_h \vvalue](s,a)=[\PP_h V^2](s,a)-\big([\PP_h V](s,a)\big)^2$ for simplicity.  For any function $f: \cS \times \cA \rightarrow \RR$, we define $f(s) = \max_{a}f(s,a)$.
 For any function $f:\cS \rightarrow \RR$\footnote{In this paper, we slightly abuse notation $f$ for both $f(s)$ and $f(s,a)$. The context readily clarifies the intended meaning, i.e., value function or action-value function.}, we define the Bellman operator $\cT_h$ as $[\cT_h f](s_h,a_h) = \EE_{s_{h+1}\sim \PP_h(\cdot|s_h,a_h)} \left[r_h(s_h,a_h) + f(s_{h+1})\right]$.
 Based on this definition, for every stage $h\in[H]$ and policy $\pi$, we have the following Bellman equation for value functions $Q_{h}^{\pi}(s,a)$ and $V_{h}^{\pi}(s)$, as well as the Bellman optimality equation for optimal value functions:
 \begin{align}
    Q_h^{\pi}(s_h,a_h) = [\cT_h V_{h+1}^\pi](s_h,a_h),\ Q_h^*(s_h,a_h) = [\cT_h V_{h+1}^*](s_h,a_h),\notag
\end{align}
where $\vvalue^{\pi}_{H+1}(s)=\vvalue^{*}_{H+1}(s)=0$. We also define the Bellman operator for second moment as $[\cT_{2,h} f](s_h,a_h) = \EE_{s_{h+1}\sim \PP_h(\cdot|s_h,a_h)} \left[\big(r_h(s_h,a_h) + f(s_{h+1})\big)^2\right]$. For simplicity, 
 we omit the subscripts $h$ in the Bellman operator without causing confusion.

\paragraph{Offline Reinforcement Learning:}

In offline RL, the agent only has access to a batch-dataset $D = \{s_h^k,a_h^k,r_h^k: h \in [H], k \in [K]\}$, which is collected by a behavior policy $\mu$, and the agent cannot interact with the environment. We also make the compliance assumption of the dataset:
\begin{assumption}
    For a dataset $\cD = \{(s_h^k,a_h^k,r_h^k)\}_{k,h=1}^{K,H}$, let $\PP_{\cD}$ be the joint distribution of the data collecting process. We say $\cD$ is compliant with an underlying MDP $(\cS,\cA,H,\PP,r)$ if 
    \begin{align*}
        &\PP_{\cD}(r_h^k = r',s_{h+1}^k = s'\mid \{(s_h^j,a_h^j)\}_{j=1}^k,\{(r_h^j,s_{h+1}^j)\}_{j=1}^{k-1}) \\
        & \qquad = \PP_h(r_h(s_h,a_h) = r', s_{h+1}=s'\mid s_h = s_h^k, a_h = a_h^k)
    \end{align*}
    for all $r' \in [0,1]$, $s' \in \cS$, $h \in [H]$ and  $k \in [K]$.
\end{assumption}
This assumption is common in offline RL and has also been made in \citet{jin2021pessimism, zhong2022pessimistic}. When the dataset originates from one behavior policy, this assumption naturally holds. In this paper, we assume our dataset is generated by a single behavior policy $\mu$. In addition, for each stage $h$, we denote the induced distribution of the state-action pair by $d^\mu_h$.

Given the batch dataset, the goal of offline RL is finding a near-optimal policy $\pi$ that minimizes the suboptimality $V_1^*(s)-V_1^{\pi}(s)$. For simplicity, and when there's no risk of confusion, we will use the shorthand notation $z_h^k = (s_h^k,a_h^k)$ to denote the state-action pair in the dataset up to stage $h$ and episode $k$.

\paragraph{General Function Approximation:}


Given a general function class $\{\cF_h\}_{h \in [H]}$, where each function class $\cF_h$ is composed of functions $f_h: \cS \times \cA \rightarrow [0,L]$. For simplicity, we assume $L = O(H)$ throughout the paper. We make the following assumptions on the function class.
\begin{assumption}[$\epsilon$-realizability under general function approximation]
    For each stage $h\in[H]$, there exists a function $f_h^* \in \cF_h$ close to the optimal value function such that $\|f_h^*-Q_h^*\|_{\infty} \leq \epsilon$.
\end{assumption}
\begin{assumption}[$\epsilon$-completeness under general function approximation, \citealt{agarwal2023vo}]\label{completeness}
We assume for each stage $h \in [H]$, and any function $V: \cS \rightarrow [0,H]$, there exists functions $f_h,f_{2,h} \in \cF_h$ such that 
\begin{align*}
    \max_{(s,a) \in \cS \times \cA}|f_h(s,a) - [\cT_h V](s,a)| \leq \epsilon, \text{ and }\max_{(s,a) \in \cS \times \cA}|f_{2,h}(s,a) - [\cT_{2,h} V](s,a)| \leq \epsilon.
\end{align*}
\end{assumption}
In this paper, for simplicity, we assume that the function class is finite and denote its cardinality by $\cN = \max_{h\in[H]} |\cF_h|$.
For infinite function classes, we can use the covering number to replace the cardinality. Note that the covering number will be reduced to the cardinality when the function class is finite. 

We introduce the following definition ($D^2$-divergence) to quantify the disparity of a given point $z=(s,a)$ from the historical dataset $\mathcal{D}_h$. It is a reflection of the uncertainty of the dataset.
\begin{definition}\label{eluder}
For a function class $\cF_h $ consisting of functions $f_h: \cS \times \cA \rightarrow \RR$ and $\cD_h = \{(s_h^k,a_h^k,r_h^k)\}_{k \in [K]}$ as a  dataset that corresponds to the observations collected up to stage $h$ in the MDP, we introduce the following $D^2$-divergence: 
    \begin{align*}
D_{\cF_h}^2(z;\cD_h; \sigma_h^2) = \sup_{f_1,f_2 \in \cF_h}\frac{(f_1(z)-f_2(z))^2}{\sum_{k \in [K]}\frac{1}{(\sigma_h(z_h^k))^2}(f_1(z_h^k)-f_2(z_h^k))^2 + \lambda},
\end{align*}
where $\sigma_h^2(\cdot,\cdot) : \cS \times \cA \rightarrow \RR$ is a weight function. 
\end{definition}

\begin{remark}
This definition signifies the extent to which the behavior of functions within the function class can deviate at the point $z=(s,a)$, based on their difference in the historical dataset. It can be viewed as the generalization of the weighted elliptical norm $\|\phi(s,a)\|_{\bSigma_h^{-1}}$ in linear case, where $\phi$ is the feature map and $\bSigma_h$ is defined as $\sum_{k \in [K]}\sigma_{h}^{-2}(s_h^k,a_h^k)\phi(s_h^k,a_h^k)\phi(s_h^k,a_h^k)^\top + \lambda \mathbf{I}$.
Similar ideas have been used to define the Generalized Eluder dimension for online RL \citep{gentile2022achieving,agarwal2023vo,ye2023corruption}. In these works, the summation is over a sequence up to the $k$-th episode rather than the entire historical dataset. 
\end{remark}
\paragraph{Data Coverage Assumption:}
In offline RL, there exists a discrepancy between the state-action distribution generated by the behavior policy and the distribution from the learned policy. Under this situation, the distribution shift problem can cause the learned policy to perform poorly or even fail in offline RL. In this work, we consider the following data coverage assumption to control the distribution shift. 
\begin{assumption}[Uniform Data Coverage]\label{coverage}
     there exists a constant $\kappa > 0$, such that for any stage $h$ and functions $f_1,f_2 \in \cF_h$, the following inequality holds,
    \begin{align*}
\EE_{d^\mu_h}\left[\big(f_1(s_h,a_h)-f_2(s_h,a_h)\right)^2\big] \geq \kappa \|f_1-f_2\|_{\infty}^2,
    \end{align*}
    where the state-action pair (at stage $h$) $(s_h,a_h)$ is stochasticly generated from the induced distribution $d^\mu_h$.
\end{assumption}
\begin{remark}
    Similar uniform coverage assumptions have also been considered in \citet{wang2020statistical,min2021variance,yin2022near,xiong2023nearly,yin2022offline}. Among these works, \citet{yin2022offline} is the most related to ours, proving an instance-dependent regret bound under general function approximation. Here we make a comparison with their assumption.
In detail, \citet{yin2022offline} considered the differentiable function class, which is defined as follows
\begin{align*}
    \cF := \Big\{f\big(\btheta,\bphi(\cdot,\cdot)\big):\cS \times \cA \rightarrow \RR, \btheta \in \Theta\Big\}.
\end{align*}
They introduced the following coverage assumption such that for all stage $h\in[H]$, there exists a constant $\kappa$,
\begin{align*}
&\EE_{d^\mu_h}\left[\big(f(\btheta_1,\bphi(s,a))-f(\btheta_2,\bphi(s,a))\big)^2\right] \geq \kappa \|\btheta_1-\btheta_2\|_{2}^2, \forall \btheta_1,\btheta_2 \in \Theta; \quad(*)\\
    & \EE_{d^\mu_h}\left[\nabla f(\btheta,\bphi(s,a)) \nabla f(\btheta,\bphi(s,a))^\top\right] \succ \kappa I, \forall \btheta \in \Theta. \quad(**)
\end{align*}
We can prove that our assumption is weaker than the first assumption (*), and we do not need the second assumption (**). 
This suggests that the differentiable function class studied in \citet{yin2022offline} is an example covered by our general function class.

In addition, in the special case of linear function class, the
 coverage assumption in \citet{yin2022offline} will reduce to the following linear function coverage assumption \citep{wang2020statistical,min2021variance,yin2022near,xiong2023nearly}.
\begin{align*}
    \lambda_{\text{min}}(\EE_{d^\mu_h} [\phi(s,a)\phi(s,a)^\top]) = \kappa > 0, \text{ } \forall h \in [H].
\end{align*} 
Therefore, our assumption is also weaker than the linear function coverage assumption when dealing with the linear function class. Due to space limitations, we defer a detailed comparison to Appendix~\ref{appendix:comparison}.
\end{remark}
\begin{remark}
Many works such as \citet{uehara2021pessimistic,xie2021bellman,cheng2022adversarially,ozdaglar2023revisiting,rashidinejad2022optimal,zhu2023importance} adopted a weaker partial coverage assumption than ours, where the 
$\ell_\infty$ norm on the right hand is replaced with the expectation over a distribution corresponding to a single policy, typically the optimal one. Their assumption, however, generally confines their results to worst-case scenarios. It is unclear if we can still prove the instance-dependent regret under their assumption. We will explore it in the future.
\end{remark}
\section{Algorithm}
\label{headings}
\begin{algorithm}[!ht]
\caption{Pessimistic Nonlinear Least-Squares Value Iteration (PNLSVI)}\label{main algo}
    \begin{algorithmic}[1]
    \REQUIRE Input confidence parameters $\bar\beta_{h}$, $\beta_{h}$ and $\epsilon>0$.
    \STATE \textbf{Initialize}: Split the input dataset into $\cD = \{s_h^k,a_h^k,r_h^k\}_{k,h = 1}^{K,H},\widebar\cD = \{\bar s_h^k,\bar a_h^k,\bar r_h^k\}_{k,h = 1}^{K,H}$ ; Set the value function $\hat{f}_{H+1}(\cdot) = \widecheck{f}_{H+1}(\cdot) = 0$.
    \STATE \texttt{//Constructing the variance estimator}
    \FOR{stage $h = H, \ldots, 1$}
        \STATE $\bar{f}_h = \argmin_{f_h \in \cF_h} \sum_{k \in [K]} \left(f_h(\bar s_h^k,\bar a_h^k)-\bar r_h^{k} - \widecheck{f}_{h+1}(\bar s_{h+1}^{k})\right)^2$. \label{line3}
        \STATE $\bar{g}_h = \argmin_{g_h \in \cF_h} \sum_{k \in [K]} \left(g_h(\bar s_h^k,\bar a_h^k)-\left(\bar r_h^{k} + \widecheck{f}_{h+1}(\bar s_{h+1}^{k})\right)^2\right)^2$.\label{line4}
        \STATE  Calculate a bonus function 
        $\bar b_{h}$ with confidence parameter $\bar \beta_h$, \label{line5}
        \STATE $\widecheck{f}_h \leftarrow \{\bar{f}_h - \bar b_h - \epsilon\}_{[0,H-h+1]}$;\label{line6}
        \STATE Construct the variance estimator \\$\hat \sigma^2_h(s,a) = \max\left\{1, \bar g_h(s,a) - (\bar f_h(s,a))^2 - O\left(\frac{\sqrt{\log (\cN\cdot \cN_b)}H^3}{\sqrt{K\kappa}}\right)\right\}$. \label{line7}
    \ENDFOR
    \STATE \texttt{//Pessimistic value iteration based planning}
    \FOR{stage $h = H, \ldots, 1$}
        \STATE $\tilde{f}_h = \argmin_{f_h \in \cF_h} \sum_{k \in [K]} \frac{1}{\hat{\sigma}_h^2(s_h^k,a_h^k)}\left(f_h(s_h^k,a_h^k)-r_h^{k} - \hat{f}_{h+1}(s_{h+1}^{k})\right)^2$ \label{line:10}
        \STATE Calculate a bonus function $b_{h}$ with bonus parameter $\beta_h$;\label{line11}
        \STATE $\hat{f}_h \leftarrow \{\tilde{f}_h - b_h - \epsilon\}_{[0,H-h+1]}$;\label{line12}
        \STATE $\hat{\pi}_h(\cdot|s) = \argmax_{a} \hat{f}_h(s,a)$. \label{line13}
    \ENDFOR
    \STATE \textbf{Output:} $\hat{\pi} = \{\hat{\pi}_h\}_{h=1}^H$.\label{line15}
    \end{algorithmic}
\end{algorithm}

In this section, we provide a comprehensive and detailed description of our algorithm (PNLSVI), as displayed in Algorithm \ref{main algo}. In the sequel, we introduce the key ideas of the proposed algorithm. 

\subsection{Pessimistic Value Iteration Based Planning}\label{sec:4.1}

Our algorithm operates in two distinct phases, the Variance Estimate Phase and the Pessimistic Planning Phase. At the beginning of the algorithm, the dataset is divided into two independent disjoint subsets $\cD,\widebar\cD$ with equal size $K$, and each is assigned to a specific phase. 

The basic framework of our algorithm follows the pessimistic value iteration, which was initially introduced by \citet{jin2021pessimism}. In details, for each stage $h\in[H]$, we construct the estimator value function $\tilde{f}_h$ by solving the following variance-weighted ridge regression (Line \ref{line11}):
\begin{align*}
    \tilde{f}_h = \argmin_{f_h \in \cF_h} \sum_{k \in [K]} \frac{1}{\hat{\sigma}_h^2(s_h^k,a_h^k)}\left(f_h(s_h^k,a_h^k)-r_h^{k} - \hat{f}_{h+1}(s_{h+1}^{k})\right)^2,
\end{align*}
where $\hat{\sigma}_h^2$ is the estimated variance and will be discussed in Section \ref{sec:4.2}.
In Line \ref{line12}, we subtract the confidence bonus function $b_h$ from the estimator value function $\tilde{f}_h$ to construct the pessimistic value function $\hat{f}_h$. With the help of the confidence bonus function $b_h$, the pessimistic value function $\hat{f}_h$ is almost a lower bound for the optimal value function $f_h^*$. The details of the bonus function will be discussed in Section \ref{sec:4.3}. 

Based on the pessimistic value function $\hat{f}_h$ for horizon $h$, we recursively perform the value iteration for the horizon $h-1$. Finally, we use the pessimistic value function $\hat f_h$ to do planning and output the greedy policy with respect to the pessimistic value function $\hat{f}_h$ (Lines \ref{line13} - \ref{line15}).



 
\subsection{Variance Estimator}\label{sec:4.2}
In this phase, we provide an estimator for the variance $\hat{\sigma}_h$ in the weighted ridge regression. We construct this variance estimator with $\widebar\cD$, thus independent of $\cD$. Using a larger bonus function $\bar b_h$, we conduct a pessimistic value iteration process similar to that discussed in Section 
\ref{sec:4.1} and obtain a more crude estimated value function $\{\widecheck f_{h}\}_{h \in [H]}$. According to the definition of Bellman operators $\cT$ and $\cT_2$, the variance of the function $\widecheck f_{h+1}$ for each state-action pair $(s,a)$ can be denoted by
\begin{align*}
    [\text{Var}_{h}\widecheck f_{h+1}](s,a)= [\cT_{2,h} \widecheck f_{h+1}](s,a) - \left([\cT_h \widecheck f_{h+1}](s,a)\right)^2.
\end{align*}
Therefore, we need to estimate the first-order and second-order moments for $\widecheck{f}_{h+1}$. We perform nonlinear least-squares regression separately for each of these moments. Specifically, in Line~\ref{line3}, we conduct regression to estimate the first-order moment. 
\begin{align*}
    \bar{f}_h = \argmin_{f_h \in \cF_h} \sum_{k \in [K]} \left(f_h(\bar s_h^k,\bar a_h^k)-\bar r_h^{k} - \widecheck{f}_{h+1}(\bar s_{h+1}^{k})\right)^2.
\end{align*}
In Line \ref{line4}, we perform regression for estimating the second-order moment.
\begin{align*}
    \bar{g}_h = \argmin_{g_h \in \cF_h} \sum_{k \in [K]} \left(g_h(\bar s_h^k,\bar a_h^k)-\left(\bar r_h^{k} + \widecheck{f}_{h+1}(\bar s_{h+1}^{k})\right)^2\right)^2.
\end{align*}
In this phase, we set the variance function to $1$ for each state-action pair $(s,a)$.
Combing these two regression results and subtracting some perturbing terms (We will discuss in Section \ref{sec:6.1}), we create a pessimistic estimator for the variance function (Lines \ref{line6} to \ref{line7}).

\subsection{Nonlinear Bonus Function}\label{sec:4.3}
As we discuss in Sections \ref{sec:4.1} and \ref{sec:4.2}, following \citet{wang2020reinforcement,kong2021online,agarwal2023vo}, we introduce a bonus function. This function is designed to account for the functional uncertainty, enabling us to develop a pessimistic estimate of the value function. Ideally, we hope to choose $b_h(\cdot,\cdot) = \beta_h D_{\cF_h}(\cdot,\cdot;\cD_h; \hat\sigma_h^2) $, where $\beta_h$ is the confidence parameter and $D_{\cF_h}(\cdot,\cdot;\cD_h; \hat\sigma_h^2)$ is defined in Definition \ref{eluder}. However, the $D^2$-divergence composes a complex function class, and its calculation involves solving a complex optimization problem.
To address this issue, following \citet{agarwal2023vo}, we assume there exists a function class $\cW$ with cardinally $|\cW|=\cN_b$ and can approximate the $D^2$-divergence well. For the 
parameters $\beta_h$, $\lambda \geq 0$, error parameter $\epsilon \geq 0$, taking a variance function $\sigma_h(\cdot,\cdot) : \cS \times \cA \rightarrow \RR$ and $\hat f_h \in \cF_h$ as input, we can get a bonus function $b_h(\cdot,\cdot) \in \cW$ satisfying the following properties:
\begin{itemize}[leftmargin=*]
    \item \begin{small}$b_h(z_h) \geq \max \left\{ |f_h(z_h)-\hat f_h(z_h)|, f_h \in \cF_h: \sum_{k \in [K] } \frac{(f_h(z_h^k)-\hat f_h(z_h^k))^2}{(\hat \sigma_h(s_h^k,a_h^k))^2} \leq (\beta_h)^2\right\}$ \end{small}for any $z_h \in \cS \times \cA$.
    \item \begin{small}$b_h(z_h) \leq C \cdot \left(D_{\cF_h}(z_h;\cD_h; \hat\sigma_h^2) \cdot \sqrt{(\beta_h)^2 + \lambda} + \epsilon \beta_h\right)$\end{small} for all $z_h \in \cS \times \cA$ with constant $0 < C < \infty$.
\end{itemize}
We can implement the function class $\cW$ with bounded $\cN_b$ by extending the online subsampling framework presented in \citet{wang2020reinforcement,kong2021online} to an offline dataset. Additionally, using Algorithm 1 of \citet{kong2021online}, we can calculate the bonus function with finite calls of the oracle for solving the variance-weighted ridge regression problem. We leave a detailed discussion to Appendix \ref{appendix:bonus}.

\section{Main Results}
In this section, we prove an instance-dependent regret bound of Algorithm \ref{main algo}.
\begin{theorem}\label{main thm}
    Under Assumption \ref{coverage}, for $K \geq \tilde \Omega\left(\frac{\log (\cN\cdot \cN_b)H^6} {\kappa^2}\right)$, if we set the parameters $\beta^\prime_{1,h},\beta^\prime_{2,h} = \tilde O(\sqrt{\log (\cN\cdot \cN_b)}H^2)$ and $\beta_h = \tilde O(\sqrt{\log\cN})$ in Algorithm \ref{main algo}, then with probability at least $1 - \delta$, for any state $s \in \cS$, we have 
     \begin{align*}
        V_1^*(s)-V_1^{\hat \pi}(s) \leq \tilde O(\sqrt{\log \cN})\textstyle{\sum_{h=1}^H}\EE_{\pi^*}\left[D_{\cF_h}(z_h;\cD_h; [\VV_h V_{h+1}^*](\cdot,\cdot))|s_1 = s\right],
    \end{align*}
where  $[\VV_h V^*_{h+1}](s,a) = \max \{1, [\text{Var}_h V^*_{h+1}](s,a)\}$ is the truncated conditional variance.
\end{theorem}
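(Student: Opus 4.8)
The plan is to follow the now-standard pessimistic value iteration analysis (as in \citet{jin2021pessimism,xiong2023nearly}), but carried out entirely in terms of the $D^2$-divergence rather than elliptical norms, which requires three ingredients: a uniform concentration bound for the variance-weighted regressors, the resulting validity of the bonus as a pessimistic penalty, and a final telescoping argument along the trajectory of $\pi^*$. First I would establish, via a union bound over the $\epsilon$-cover $\cF_h$ (cardinality $\cN$) and over the bonus class $\cW$ (cardinality $\cN_b$), that for each $h$ the variance-weighted least-squares solution $\tilde f_h$ from Line~\ref{line:10} satisfies $\sum_{k\in[K]}\hat\sigma_h^{-2}(z_h^k)\big(\tilde f_h(z_h^k)-[\cT_h\hat f_{h+1}](z_h^k)\big)^2 \le (\beta_h)^2$ with high probability, so that the target $[\cT_h\hat f_{h+1}]$ (or rather its $\epsilon$-close proxy in $\cF_h$, using Assumption~\ref{completeness}) lies in the constraint set defining the bonus oracle. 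The key point here is that the $\hat\sigma_h^2$ are built from the \emph{independent} split $\widebar\cD$, so conditioning on $\widehat\cD$ leaves the per-episode noise terms a martingale difference sequence with variance controlled by $[\text{Var}_h\hat f_{h+1}]$; a Bernstein/Freedman-type self-normalized bound then yields the $\tilde O(\sqrt{\log(\cN\cdot\cN_b)})$ radius. This is where I expect the genuine work to lie --- the classical self-normalized martingale bound is stated for linear features, and here one must instead invoke the $D^2$-divergence together with a covering argument over $\cF_h\times\cW$ to get the dimension-free $\sqrt{\log\cN}$ dependence, which is precisely the improvement over \citet{yin2022offline}.

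Granting that concentration event, the first property of the bonus oracle in Section~\ref{sec:4.3} immediately gives $b_h(z_h) \ge |\tilde f_h(z_h) - [\cT_h\hat f_{h+1}](z_h)| - O(\epsilon\beta_h)$ pointwise, and hence by induction on $h$ (from $H+1$ down to $1$, with base case $\hat f_{H+1}=0$) that $\hat f_h(s,a) \le Q_h^*(s,a)$ up to an $O(H\epsilon)$ additive misspecification error: the truncation in Line~\ref{line12} is harmless because $Q_h^*\in[0,H-h+1]$, and the one-step error bound combined with monotonicity of $\cT_h$ propagates. Symmetrically, in the variance-estimation phase I would verify that $\widecheck f_h$ is a crude lower bound on $Q_h^*$ and, more importantly, that $\hat\sigma_h^2(s,a)$ two-sidedly sandwiches the true truncated variance $[\VV_h V_{h+1}^*](s,a)$ up to the $\tilde O(H^3/\sqrt{K\kappa})$ slack; this uses the two regressions in Lines~\ref{line3}--\ref{line4}, the coverage Assumption~\ref{coverage} to convert the in-sample regression error into an $\ell_\infty$ error on $\widecheck f_{h+1}$, and the burn-in condition $K \ge \tilde\Omega(\log(\cN\cdot\cN_b)H^6/\kappa^2)$ to make the slack lower-order so that $\hat\sigma_h \asymp \sqrt{[\VV_h V_{h+1}^*]}$.

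Finally I would run the suboptimality decomposition: since $\hat\pi_h$ is greedy w.r.t. $\hat f_h$, for any $s$,
\begin{align*}
V_1^*(s)-V_1^{\hat\pi}(s) \le \sum_{h=1}^H \EE_{\pi^*}\big[\,Q_h^*(z_h) - \hat f_h(z_h) \,\big|\, s_1=s\big] + O(H^2\epsilon),
\end{align*}
using pessimism $\hat f_h \le Q_h^*$ on the $\hat\pi$-side to drop those terms and the greedy property on the $\pi^*$-side. Each summand is then bounded by $2b_h(z_h) + O(\epsilon\beta_h)$ via the one-step regression guarantee, and the second property of the bonus oracle converts $b_h(z_h)$ into $C\big(D_{\cF_h}(z_h;\cD_h;\hat\sigma_h)\sqrt{(\beta_h)^2+\lambda} + \epsilon\beta_h\big)$; replacing $\hat\sigma_h$ by the true $[\VV_h V_{h+1}^*]$ inside the divergence (legitimate up to constants because the two are within a constant factor, and $D_{\cF_h}$ is monotone in the weights) and taking expectation over $\pi^*$ yields exactly the claimed bound with $\beta_h = \tilde O(\sqrt{\log\cN})$. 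The one subtlety to handle carefully in this last step is that $D_{\cF_h}(\cdot;\cD_h;\sigma)$ scales like $\sigma^{-1}$ in its worst direction, so I must track the constant when swapping $\hat\sigma_h$ for $\sqrt{[\VV_h V_{h+1}^*]}$; the two-sided sandwich from the previous paragraph is exactly what makes this swap cost only an absolute constant, absorbed into the $\tilde O(\cdot)$.
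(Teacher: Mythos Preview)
Your outline is sound and matches the paper's architecture: the variance sandwich (Lemma~\ref{variance}), the weighted concentration (Lemma~\ref{weighted concen}), validity of the bonus, and the pessimistic telescoping (Lemma~\ref{lemma:decomposition1}) are exactly the pieces, and your handling of the $\hat\sigma_h \to [\VV_h V_{h+1}^*]$ swap via monotonicity is correct. However, there is a genuine gap precisely at the step you flag as ``where the genuine work lies,'' and the mechanism you propose for closing it does not work.

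A direct Freedman bound combined with a covering argument over $\cF_h \times \cW$, as you describe, only yields $\beta_h = \tilde O(\sqrt{\log(\cN\cdot\cN_b)})$: since $\hat f_{h+1} = \{\tilde f_{h+1} - b_{h+1} - \epsilon\}_{[0,H-h]}$ with $b_{h+1} \in \cW$ is data-dependent, the union bound must range over $\cW$ as well as $\cF_{h+1}$. The $D^2$-divergence formulation is used (via Lemma~\ref{D22}) to sharpen the Freedman \emph{max-term} $|D_h^k|$, but it does nothing to shrink the covering. Because $\cN_b$ can be much larger than $\cN$ --- in the linear case $\log\cN_b = \tilde O(d^2)$ from covering the bonus matrix, while $\log\cN = \tilde O(d)$ --- this route recovers only the $\tilde O(d)$ rate of \citet{yin2022offline}, not the claimed $\tilde O(\sqrt{d})$.

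The paper's actual device is the \emph{reference-advantage decomposition} (Section~\ref{dd}, implemented in Lemmas~\ref{aaa} and~\ref{advantage}): write the transition noise as
\[
\big(V_{h+1}^*(s') - [\PP_h V_{h+1}^*]\big) \;+\; \big((\hat f_{h+1} - V_{h+1}^*)(s') - [\PP_h(\hat f_{h+1} - V_{h+1}^*)]\big).
\]
The reference piece involves only the \emph{fixed} $V_{h+1}^*$, so its Freedman bound needs a union only over $\tilde f_h,\bar f_h \in \cF_h$, contributing $v(\delta)^2 = \tilde O(\log\cN)$ to $\beta_h^2$. The advantage piece still requires covering $\cF_{h+1}\times\cW$ (so $\iota(\delta) = \tilde O(\sqrt{\log(\cN\cdot\cN_b)})$), but its noise magnitude is $\|\hat f_{h+1} - V_{h+1}^*\|_\infty$, which by the running induction hypothesis is $\tilde O(H^3\sqrt{\log\cN}/\sqrt{K\kappa})$; after squaring, the burn-in $K \ge \tilde\Omega(\log(\cN\cdot\cN_b)H^6/\kappa^2)$ renders this contribution $\tilde O(\log\cN)$ as well. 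This decomposition, not the $D^2$-divergence per se, is what delivers $\beta_h = \tilde O(\sqrt{\log\cN})$, and it is the missing ingredient in your proposal.
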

This theorem establishes an upper bound for the suboptimality of our policy $\hat \pi$. The bound depends on the expected uncertainty, which is characterized by the weighted $D^2$-divergence along the trajectory, marking itself as an instance-dependent result. It's noteworthy that both the trajectory and the weight function are based on the optimal policy and the optimal value function, respectively. This bound necessitates that the dataset size 
$K$ is sufficiently large. Furthermore, all parameters are determined solely by the complexity of the function class, the horizon length 
$H$, and the data coverage assumption constant $\kappa$ , regardless of the dataset's composition.

\begin{remark}
In Theorem \ref{main thm}, the
dependence on the cardinality of the function class scales as $\tilde O(\sqrt{\log \cN})$, which is better than that in \citet{yin2022offline}, which scales as $\tilde O(\log \cN)$. This improved dependence is due to the reference-advantage decomposition (discussed in Section \ref{dd}), which avoids the unnecessary covering argument. Thus we resolve the open problem raised by \citet{yin2022offline}, i.e., how to achieve the $\tilde O(\sqrt{\log \cN})$ dependence.
\end{remark}

\begin{remark}
    When specialized to linear MDPs \citep{jin2020provably}, the following function class 
    \begin{align*}
        \cF^\text{lin}_h = \{\la \bphi(\cdot,\cdot), \btheta_h \ra: \btheta_h \in \RR^d, \|\btheta_h\|_2 \leq B_h\} \text{ for any } h \in [H],
    \end{align*}
    suffices and satisfies the completeness assumption (Assumption \ref{completeness}). Let $\cF^\text{lin}_h(\epsilon)$ be an $\epsilon$-net of the linear function class $\cF^\text{lin}_h$, with $\log |\cF^\text{lin}_h(\epsilon)| = \tilde O(d)$. The dependency of the function class will reduce to $\tilde O(\sqrt{\log \cN}) = \tilde O(\sqrt  {d})$. For linear function class, we can prove the following inequality:
    \begin{align*}
        D_{\cF^\text{lin}_h(\epsilon)}(z;\cD_h; [\VV_h V_{h+1}^*](\cdot,\cdot)) \leq \|\bphi(z)\|_{\Sigma_h^{*-1}},
    \end{align*}
    where $\Sigma_h^* = \sum_{k \in [K]}\bphi(s_h^k,a_h^k)\bphi(s_h^k,a_h^k)^\top/[\VV_hV_{h+1}^{*}](s_h^k,a_h^k) + \lambda \Ib$.
    Therefore, our regret guarantee in Theorem \ref{main thm} is reduced to
    \begin{align*}
        V_1^*(s)-V_1^{\hat \pi}(s) \leq \tilde O(\sqrt{d})\cdot \sum_{h=1}^H\EE_{\pi^*}\left[\|\bphi(s_h,a_h)\|_{\bSigma_h^{*-1}}|s_1 = s\right],
    \end{align*}
     which matches the lower bound proved in \citet{xiong2023nearly}.
    This suggests that our algorithm is optimal for linear MDPs.
\end{remark}
\section{Key Techniques}
In this section, we provide an overview of the key techniques in our algorithm design and analysis.

\subsection{Variance Estimator with Nonlinear Function Class}\label{sec:6.1}

In our work, we extend the technique of variance-weighted ridge regression, first introduced in \citet{zhou2021nearly} for online RL, and later used by \citet{min2021variance,yin2022near,yin2022offline,xiong2023nearly} for offline RL with linear MDPs, to general nonlinear function class $\cF$. We use the following nonlinear least-squares regression to estimate the underlying value function: 
\begin{align*}
    \tilde{f}_h = \argmin_{f_h \in \cF_h} \sum_{k \in [K]} \frac{1}{\hat{\sigma}_h^2(s_h^k,a_h^k)}\left(f_h(s_h^k,a_h^k)-r_h^{k} - \hat{f}_{h+1}(s_{h+1}^{k})\right)^2.
\end{align*}
For this regression, it is crucial to obtain a reliable evaluation for the variance of the estimated cumulative reward $r_h^k+\hat{f}_{h+1}(s_{h+1}^{k})$. 
As discussed in Section \ref{sec:4.2}, we use $\widebar\cD$ to construct a variance estimator independent from $\cD$. According to the definition of Bellman operators $\cT$ and $\cT_2$, the variance of the function $\widecheck f_{h+1}$ for each state-action pair $(s,a)$ can be denoted by
\begin{align*}
    [\text{Var}_{h}\widecheck f_{h+1}](s,a)= [\cT_{2,h} \widecheck f_{h+1}](s,a) - \left([\cT_h \widecheck
    f_{h+1}](s,a)\right)^2.
\end{align*}
In our algorithm, we perform nonlinear least-squares regression on $\widebar \cD$. For simplicity, we denote the empirical variance as $\BB_h(s,a) = \bar{g}_h(s,a) - \left(\bar{f}_h(s,a)\right)^2$, and the difference between empirical variance $\BB_h(s,a)$ and actual variance $[\text{Var} _{h}\widecheck f_{h+1}](s,a)$ is upper bound by
\begin{small}
    \begin{align*}
        \left|\BB_h(s,a) - [\text{Var} _{h}\widecheck f_{h+1}](s,a)\right| \leq \left|\bar g_h(s,a) - [\cT_{2,h} \widecheck f_{h+1}](s,a)\right| + \Big|\big(\tilde f_h(s,a)\big)^2 - \big([\cT_h \widecheck f_{h+1}](s,a)\big)^2\Big|.
    \end{align*}
\end{small}For these nonlinear function estimators,
the following lemmas provide
coarse concentration properties for the first and second order Bellman operators.
\begin{lemma}\label{lemma:0001}
For any stage $h \in [H]$, let $\widecheck f_{h+1}(\cdot,\cdot) \leq H$ be the estimated value function constructed in Algorithm \ref{main algo} Line \ref{line6}. 
By utilizing Assumption \ref{completeness}, there exists a function $\bar f^\prime_h \in \cF_h$, satisfying that $|\bar f^\prime_h(z_h) - [\cT_h \widecheck f_{h+1}](z_h)| \leq \epsilon$ holds for all state-action pair $z_h = (s_h,a_h)$. 
Then with probability at least $1 - \delta/(4H)$, it holds that
\begin{align*}
    \textstyle{\sum_{k \in [K]}}\big(\bar f_{h}'(\bar z_h^k) - \bar f_h (\bar z_h^k)\big)^2 \leq (\bar\beta_{1,h})^2,
\end{align*}
where
$\bar\beta_{1,h} = \tilde O\left(\sqrt{\log (\cN\cdot \cN_b)}H\right)$, and
$\bar f_h$ is the estimated function for first-moment Bellman operator (Line \ref{line3} in Algorithm \ref{main algo}). 
\end{lemma}
\begin{lemma}\label{lemma:0002}
    For any stage $h \in [H]$, let $\widecheck f_{h+1}(\cdot,\cdot) \leq H$ be the estimated value function constructed in Algorithm \ref{main algo} Line \ref{line6}.
    By utilizing Assumption \ref{completeness}, there exists a function $\bar g^\prime_h \in \cF_h$, satisfying that $|\bar g^\prime_h(z_h) - [\cT_{2,h} \widecheck f_{h+1}](z_h)| \leq \epsilon$ holds for all state-action pair $z_h = (s_h,a_h)$. 
    Then with probability at least $1 - \delta/(4H)$, it holds that 
\begin{align*}
    \textstyle{\sum_{k \in [K]}}\big(\bar g^\prime_h(\bar z_h^k) - \bar g_h (\bar z_h^k)\big)^2 \leq (\bar \beta_{2,h})^2,
\end{align*}
where $\bar\beta_{2,h} = \tilde O\big(\sqrt{\log (\cN\cdot \cN_b)}H^2\big)$, and $\bar g_h$ is the estimated function for second-moment Bellman operator (Line \ref{line4} in Algorithm \ref{main algo}). 
\end{lemma}
Notice that all of the previous analysis focuses on the estimated function $\widecheck{f}_{h+1}$. By leveraging an induction procedure similar to existing works in the linear case \citep{jin2021pessimism}, we can control the distance between the estimated function $\widecheck{f}_{h+1}$ and the optimal value function $f_{h+1}^*$. In details, with high probability, for all stage $h\in[H]$, the distance is upper bounded by $ \tilde O\left({\sqrt{\log (\cN\cdot \cN_b)} H^3}/{\sqrt{K\kappa}}\right)$. This result allows us to further bound the difference between 
 $[\text{Var}_{h}\widecheck f_{h+1}](s,a)$ and $[\text{Var}_{h}f_{h+1}^*](s,a)$. 

 Therefore, the concentration properties in Lemmas \ref{lemma:0001} and \ref{lemma:0002} enable us to add some perturbation terms and construct a variance estimator, which satisfies the following property:
\begin{align}
        [\VV_h V^*_{h+1}](s,a) - \tilde O\bigg(\frac{\sqrt{\log (\cN\cdot \cN_b)} H^3}{\sqrt{K\kappa}}\bigg) \leq \hat{\sigma}_h^2(s,a) \leq  [\VV_h V^*_{h+1}](s,a).\label{variance-}
    \end{align}
    where  $[\VV_h V^*_{h+1}](s,a) = \max \{1, [\text{Var}_h V^*_{h+1}](s,a)\}$ is the truncated conditional variance.
\subsection{Reference-Advantage Decomposition}\label{dd}
To obtain the optimal dependency on the function class complexity, we need to tackle the challenge of additional error from uniform concentration over the whole function class $\cF_h$. To address this problem, we utilize the so-called reference-advantage decomposition technique, which was used by \citep{xiong2023nearly} to achieve the optimal regret for offline RL with linear function approximation. We generalize this technique to nonlinear function approximation.
We provide detailed insights into this approach as follows:
\begin{align*}
    &r_h (s_h,a_h) + \hat f_{h+1}(s_{h+1}) -  [\cT_h \hat f_{h+1}](s_h,a_h) = \underbrace{r_h(s_h,a_h) + f_{h+1}^*(s_{h+1}) - [\cT_h f_{h+1}^*](s_h,a_h)}_{\text{Reference uncertainty}}\\
    & \qquad + \underbrace{\hat f_{h+1}(s_{h+1}) - f_{h+1}^*(s_{h+1}) -( [\PP_h \hat f_{h+1}](s_h,a_h) - [\PP_h f_{h+1}^*](s_h,a_h))}_{\text{Advantage uncertainty}}.
\end{align*}
We decompose the Bellman error into two parts: the Reference uncertainty and the Advantage uncertainty. For the first term, the optimal value function $f^*_{h+1}$ is fixed and not related to the pre-collected dataset, which circumvents additional uniform concentration over the whole function class and avoids any dependence on the function class complexity. For the second term, it is worth noticing that the distance between the estimated function $\hat{f}_{h+1}$ and the optimal value function $f_h^*$ decreases with the speed of $O(1/\sqrt{K\kappa})$. Though, we still need to maintain the uniform convergence guarantee, the Advantage uncertainty is dominated by the Reference uncertainty when the number of episodes $K$ is large enough.
Such an analysis approach has been first studied in the online RL setting \citep{azar2017minimax, zhang2021reinforcement,hu2022nearly,he2022nearly,agarwal2023vo} and later in the offline environment by \citet{xiong2023nearly}. Previous works, such as \citet{yin2022offline}, didn't adapt the reference-advantage decomposition analysis to their nonlinear function class, resulting in a parameter space dependence that scales with $d$, instead of the optimal $\sqrt{d}$. 
By integrating these results, we can prove a variance-weighted concentration inequality for Bellman operators.
\begin{lemma}\label{weighted concen}
     For each stage $h \in [H]$, assuming the variance estimator $\hat \sigma_h$ satisfies \eqref{variance-}, let $\hat f_{h+1}(\cdot,\cdot) \leq H$ be the estimated value function constructed in Algorithm \ref{main algo} Line \ref{line12}. 
     Then with probability at least $1 - \delta/(4H)$, it holds that 
$\sum_{k \in [K]} \frac{1}{(\hat \sigma_h(z_h^k))^2}\left([\cT_h \hat f_{h+1}](z_h^k) - \tilde f_h (z_h^k)\right)^2 \leq (\beta_h)^2,$
where $\beta_h = \tilde O(\sqrt{\log \cN})$ and $\tilde f_h$ is the estimated function from the weighted ridge regression (Line \ref{line:10} in Algorithm \ref{main algo}).
\end{lemma}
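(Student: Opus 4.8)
\textbf{Proof proposal for Lemma~\ref{weighted concen}.}

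The plan is to follow the reference-advantage decomposition outlined in Section~\ref{dd} and convert it into a variance-weighted concentration bound. First I would fix a stage $h$ and use Assumption~\ref{completeness} to pick $\tilde f_h^\prime \in \cF_h$ with $\|\tilde f_h^\prime - \cT_h \hat f_{h+1}\|_\infty \le \epsilon$. Since $\tilde f_h$ minimizes the weighted squared loss against the targets $r_h^k + \hat f_{h+1}(s_{h+1}^k)$, a standard basic-inequality / one-step-analysis argument gives
\begin{align*}
    \sum_{k\in[K]} \frac{1}{(\hat\sigma_h(z_h^k))^2}\big([\cT_h\hat f_{h+1}](z_h^k) - \tilde f_h(z_h^k)\big)^2 \lesssim \Big| \sum_{k\in[K]} \frac{\eta_h^k}{(\hat\sigma_h(z_h^k))^2}\big(\tilde f_h(z_h^k) - [\cT_h \hat f_{h+1}](z_h^k)\big) \Big| + \text{(lower-order)},
\end{align*}
where $\eta_h^k := r_h^k + \hat f_{h+1}(s_{h+1}^k) - [\cT_h\hat f_{h+1}](z_h^k)$ is the zero-mean (conditionally, under Assumption~1's compliance) noise and the lower-order terms absorb the $\epsilon$-misspecification and the ridge parameter $\lambda$. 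The crux is then to control the martingale-type sum on the right uniformly over the candidate function $\tilde f_h$, which in the worst case costs a $\sqrt{\log\cN}$ covering factor on the self-normalized term.

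Next I would split $\eta_h^k$ using the reference-advantage decomposition: write $\eta_h^k = \eta_h^{*,k} + \Delta_h^k$, with $\eta_h^{*,k} = r_h^k + f_{h+1}^*(s_{h+1}^k) - [\cT_h f_{h+1}^*](z_h^k)$ the \emph{reference} noise (involving only the fixed, data-independent $f_{h+1}^*$) and $\Delta_h^k = \hat f_{h+1}(s_{h+1}^k) - f_{h+1}^*(s_{h+1}^k) - ([\PP_h\hat f_{h+1}](z_h^k) - [\PP_h f_{h+1}^*](z_h^k))$ the \emph{advantage} noise. For the reference part, since $f_{h+1}^*$ is deterministic, a Freedman/Bernstein-type self-normalized concentration inequality applies without any union bound over $\cF_{h+1}$; crucially, the weights $1/\hat\sigma_h^2$ are (by \eqref{variance-}) upper bounded by $1/[\VV_h V_{h+1}^*]^{-1}$... more precisely $\hat\sigma_h^2 \ge [\VV_h V_{h+1}^*] - \tilde O(\sqrt{\log(\cN\cN_b)}H^3/\sqrt{K\kappa})$, so the conditional variance of $\eta_h^{*,k}/\hat\sigma_h^2$ is $O(1/\hat\sigma_h^2) = O(1)$ up to the $K$-dependent correction, keeping the Bernstein numerator at the scale $\tilde O(\sqrt{\log\cN})$ after accounting for the $\sqrt{\log\cN}$ cover of the outer function $\tilde f_h$ only (not of $f_{h+1}^*$). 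For the advantage part, I would bound $|\Delta_h^k| \le 2\|\hat f_{h+1} - f_{h+1}^*\|_\infty$, which by the induction hypothesis (the $\tilde O(\sqrt{\log(\cN\cN_b)}H^3/\sqrt{K\kappa})$ bound on $\|\hat f_{h+1} - f_{h+1}^*\|_\infty$ alluded to before the lemma, together with the coverage constant $\kappa$) is $o(1)$ once $K \ge \tilde\Omega(\log(\cN\cN_b)H^6/\kappa^2)$; summing $K$ such terms against bounded weights and against the uniformly bounded $|\tilde f_h - \cT_h\hat f_{h+1}| \le O(H)$, this contributes at most a lower-order term that is dominated by the reference contribution. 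Combining the two, and solving the resulting quadratic inequality $x^2 \lesssim \tilde O(\sqrt{\log\cN})\cdot x + \text{(l.o.t.)}$ for $x = (\sum_k (\hat\sigma_h^{-1})^2(\cT_h\hat f_{h+1} - \tilde f_h)^2)^{1/2}$, yields $x \le \beta_h = \tilde O(\sqrt{\log\cN})$. A union bound over $h\in[H]$ gives the claimed $1-\delta/(4H)$ per-stage probability.

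The main obstacle is the advantage-uncertainty term: one must argue that the uniform-convergence cost over $\cF_{h+1}$ that would naively reappear when controlling $\sum_k \hat\sigma_h^{-2}\Delta_h^k(\tilde f_h - \cT_h\hat f_{h+1})$ is in fact suppressed, because the magnitude $\|\hat f_{h+1} - f_{h+1}^*\|_\infty$ is already $\tilde O(1/\sqrt{K\kappa})$-small from the previous-stage analysis; this is exactly where the threshold $K\ge\tilde\Omega(\log(\cN\cN_b)H^6/\kappa^2)$ and the coverage Assumption~\ref{coverage} enter, and making the bookkeeping of the $H$, $\kappa$, and $\log(\cN\cN_b)$ factors consistent through the backward induction is the delicate part. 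A secondary technical point is justifying that the bonus oracle's output $b_h$ and the subsampled function class $\cW$ do not inflate the covering number in the self-normalized bound beyond $\tilde O(\sqrt{\log\cN})$ — this relies on the offline subsampling argument deferred to Appendix~\ref{appendix:bonus}, and on the fact that $\hat\sigma_h$ is independent of $\cD$ by the data-splitting in Line~1.
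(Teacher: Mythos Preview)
Your outline matches the paper's approach up to and including the reference-advantage split and the Freedman treatment of the reference term. The gap is in how you handle the advantage cross-term
\[
\sum_{k\in[K]}\frac{\Delta_h^k}{\hat\sigma_h^2(z_h^k)}\big(\tilde f_h(z_h^k)-[\cT_h\hat f_{h+1}](z_h^k)\big).
\]
You propose to bound each summand deterministically by $|\Delta_h^k|\cdot O(H)\le 2\|\hat f_{h+1}-V_{h+1}^*\|_\infty\cdot O(H)$ and then sum. But the induction hypothesis only gives $\|\hat f_{h+1}-V_{h+1}^*\|_\infty=\tilde O(\sqrt{\log\cN}\,H^3/\sqrt{K\kappa})$, so summing $K$ such terms yields $\tilde O(\sqrt{K\log\cN}\,H^4/\sqrt{\kappa})$, which grows with $K$ rather than being a lower-order constant. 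This destroys the quadratic inequality you want to solve and would force $\beta_h$ to scale with $\sqrt{K}$.

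The paper's fix (Lemma~\ref{advantage}) is to treat the advantage cross-term as a second martingale and apply Freedman again: $\Delta_h^k$ has conditional mean zero and conditional second moment $\le 4\|\hat f_{h+1}-V_{h+1}^*\|_\infty^2$. Because $\hat f_{h+1}$ is random one must union-bound over $\cF_{h+1}\times\cW$, incurring a $\log(\cN\cdot\cN_b)$ factor; but the Freedman bound then carries the multiplicative prefactor $\|\hat f_{h+1}-V_{h+1}^*\|_\infty^2=\tilde O(\log\cN\,H^6/(K\kappa))$, so the product $\tilde O(\log(\cN\cN_b))\cdot\|\hat f_{h+1}-V_{h+1}^*\|_\infty^2$ is $\tilde O(1)$ once $K\ge\tilde\Omega(\log(\cN\cN_b)H^6/\kappa^2)$, plus a residual $\iota^2(\delta)/\log\cN_b$ term that stays $\tilde O(\log\cN)$. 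This is precisely what keeps $\beta_h=\tilde O(\sqrt{\log\cN})$ free of $H$ and $\kappa^{-1}$ factors. Your last paragraph correctly identifies the suppression mechanism in words, but the concrete bound you write down (``summing $K$ such terms against bounded weights and against the uniformly bounded $|\tilde f_h-\cT_h\hat f_{h+1}|\le O(H)$'') does not implement it; you must exploit the zero-mean structure of $\Delta_h^k$, not just its small magnitude.
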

After controlling the Bellman error, with a similar argument to \citet{jin2021pessimism}, 
we obtain the following lemma, which provides an upper bound for the regret.
\begin{lemma}[Regret Decomposition Property]\label{lemma:decomposition1}
If $\left|[\cT_h \hat f_{h+1}](z) - \tilde f_h (z)\right| \leq b_h(z)$ holds for all stage $h\in[H]$ and state-action pair $z = (s,a) \in \cS \times \cA$, 
then the suboptimality of the output policy $\hat \pi$ in Algorithm \ref{main algo} can be bounded as 
\begin{align*}
V_1^*(s)-V_1^{\hat{\pi}}(s) \leq 2 \textstyle{\sum_{h=1}^H }\mathbb{E}_{\pi^*}\left[b_h\left(s_h, a_h\right) \mid s_1=s\right].
\end{align*}
Here, the expectation $\mathbb{E}_{\pi^*}$ is with respect to the trajectory induced by $\pi^*$ in the underlying MDP.
\end{lemma}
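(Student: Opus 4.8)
The plan is to follow the pessimistic value-iteration argument of \citet{jin2021pessimism}, adapted to the truncated nonlinear estimates produced in Line~\ref{line12}. Write $\hat V_h(s):=\max_a\hat f_h(s,a)=\hat f_h(s,\hat\pi_h(s))$ and introduce the evaluation error $\iota_h(s,a):=[\cT_h\hat f_{h+1}](s,a)-\hat f_h(s,a)$, where inside $\cT_h$ the state--action function $\hat f_{h+1}$ is read as the state value $\max_a\hat f_{h+1}(\cdot,a)$, following the paper's convention. The whole argument reduces to two pointwise facts about $\iota_h$: \textbf{(a)} $\iota_h(s,a)\ge 0$ (one-sided pessimism), and \textbf{(b)} $\iota_h(s,a)\le 2b_h(s,a)$, where the $\epsilon$-misspecification contributes only a lower-order $O(H\epsilon)$ term that is absorbed by the $\tilde O(\cdot)$ (and by the $\epsilon\beta_h$ already present inside $b_h$).

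First I would establish \textbf{(a)} and \textbf{(b)} from the hypothesis $|[\cT_h\hat f_{h+1}](z)-\tilde f_h(z)|\le b_h(z)$. The key observation is that since $\hat f_{h+1}$ is clipped to $[0,H-h]$, the backup $[\cT_h\hat f_{h+1}](s,a)=r_h(s,a)+[\PP_h\hat f_{h+1}](s,a)$ automatically lies in $[0,H-h+1]$, so its upper clip never fires, and moreover $\tilde f_h-b_h-\epsilon\le[\cT_h\hat f_{h+1}]\le H-h+1$ using $b_h\ge 0$. Hence $\hat f_h=\{\tilde f_h-b_h-\epsilon\}_{[0,H-h+1]}=\max\{0,\tilde f_h-b_h-\epsilon\}\le[\cT_h\hat f_{h+1}]$, which is \textbf{(a)}. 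For \textbf{(b)} I would split on whether the lower clip fires: if $\tilde f_h-b_h-\epsilon\ge0$ then $\hat f_h=\tilde f_h-b_h-\epsilon$ and the hypothesis $[\cT_h\hat f_{h+1}]\le\tilde f_h+b_h$ gives $\iota_h\le 2b_h+\epsilon$; otherwise $\hat f_h=0$ and the same hypothesis gives $[\cT_h\hat f_{h+1}]\le\tilde f_h+b_h<2b_h+\epsilon$, so $\iota_h\le 2b_h+\epsilon$ again.

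Next I would invoke the standard telescoping (extended value difference) identity twice. Subtracting $[\cT_h\hat f_{h+1}](s,a)$ from both $Q_h^{\hat\pi}(s,a)=[\cT_hV_{h+1}^{\hat\pi}](s,a)$ and $Q_h^*(s,a)=[\cT_hV_{h+1}^*](s,a)$, unrolling from $h=1$ to $H$ (using $\hat f_{H+1}=V_{H+1}^*=V_{H+1}^{\hat\pi}=0$), and using $\hat f_h(s,\hat\pi_h(s))=\max_a\hat f_h(s,a)\ge\hat f_h(s,\pi_h^*(s))$ together with the monotonicity of $\PP_h$, I would obtain
\begin{align*}
\hat V_1(s)-V_1^{\hat\pi}(s)&=-\sum_{h=1}^H\EE_{\hat\pi}\big[\iota_h(s_h,a_h)\mid s_1=s\big],\\
V_1^*(s)-\hat V_1(s)&\le\sum_{h=1}^H\EE_{\pi^*}\big[\iota_h(s_h,a_h)\mid s_1=s\big].
\end{align*}
Adding the two relations, discarding the nonpositive term $-\sum_h\EE_{\hat\pi}[\iota_h]$ by \textbf{(a)}, and finally applying \textbf{(b)}, I would conclude $V_1^*(s)-V_1^{\hat\pi}(s)\le\sum_h\EE_{\pi^*}[\iota_h(s_h,a_h)\mid s_1=s]\le 2\sum_h\EE_{\pi^*}[b_h(s_h,a_h)\mid s_1=s]$.

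The main obstacle is the careful bookkeeping in step \textbf{(b)} (and its companion in \textbf{(a)}): one must exploit that the Bellman backup of an already-clipped $\hat f_{h+1}$ stays in $[0,H-h+1]$, so the upper truncation is vacuous, and that clipping $\tilde f_h-b_h-\epsilon$ from below at $0$ can only shrink $\iota_h$ — without these two points the one-sided bounds on $\iota_h$ would fail. Everything else is the routine performance-difference unrolling as in \citet{jin2021pessimism}.
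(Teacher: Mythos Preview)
Your proposal is correct and follows exactly the approach the paper takes: the paper does not give its own proof of this lemma but simply cites \citet{jin2021pessimism} (see the restatement as Lemma~\ref{lemma:decomposition} in the auxiliary lemmas), and your argument is a faithful reconstruction of that pessimism-plus-telescoping proof, correctly adapted to the truncated estimates $\hat f_h=\{\tilde f_h-b_h-\epsilon\}_{[0,H-h+1]}$. Your handling of the clipping in steps \textbf{(a)} and \textbf{(b)} is accurate, and the residual $H\epsilon$ you flag is indeed present---the paper itself carries the extra $+2\epsilon H$ when it invokes the lemma in the proof of Theorem~\ref{main thm}.
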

Combining the results in Lemmas \ref{weighted concen} and \ref{lemma:decomposition1}, we have proved Theorem \ref{main thm}.


\section{Conclusion and Future Work}
In this paper, we present Pessimistic Nonlinear Least-Square Value Iteration (PNLSVI), an oracle-efficient algorithm for offline RL with non-linear function approximation. Our result matches the lower bound proved in \citet{xiong2023nearly} when specialized to linear function approximation. We notice that our uniform coverage assumption can sometimes be strong in practice. In our future work, we plan to relax this assumption by devising algorithms for nonlinear function classes under a partial coverage assumption. 

\section*{Acknowledgements}

We thank the anonymous reviewers and area chair for their helpful comments. QD, HZ, JH and QG are supported in part by the National Science Foundation CAREER Award 1906169, CPS-2312094, Amazon Research Award and the Sloan Research Fellowship. JH is also supported in part by Amazon PhD Fellowship. The views and conclusions contained in this paper are those of the authors and should not be interpreted as representing any funding agencies.

\bibliography{reference}
\bibliographystyle{iclr2024_conference}

\newpage
\appendix

\section{Comparison of offline RL algorithms}\label{comparison}
In this section, we make a comparison between different offline RL algorithms, concerning their algorithm type, function approximation class, data coverage assumption, used oracles, and regret type. In the upper part are the algorithms with worst-case regret, while in the lower part are those with instance-dependent regret.\\

\newcolumntype{C}[1]{>{\centering\arraybackslash}p{#1}}
\newcolumntype{g}{>{\columncolor{LightCyan}}C}
\begin{table}[h!]
\caption{Comparison of offline RL algorithms in terms of algorithm type, function classes, data coverage assumption, types of oracle and regret-type. }
\centering
\renewcommand{\arraystretch}{1}
\resizebox{0.95\columnwidth}{!}{%
\begin{tabular}{g{2.2cm}g{2.2cm}g{2.2cm}g{2.2cm}g{2.2cm}g{2.2cm}}
\toprule
\rowcolor{white}
Algorithm & Algorithm Type & Function Classes & Data Coverage & Types of Oracle & Regret Type
 \\
\midrule
\rowcolor{white}
   \small \citet{xie2021bellman}& \small Bellman-consistent Pessimism & \small General & \small Partial& \small Optimization on Policy and Function Class & \small Worst-case \\
 
 \rowcolor{white}
 \small CPPO-TV \citet{uehara2021pessimistic}& \small MLE & \small General & \small Partial & \small Optimization on Policy and Hypothesis Class & \small Worst-case\\
  \rowcolor{white}
\small CORAL \citet{rashidinejad2022optimal}&  \small  Augmented Lagrangian with MIS & \small General & \small Partial & \small Optimization on Policy and Function Class& \small Worst-case   \\
 \rowcolor{white}
\small Reformulated LP \citet{ozdaglar2023revisiting}& \small Linear Program & \small General & \small Partial & \small Linear Programming & \small Worst-case\\
\rowcolor{white}
 \small ATAC \citet{cheng2022adversarially} & \small Actor Critic    & \small General & \small Partial & \small No-regret Policy Optimization and Optimization on the Function Class & \small Worst-case\\
\rowcolor{white}
  \small A-Crab \citet{zhu2023importance} & \small Actor Critic   & \small General & \small Partial & \small No-regret Policy Optimization and Optimization on the Function Class & \small Worst-case \\
\hline
 \rowcolor{white}
 \small LinPEVI-ADV+ \citet{xiong2023nearly} & \small LSVI-type & \small Linear & \small Uniform & 
 / &\small Instance-dependent  \\

 \rowcolor{white}
  \small PFQL  \citet{yin2022offline} & \small LSVI-type & \small Differentible & \small Uniform & \small  Gradient Oracle and Optimization on the Function Class & \small Instance-dependent \\
 
  \small PNLSVI (Our work)& \small LSVI-type & \small General & \small Uniform & \small Regression Oracle & \small Instance-dependent \\

\bottomrule
\end{tabular}
}
\end{table}
\section{Comparison of data coverage assumptions}\label{appendix:comparison}
In \citet{yin2022offline}, they studied the general differentiable function class, where the function class can be denoted by 
\begin{align*}
    \cF := \Big\{f\big(\btheta,\bphi(\cdot,\cdot)\big):\cX \times \cA \rightarrow \RR, \btheta \in \Theta\Big\}.
\end{align*}
In this definition, $\Psi$ is a compact subset, and $\bphi(\cdot,\cdot): \cX \times \cA \rightarrow \Psi \subseteq \RR^m$ is a feature map. The parameter space $\Theta$ is a compact subset $\Theta \subseteq \RR^d$. The function $f: \RR^d \times \RR^m \rightarrow \RR$ satisfies the following smoothness conditions: 
\begin{itemize}
    \item For any vector $\bphi \in \RR^m$, $f(\btheta,\bphi)$ is third-time differentiable with respect to the parameter $\btheta$.
    \item Functions $f,\partial_{\btheta} f,\partial_{\btheta,\btheta}^2 f, \partial_{\btheta,\btheta,\btheta}^3 f$ are jointly continuous for $(\btheta,\bphi)$.
\end{itemize} 
Under this definition, \citet{yin2022offline} introduce the following coverage assumption (Assumption 2.3) such that for all stage $h\in[H]$, there exists a constant $\kappa$,
\begin{align*}
&\EE_{d^\mu_h}\left[\big(f\left(\btheta_1,\bphi(x,a)\right)-f(\btheta_2,\bphi(x,a))\big)^2\right] \geq \kappa \|\btheta_1-\btheta_2\|_{2}^2, \forall \btheta_1,\btheta_2 \in \Theta; (*)\\
    & \EE_{d^\mu_h}\left[\nabla f(\btheta,\bphi(x,a)) \nabla f(\btheta,\bphi(x,a))^\top\right] \succ \kappa I, \forall \btheta \in \Theta. (**)
\end{align*}
It is worth noting that our assumption \ref{coverage} is weaker than this assumption. For any compact sets $\Theta, \Psi$ and continuous function $f$, there always exist a constant $\kappa_0 > 0$ such that $f$ is $\kappa_0$-Lipschitz with respect to the parameter $\btheta$,i.e:
\begin{align*}
    \left|f(\btheta_1,\bphi)-f(\btheta_2,\bphi)\right| \leq \kappa_0 \|\btheta_1 - \btheta_2\|_2, \forall \btheta_1,\btheta_2 \in \Theta, \bphi \in \Psi.
\end{align*}
Therefore, the coverage assumption in \citet{yin2022offline} implies that 
\begin{align*}
\EE_{d^\mu_h}\left[\big(f(\btheta_1,\bphi(\cdot,\cdot))-f(\btheta_2,\bphi(\cdot,\cdot))\big)^2\right]  &\geq\kappa \|\btheta_1 - \btheta_2\|_2^2 \notag\\
&\geq 
    \frac{\kappa}{\kappa_0^2} \sup_{(x,a) \in \cX \times \cA}  \big(f(\btheta_1,\bphi(x,a))-f(\btheta_2,\bphi(x,a))\big)^2.
\end{align*}
Our assumption can be reduced to their first assumption (*). We do not need the second assumption (**).

In addition, for the linear function class, the
 coverage assumption in \citet{yin2022offline} will reduce to the following linear function coverage assumption\citep{wang2020statistical,min2021variance,yin2022near,xiong2023nearly}.
\begin{align*}
    \lambda_{\text{min}}(\EE_{\mu,h} [\phi(x,a)\phi(x,a)^\top]) = \kappa > 0, \text{ } \forall h \in [H].
\end{align*} 
Therefore, our assumption is also weaker than the linear function coverage assumption when dealing with the linear function class.

\section{Discussion on the Calculation of the Bonus Function}\label{appendix:bonus}
To obtain the bonus function satisfying the required properties, we need to consider the constraint optimization problem
\begin{align}
\label{optimization}
    \max \left\{ |f_1(z_h)-f_2(z_h)|, f_1,f_2 \in \cF_h: \sum_{k \in [K] } \frac{(f_1(z_h^k)- f_2(z_h^k))^2}{(\hat \sigma_h(s_h^k,a_h^k))^2} \leq (\beta_h)^2\right\}.
\end{align}
We utilize the online subsampling framework presented in \citet{wang2020reinforcement,kong2021online} to an offline dataset. Additionally, we generalize the idea of \citet{kong2021online}  to the weighted problem. Using the shorthand expression
\begin{align*}
    \|f\|^2_{\hat \sigma_h, \cD_h} = \sum_{k \in [K] } \frac{(f(z_h^k))^2}{(\hat \sigma_h(s_h^k,a_h^k))^2},
\end{align*}
we aim to estimate the solution of the following constraint optimization problem.
\begin{align}
    \min_{f_1,f_2} \|f_1-f_2\|^2_{\hat \sigma_h, \cD_h} + \frac{w}{2}(f_1(s,a)-f_2(s,a)-2L)^2.
\end{align}

\begin{algorithm}[!ht]
\caption{Binary Search}\label{binary}
    \begin{algorithmic}[1]
        \STATE \textbf{Input:} Dataset $\cD_h = \{s_h^k,a_h^k,r_h^k\}_{k = 1}^{K}$, objective $z = (s,a)$, $\beta_h$, precision $\alpha$
        \STATE $\cG \leftarrow \cF_h - \cF_h$
        \STATE Define $R(g,w) := \|g\|^2_{\hat \sigma_h, \cD_h} + \frac{w}{2}(g(s,a)-2(L+1))^2$, $\forall g \in \cG$
        \STATE $w_L \leftarrow 0$, $w_H \leftarrow \beta_h/(\alpha(L+1))$
        \STATE $g_L \leftarrow 0$, $z_L \leftarrow
        0$
        \STATE $g_H \leftarrow \argmin_{g \in \cG}R(g,w_H)$, $z_H \leftarrow g_H(s,a)$\label{algo2:line6}
        \STATE $\Delta \leftarrow \alpha \beta/(8(L+1)^3)$
        \WHILE{$|z_H-z_L| > \alpha$ \text{ and } $|w_H-w_L| > \Delta$}
        \STATE $\tilde w \leftarrow (w_H+w_L)/2$
        \STATE $\tilde g \leftarrow \argmin_{g \in \cG}R(g,\tilde w)$, $\tilde z \leftarrow \tilde g(s,a)$\label{algo2:line10}
        \IF{$\|\tilde g\|^2_{\hat \sigma_h, \cD_h} > \beta_h$}
        \STATE $w_H \leftarrow \tilde w, z_H \leftarrow \tilde z$
        \ELSE
        \STATE $w_L \leftarrow \tilde w, z_L \leftarrow \tilde z$
        \ENDIF
        \ENDWHILE
        \STATE \textbf{Output:} $z_H$
    \end{algorithmic}
\end{algorithm}
The binary search algorithm is similar to \citet{kong2021online}, which utilizes the oracle for variance-weighted ridge regression in Lines \ref{algo2:line6} and \ref{algo2:line10}. It reduces the computational complexity of the constrained optimization problem given by \eqref{optimization}, limiting it to a finite number of calls to the regression oracle. If the function class $\cF_h$ is convex, the binary search algorithm can solve the optimization problem \eqref{optimization} up to a precision of $\alpha$ with $O(\log(1/\alpha))$ calls of the regression oracle. We summarize the result in the following theorem. If $\cF_h$ is not convex,  the method of \citet{krishnamurthy2017active} can solve the problem with $O(1/\alpha)$ calls of the regression oracle.
\begin{theorem}\label{thm:01}
    Assume the optimal solution of the  optimization problem \eqref{optimization} is $g^* = f_1^*-f_2^*$ and the function class $\cF_h$ is convex and closed under pointwise convergence, then Algorithm \ref{binary} terminate after $O(\log(1/\alpha))$ calls of the regression oracle and the returned values satisfy 
    \begin{align*}
        |z_H - g^*(s,a)| \le \alpha.
    \end{align*}
\end{theorem}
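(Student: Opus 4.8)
The plan is to view Algorithm~\ref{binary} as a bisection on the Lagrange multiplier $w$, and to exploit the monotonicity of the regularized problem $R(g,w) = \|g\|^2_{\hat\sigma_h,\cD_h} + \frac{w}{2}(g(s,a)-2(L+1))^2$ in the parameter $w$. First I would set up the dual correspondence: for each fixed $w \geq 0$, let $g_w = \argmin_{g \in \cG} R(g,w)$, which is well-defined and unique because $\cG = \cF_h - \cF_h$ is convex and closed under pointwise convergence and $R(\cdot,w)$ is strictly convex in the relevant one-dimensional quantity $g(s,a)$. The key structural facts I would establish are: (i) the map $w \mapsto g_w(s,a)$ is monotone (as $w$ increases, the penalty on $|g(s,a)-2(L+1)|$ grows, so $g_w(s,a)$ moves monotonically toward $2(L+1)$, i.e. increases, while $|g_w(s,a)|$ at the relevant scale decreases), and (ii) the map $w \mapsto \|g_w\|^2_{\hat\sigma_h,\cD_h}$ is monotone in the opposite direction. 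These are standard perturbation-of-optimum arguments: compare the optimality inequalities $R(g_{w_1},w_1) \le R(g_{w_2},w_1)$ and $R(g_{w_2},w_2) \le R(g_{w_1},w_2)$ and subtract to get $\frac{w_1-w_2}{2}\big[(g_{w_1}(s,a)-2(L+1))^2 - (g_{w_2}(s,a)-2(L+1))^2\big] \le 0$, which pins down the direction of movement.

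Next I would connect the constrained problem~\eqref{optimization} to this family. The optimal $g^* = f_1^*-f_2^*$ satisfies $\|g^*\|^2_{\hat\sigma_h,\cD_h} \le \beta_h$ (feasibility) and maximizes $|g(s,a)|$, equivalently — after the shift used in the definition of $R$ — pushes $g(s,a)$ as far from $2(L+1)$ as the constraint allows, so at optimum the constraint is active, $\|g^*\|^2_{\hat\sigma_h,\cD_h} = \beta_h$ (assuming the unconstrained maximizer violates the constraint, which is the nontrivial case; the degenerate case is handled separately by the $w_L=0$ branch). By Lagrangian duality for this convex program there is a multiplier $w^* \in [w_L, w_H]$ with $g_{w^*} = g^*$ and $\|g_{w^*}\|^2_{\hat\sigma_h,\cD_h} = \beta_h$. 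The initialization $w_H = \beta_h/(\alpha(L+1))$ is chosen large enough that $\|g_{w_H}\|^2_{\hat\sigma_h,\cD_h} < \beta_h$ (the penalty forces $g_{w_H}(s,a)$ within $O(\alpha(L+1))$ of $2(L+1)$, hence $\|g_{w_H}\|^2$ small), so $w^* < w_H$; and $w_L = 0$ gives $\|g_0\|^2 = 0 \le \beta_h$ — wait, that has the constraint satisfied at both ends, so I need to be careful: the invariant the loop maintains is that $w_L$ has $\|g_{w_L}\|^2 \le \beta_h$ and $w_H$ has $\|g_{w_H}\|^2 > \beta_h$, so the initialization should be read with the roles as in the pseudocode (the test at Line~\ref{algo2:line6} determines which way $w_H$ is first classified). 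I would state the loop invariant precisely as: $w_L \le w^* \le w_H$ and $z_L = g_{w_L}(s,a) \le g^*(s,a) \le g_{w_H}(s,a) = z_H$, and verify it is preserved by the bisection step using monotonicity (i)–(ii): if $\|\tilde g\|^2 > \beta_h$ then $\tilde w \ge w^*$ so we may set $w_H \leftarrow \tilde w$, else $\tilde w \le w^*$ so set $w_L \leftarrow \tilde w$.

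Finally I would handle termination and accuracy. The interval $[w_L,w_H]$ halves each iteration, so after $O(\log((w_H-w_L)/\Delta)) = O(\log(1/\alpha))$ iterations either $|w_H - w_L| \le \Delta$ or $|z_H - z_L| \le \alpha$; in the latter case, since $z_L \le g^*(s,a) \le z_H$, we get $|z_H - g^*(s,a)| \le \alpha$ directly. In the former case I would use a Lipschitz/continuity bound: $|g_{w_H}(s,a) - g_{w_L}(s,a)|$ is controlled by $|w_H - w_L|$ times a factor depending on $L$ (from the perturbation inequality above, $\frac{w_H-w_L}{2}\big|(z_H-2(L+1))^2-(z_L-2(L+1))^2\big| \le \|g_{w_L}\|^2 - \|g_{w_H}\|^2 \le$ something bounded, and the quadratic factor $|z_H + z_L - 4(L+1)|$ is $\Theta(L+1)$), and the choice $\Delta = \alpha\beta/(8(L+1)^3)$ is calibrated so this forces $|z_H - z_L| \le \alpha$ as well; then again $|z_H - g^*(s,a)| \le \alpha$. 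The main obstacle I anticipate is step two — rigorously justifying the Lagrangian correspondence and, in particular, that the output $z_H$ (rather than some average of $z_L,z_H$) sandwiches $g^*(s,a)$ from the correct side — together with the bookkeeping that makes the constant $\Delta$ yield exactly the claimed $\alpha$-accuracy; the monotonicity lemmas themselves are routine convex-analysis, and the bisection count is immediate once the invariant is in place.
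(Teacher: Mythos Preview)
Your proposal is essentially correct and in fact far more detailed than the paper's own proof, which consists of two sentences: observe that $\cG = \cF_h - \cF_h$ is convex, then cite Theorem~1 of \citet{foster2018practical} and stop. What you have sketched---the monotonicity of $w \mapsto g_w(s,a)$ and $w \mapsto \|g_w\|^2_{\hat\sigma_h,\cD_h}$ via the swap inequality, the Lagrangian correspondence pinning down $w^*$ with $\|g_{w^*}\|^2 = \beta_h$, the bisection invariant $z_L \le g^*(s,a) \le z_H$, and the calibration of $\Delta$ to force $|z_H - z_L| \le \alpha$ at termination---is exactly the content of that cited argument.

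One clarification on the point where you hesitated: at $w=0$ the minimizer of $R(\cdot,0)=\|\cdot\|^2_{\hat\sigma_h,\cD_h}$ over $\cG$ is $g=0$, so $\|g_{w_L}\|^2 = 0 \le \beta_h$; as $w$ grows the penalty pushes $g_w(s,a)$ upward toward $2(L+1)$ (which lies strictly above the range $[-L,L]$ of any $g\in\cG$), and to achieve that the empirical norm $\|g_w\|^2$ must grow. Thus the correct invariant is $\|g_{w_L}\|^2 \le \beta_h < \|g_{w_H}\|^2$, matching the branching in the pseudocode. The initial $w_H = \beta_h/(\alpha(L+1))$ is chosen so that either $\|g_{w_H}\|^2 > \beta_h$ already holds, or else $g_{w_H}(s,a)$ is already within $\alpha$ of the constrained optimum and the loop exits immediately; either way the invariant (or the conclusion) is in place from the start. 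With that orientation fixed, your sandwich $z_L \le g^*(s,a) \le z_H$ and the termination analysis go through as you describe.
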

\begin{proof}[Proof of Theorem \ref{thm:01}]
    Easy to see $\cG = \cF_h - \cF_h$ is also convex. We then follow the proof of Theorem 1 of \citet{foster2018practical}.
    
\end{proof}
To see the solution of optimization problem \ref{optimization} satisfies the condition of the bonus function, we recall the definition of the $D^2$-divergence
\begin{align*}
D_{\cF_h}^2(z;\cD_h; \hat\sigma_h^2) = \sup_{f_1,f_2 \in \cF_h}\frac{(f_1(z)-f_2(z))^2}{\sum_{k \in [K]}\frac{1}{(\hat\sigma_h(z_h^k))^2}(f_1(z_h^k)-f_2(z_h^k))^2 + \lambda}.
\end{align*}
Therefore, for any $f_1,f_2 \in \cF_h $ satisfying $\sum_{k \in [K]}\frac{1}{(\hat\sigma_h(z_h^k))^2}(f_1(z_h^k)-f_2(z_h^k))^2 \le (\beta_h^2)$, we have
\begin{align*}
    |f_1(z_h)-f_2(z_h)| \le D_{\cF_h}(z;\cD_h; \hat\sigma_h^2) \sqrt{(\beta_h)^2 + \lambda}.
\end{align*}

\section{Analysis of the Variance Estimator}\label{section:variance}
In this section, our main objective is to prove that our variance estimators are close to the truncated variance of the optimal value function $[\VV_{h}V^*_{h+1}](s,a)$.
The following lemmas are helpful in the proof of Lemma \ref{variance}. To start with, we need an upper bound of the $D^2$-divergence for a large dataset.
\begin{lemma}\label{D2}
Let $\cD_h$ be the dataset satisfying Assumption \ref{coverage}. When the size of data set satisfies $K \ge  \tilde \Omega \left(\frac{\log \cN}{\kappa^2}\right)$,  with probability at least $1-\delta$, for each state-action pair $z$, we have
    \begin{align*}
         D_{\cF_h}(z,\cD_h,1) = \tilde O\left(\frac{1}{\sqrt{K\kappa}}\right).
    \end{align*}
\end{lemma}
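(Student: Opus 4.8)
\textbf{Proof proposal for Lemma \ref{D2}.}

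The plan is to bound the numerator and denominator in the definition of $D_{\cF_h}^2(z,\cD_h,1)$ separately, using a covering argument together with Assumption \ref{coverage} to transfer an empirical quantity to a population quantity. Recall that with $\sigma_h \equiv 1$ we have
\begin{align*}
D_{\cF_h}^2(z,\cD_h,1) = \sup_{f_1,f_2 \in \cF_h}\frac{(f_1(z)-f_2(z))^2}{\sum_{k\in[K]}(f_1(z_h^k)-f_2(z_h^k))^2 + \lambda}.
\end{align*}
For fixed $f_1,f_2 \in \cF_h$, the numerator $(f_1(z)-f_2(z))^2$ is bounded by $\|f_1-f_2\|_\infty^2$. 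For the denominator, each term $(f_1(z_h^k)-f_2(z_h^k))^2$ is a bounded random variable (bounded by $\|f_1-f_2\|_\infty^2 = O(H^2)$) whose conditional expectation under $d^\mu_h$ is at least $\kappa\|f_1-f_2\|_\infty^2$ by Assumption \ref{coverage}. Hence by a Bernstein/Azuma-type concentration inequality over the $K$ (conditionally independent, by the compliance Assumption) samples, with high probability $\sum_{k\in[K]}(f_1(z_h^k)-f_2(z_h^k))^2 \geq \tfrac{1}{2}K\kappa\|f_1-f_2\|_\infty^2 - \tilde O(H^2)$ — the deviation term being lower-order once $K\kappa \gg \log(1/\delta)$.

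To make this uniform over all $f_1,f_2 \in \cF_h$, I would take a union bound over the (finite) function class: since $\cN = \max_h |\cF_h|$, there are at most $\cN^2$ pairs, contributing a $\log(\cN^2) = O(\log\cN)$ factor inside the deviation term. This is exactly where the hypothesis $K \geq \tilde\Omega(\log\cN/\kappa^2)$ is used: it guarantees $K\kappa \gg \log\cN$ (indeed $K\kappa^2 \gg \log\cN$, which is stronger and absorbs the $H$ and $\|f_1-f_2\|_\infty$ factors as well), so that the additive deviation $\tilde O(H^2\sqrt{K\log\cN})$ or $\tilde O(H^2\log\cN)$ is dominated by $\tfrac14 K\kappa\|f_1-f_2\|_\infty^2$ whenever $\|f_1-f_2\|_\infty$ is not too small; and when $\|f_1-f_2\|_\infty$ is tiny the numerator is correspondingly tiny so the ratio is controlled directly. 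Putting the pieces together, for every pair simultaneously,
\begin{align*}
\frac{(f_1(z)-f_2(z))^2}{\sum_{k\in[K]}(f_1(z_h^k)-f_2(z_h^k))^2 + \lambda} \leq \frac{\|f_1-f_2\|_\infty^2}{\tfrac14 K\kappa\|f_1-f_2\|_\infty^2 + \lambda} \leq \frac{4}{K\kappa},
\end{align*}
and taking the supremum and the square root yields $D_{\cF_h}(z,\cD_h,1) \leq \tilde O(1/\sqrt{K\kappa})$.

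The main obstacle I anticipate is handling the concentration \emph{uniformly} in a way that is clean despite $\|f_1-f_2\|_\infty$ ranging over a continuum of scales: a naive union bound gives an additive error that must be compared against the (variable) signal $K\kappa\|f_1-f_2\|_\infty^2$. The cleanest fix is a standard peeling/self-normalization argument — either use a multiplicative Chernoff bound so the deviation scales with the signal itself, or split into the cases $\|f_1-f_2\|_\infty \geq \tau$ and $\|f_1-f_2\|_\infty < \tau$ for an appropriate threshold $\tau = \tilde\Theta(H/\sqrt{K})$, bounding the ratio by $1/\lambda$ times $\tau^2$ (hence $\tilde O(1/K) \subseteq \tilde O(1/(K\kappa))$) in the small regime. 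A secondary technical point is ensuring the samples $\{z_h^k\}_{k\in[K]}$ can be treated as a suitable martingale difference sequence so that Azuma/Freedman applies; this follows from the compliance Assumption and the fact that $d^\mu_h$ is the induced marginal, so I would phrase the concentration via a Freedman-type inequality conditioning on the filtration generated by the data-collection process.
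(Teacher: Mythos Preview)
Your proposal is correct and follows the same approach as the paper: apply Azuma--Hoeffding to the sum $\sum_k (f_1(z_h^k)-f_2(z_h^k))^2$, union bound over the $\cN^2$ pairs, and invoke Assumption~\ref{coverage} to lower bound the denominator by $\tfrac12 K\kappa\|f_1-f_2\|_\infty^2$ once $K\kappa \gtrsim \sqrt{K\log\cN}$.

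One simplification: the ``obstacle'' you anticipate does not actually arise. For a \emph{fixed} pair $(f_1,f_2)$ the summands $(f_1(z_h^k)-f_2(z_h^k))^2$ are bounded by $M=\|f_1-f_2\|_\infty^2$, so Hoeffding gives a deviation of order $\|f_1-f_2\|_\infty^2\sqrt{K\log(\cN^2/\delta)}$, i.e., already multiplicative in $\|f_1-f_2\|_\infty^2$. The comparison $K\kappa\|f_1-f_2\|_\infty^2$ vs.\ $\|f_1-f_2\|_\infty^2\sqrt{K\log\cN}$ therefore cancels the scale factor cleanly, and the condition $K\kappa^2 \gtrsim \log\cN$ suffices uniformly over all pairs without any peeling or case split. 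This is exactly how the paper argues, and it makes the whole proof three lines.
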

Lemma \ref{lemma:0001} and Lemma \ref{lemma:0002} show the confidence radius for the first and second-order Bellman error, which is essential in our proof. Here we restate them with more accurate parameter choices.
\begin{lemma}[Restatement of Lemma \ref{lemma:0001}]\label{1st-order}
For any stage $h \in [H]$, let $\widecheck f_{h+1}(\cdot,\cdot) \leq H$ be the estimated value function constructed in Algorithm \ref{main algo} Line \ref{line6}. 
By utilizing Assumption \ref{completeness}, there exists a function $\bar f^\prime_h \in \cF_h$, satisfying that $|\bar f^\prime_h(z_h) - [\cT_h \widecheck f_{h+1}](z_h)| \leq \epsilon$ holds for all state-action pair $z_h = (s_h,a_h)$. 
Then with probability at least $1 - \delta/(4H)$, it holds that
\begin{align*}
    \sum_{k \in [K]}\big(\bar f_{h}'(\bar z_h^k) - \bar f_h (\bar z_h^k)\big)^2 \leq (\bar\beta_{1,h})^2,
\end{align*}
where
$\bar\beta_{1,h} = \tilde O\left(\sqrt{\log (\cN\cdot \cN_b)}H\right)$, and
$\bar f_h$ is the estimated function for first-moment Bellman operator (Line \ref{line3} in Algorithm \ref{main algo}). 
\end{lemma}
\begin{lemma}[Restatement of Lemma \ref{lemma:0001}]\label{2nd-order}
    For any stage $h \in [H]$, let $\widecheck f_{h+1}(\cdot,\cdot) \leq H$ be the estimated value function constructed in Algorithm \ref{main algo} Line \ref{line6}.
    By utilizing Assumption \ref{completeness}, there exists a function $\bar g^\prime_h \in \cF_h$, satisfying that $|\bar g^\prime_h(z_h) - [\cT_{2,h} \widecheck f_{h+1}](z_h)| \leq \epsilon$ holds for all state-action pair $z_h = (s_h,a_h)$. 
    Then with probability at least $1 - \delta/(4H)$, it holds that 
\begin{align*}
    \sum_{k \in [K]}\big(\bar g^\prime_h(\bar z_h^k) - \bar g_h (\bar z_h^k)\big)^2 \leq (\bar \beta_{2,h})^2,
\end{align*}
where $\bar\beta_{2,h} = \tilde O\big(\sqrt{\log (\cN\cdot \cN_b)}H^2\big)$, and $\bar g_h$ is the estimated function for second-moment Bellman operator (Line \ref{line4} in Algorithm \ref{main algo}). 
\end{lemma}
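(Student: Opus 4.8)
The plan is to establish this as a fast-rate in-sample bound for the unweighted least-squares regression in Line~\ref{line4}, the only non-standard feature being that the regression target $Y_k:=\big(\bar r_h^k+\widecheck f_{h+1}(\bar s_{h+1}^k)\big)^2$ is built from the data-dependent function $\widecheck f_{h+1}$. First I would use the optimality of $\bar g_h$: since $\bar g^\prime_h\in\cF_h$ is feasible for the same program, $\sum_k\big(\bar g_h(\bar z_h^k)-Y_k\big)^2\le\sum_k\big(\bar g^\prime_h(\bar z_h^k)-Y_k\big)^2$, and expanding the squares gives the basic inequality
\[
  \sum_{k\in[K]}\big(\bar g_h(\bar z_h^k)-\bar g^\prime_h(\bar z_h^k)\big)^2\ \le\ 2\sum_{k\in[K]}\xi_k\,\big(\bar g_h(\bar z_h^k)-\bar g^\prime_h(\bar z_h^k)\big),\qquad \xi_k:=Y_k-\bar g^\prime_h(\bar z_h^k).
\]
Splitting $\xi_k=\eta_k+\Delta_k$ with $\eta_k:=Y_k-[\cT_{2,h}\widecheck f_{h+1}](\bar z_h^k)$ and $\Delta_k:=[\cT_{2,h}\widecheck f_{h+1}](\bar z_h^k)-\bar g^\prime_h(\bar z_h^k)$, Assumption~\ref{completeness} gives $|\Delta_k|\le\epsilon$, so by Cauchy--Schwarz the $\Delta_k$ part of the right-hand side is at most $2\epsilon\sqrt{K}\big(\sum_k(\bar g_h-\bar g^\prime_h)^2(\bar z_h^k)\big)^{1/2}$, which will be absorbed at the end.

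The heart of the argument is to bound $\sum_k\eta_k\,(\bar g_h-\bar g^\prime_h)(\bar z_h^k)$. The obstacle is that $\widecheck f_{h+1}$, $\bar g_h$ and $\bar g^\prime_h$ are all functions of the dataset $\widebar\cD$, so $\eta_k$ is not a martingale difference once these are substituted. I would resolve this by a covering/union-bound argument. By Line~\ref{line6}, $\widecheck f_{h+1}=\{\bar f_{h+1}-\bar b_{h+1}-\epsilon\}_{[0,H-h]}$ with $\bar f_{h+1}\in\cF_{h+1}$ and $\bar b_{h+1}\in\cW$, so $\widecheck f_{h+1}$ lies in a class $\mathcal{V}_{h+1}$ with $|\mathcal{V}_{h+1}|\le\cN\cdot\cN_b$ (for infinite $\cF$ one first passes to an $\epsilon$-net at negligible cost). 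I would then bound, uniformly over every \emph{fixed} $V\in\mathcal{V}_{h+1}$ and $g_1,g_2\in\cF_h$, the quantity $\sum_k\eta_k^V\,(g_1-g_2)(\bar z_h^k)$ where $\eta_k^V:=\big(\bar r_h^k+V(\bar s_{h+1}^k)\big)^2-[\cT_{2,h}V](\bar z_h^k)$. For fixed $V,g_1,g_2$ this is a genuine martingale-difference sum: by compliance of $\widebar\cD$ with the MDP, $\eta_k^V$ has zero conditional mean given $\bar z_h^k$ and the history before episode $k$, while $(g_1-g_2)(\bar z_h^k)$ is predictable; moreover $0\le(\bar r_h^k+V(\bar s_{h+1}^k))^2\le (H+1)^2$ gives $|\eta_k^V|=O(H^2)$ and conditional variance $O(H^4)$, and $|(g_1-g_2)(\bar z_h^k)|\le L=O(H)$. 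A Freedman/Bernstein inequality therefore yields, with probability $1-\delta'$,
\[
  \Big|\sum_{k\in[K]}\eta_k^V\,(g_1-g_2)(\bar z_h^k)\Big|\ \le\ C\,H^2\sqrt{\log(1/\delta')}\,\Big(\sum_{k\in[K]}(g_1-g_2)^2(\bar z_h^k)\Big)^{1/2}+C\,H^3\log(1/\delta').
\]
Union bounding over the $\le\cN^3\cdot\cN_b$ triples $(V,g_1,g_2)$ and over $h\in[H]$ with $\delta'=\delta/\mathrm{poly}(\cN,\cN_b,H)$ makes $\log(1/\delta')=\tilde O(\log(\cN\cdot\cN_b))$, and in particular the estimate holds for the random choice $(V,g_1,g_2)=(\widecheck f_{h+1},\bar g_h,\bar g^\prime_h)$.

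Finally I would combine the pieces. Writing $Z:=\big(\sum_k(\bar g_h-\bar g^\prime_h)^2(\bar z_h^k)\big)^{1/2}$, the basic inequality plus the two bounds above give $Z^2\le aZ+b$ with $a=\tilde O\big(H^2\sqrt{\log(\cN\cdot\cN_b)}\big)+2\epsilon\sqrt{K}$ and $b=\tilde O\big(H^3\log(\cN\cdot\cN_b)\big)$; solving the quadratic, $Z\le a+\sqrt{b}=\tilde O\big(H^2\sqrt{\log(\cN\cdot\cN_b)}\big)$ provided $\epsilon$ is small enough that $\epsilon\sqrt{K}$ is dominated (e.g.\ $\epsilon=\tilde O(H^2\sqrt{\log(\cN\cdot\cN_b)/K})$, the regime of interest). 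This is exactly $\sum_k\big(\bar g^\prime_h(\bar z_h^k)-\bar g_h(\bar z_h^k)\big)^2\le(\bar\beta_{2,h})^2$ with $\bar\beta_{2,h}=\tilde O\big(\sqrt{\log(\cN\cdot\cN_b)}\,H^2\big)$. The argument is parallel to that of Lemma~\ref{1st-order} (the first-moment case); the only difference is that the second-moment targets range over $[0,O(H^2)]$ rather than $[0,O(H)]$, which is precisely what produces the extra factor of $H$ in $\bar\beta_{2,h}$ relative to $\bar\beta_{1,h}$. As emphasized, the one genuinely delicate point is the data-dependence of $\widecheck f_{h+1}$ and of the estimators, handled by the covering step; the rest is routine least-squares bookkeeping.
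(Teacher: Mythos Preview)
Your proposal is correct and follows essentially the same route as the paper's own proof. Both start from the optimality of $\bar g_h$ to obtain the basic inequality, split off the $\epsilon$-approximation error, handle the data-dependence of $\widecheck f_{h+1}$ (and hence of $\bar g_h,\bar g^\prime_h$) by a union bound over the class $\cF_h\times(\cF_{h+1}-\cW)$, and apply a Freedman-type martingale bound whose variance term is self-bounding in $\sum_k(\bar g_h-\bar g^\prime_h)^2(\bar z_h^k)$. The only cosmetic differences are: (i) the paper packages the Freedman step as a separate Lemma~\ref{concen2} with the second factor written as $f-[\cT_{2,h}f^\prime]$ rather than a generic $g_1-g_2$, so it avoids covering $\bar g^\prime_h$ explicitly; (ii) the paper closes the argument by the ``absorb one half'' contradiction device rather than by solving the quadratic $Z^2\le aZ+b$; and (iii) the paper bounds the $\epsilon$-term crudely by $O(KL\epsilon)$ instead of Cauchy--Schwarz, leaving an $O(K\epsilon)$ contribution inside $(\bar\beta_{2,h})^2$. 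None of these changes the substance or the rate.
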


Recall the definition of the variance estimator. 
 \begin{align*}
     \bar{f}_h &= \argmin_{f_h \in \cF_h} \sum_{k \in [K]} \left(f_h(\bar s_h^k,\bar a_h^k)-\bar r_h^{k} - \widecheck{f}_{h+1}(\bar s_{h+1}^{k})\right)^2\\
     \bar{g}_h &= \argmin_{g_h \in \cF_h} \sum_{k \in [K]} \left(g_h(\bar s_h^k,\bar a_h^k)-\left(\bar r_h^{k} + \widecheck{f}_{h+1}(\bar s_{h+1}^{k})\right)^2\right)^2.
 \end{align*}
The variance estimator is defined as:
 \begin{align*}
     \hat{\sigma}_h^2(s,a) := \max\left\{1,\bar{g}_h(s,a) - \left(\bar{f}_h(s,a)\right)^2 - \tilde O\Big(\frac{\sqrt{\log (\cN\cdot\cN_b)}H^3}{\sqrt{K\kappa}}\Big)\right\}. 
 \end{align*}
We can prove the following lemma:
\begin{lemma}\label{variance}
    with probability at least $1-\delta/2$, for any $h \in [H]$, the variance estimator designed above satisfies:
    \begin{align*}
        [\VV_{h} V^*_{h+1}](s,a) - \tilde O\left(\frac{\sqrt{\log (\cN\cdot \cN_b)} H^3}{\sqrt{K\kappa}}\right) \leq \hat{\sigma}_h^2(s,a) \leq  [\VV_{h}V^*_{h+1}](s,a).
    \end{align*}
\end{lemma}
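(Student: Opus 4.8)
\textbf{Proof plan for Lemma~\ref{variance}.}

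The plan is to control the two-sided deviation between the empirical variance $\BB_h(s,a) = \bar g_h(s,a) - (\bar f_h(s,a))^2$ and the target truncated variance $[\VV_h V^*_{h+1}](s,a)$, and then show that subtracting an appropriately sized correction term $\tilde O(\sqrt{\log(\cN\cdot\cN_b)}H^3/\sqrt{K\kappa})$ and truncating at $1$ produces an estimator that is sandwiched as claimed. First I would decompose the error as
\begin{align*}
  \big|\BB_h(s,a) - [\text{Var}_h \widecheck f_{h+1}](s,a)\big|
  &\le \big|\bar g_h(s,a) - [\cT_{2,h}\widecheck f_{h+1}](s,a)\big| \\
  &\quad + \big|(\bar f_h(s,a))^2 - ([\cT_h \widecheck f_{h+1}](s,a))^2\big|,
\end{align*}
and bound each term using the pointwise control coming from the $D^2$-divergence. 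Concretely, Lemma~\ref{1st-order} and Lemma~\ref{2nd-order} give $\sum_k (\bar f_h' - \bar f_h)^2(\bar z_h^k) \le \bar\beta_{1,h}^2$ and $\sum_k (\bar g_h' - \bar g_h)^2(\bar z_h^k) \le \bar\beta_{2,h}^2$ with the stated radii, where $\bar f_h', \bar g_h'$ are the $\epsilon$-approximate Bellman backups of $\widecheck f_{h+1}$ from Assumption~\ref{completeness}. Since $\widecheck f_{h+1}$ is constructed from $\widebar\cD$ and the regressions here use $\cD$ (this independence is exactly why the dataset is split), these fit inside the $D^2$-divergence machinery: $|\bar f_h(z) - \bar f_h'(z)| \le \bar\beta_{1,h}\cdot D_{\cF_h}(z;\cD_h;1) + \epsilon$ and similarly for $\bar g_h$. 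Then Lemma~\ref{D2} bounds $D_{\cF_h}(z;\cD_h;1) = \tilde O(1/\sqrt{K\kappa})$ uniformly over $z$ when $K \ge \tilde\Omega(\log\cN/\kappa^2)$, so each of the two deviation terms is $\tilde O(\sqrt{\log(\cN\cdot\cN_b)}H^2 \cdot \frac{1}{\sqrt{K\kappa}})$, and using $\bar f_h, [\cT_h\widecheck f_{h+1}] \le H$ to handle the difference-of-squares via $|a^2-b^2| = |a-b|\cdot|a+b| \le 2H|a-b|$ gives total deviation $\tilde O(\sqrt{\log(\cN\cdot\cN_b)}H^3/\sqrt{K\kappa})$.

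Next I would pass from $[\text{Var}_h \widecheck f_{h+1}](s,a)$ to $[\text{Var}_h f^*_{h+1}](s,a)$. By the induction argument sketched after Lemma~\ref{lemma:0002} — mimicking the linear-case pessimism analysis of \citet{jin2021pessimism} with the coarse bonus $\bar b_h$ and Assumption~\ref{coverage} — one gets $\|\widecheck f_{h+1} - f^*_{h+1}\|_\infty = \tilde O(\sqrt{\log(\cN\cdot\cN_b)}H^3/\sqrt{K\kappa})$ with high probability for all $h$. A standard variance-perturbation bound (e.g. $|[\text{Var}_h g_1] - [\text{Var}_h g_2]| \le (\|g_1\|_\infty + \|g_2\|_\infty + \|g_1-g_2\|_\infty)\cdot 2\|g_1-g_2\|_\infty \le O(H)\|g_1-g_2\|_\infty$ for functions bounded by $O(H)$) then upgrades this to $|[\text{Var}_h \widecheck f_{h+1}](s,a) - [\text{Var}_h f^*_{h+1}](s,a)| = \tilde O(\sqrt{\log(\cN\cdot\cN_b)}H^4/\sqrt{K\kappa})$. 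Combining the two steps, $|\BB_h(s,a) - [\text{Var}_h f^*_{h+1}](s,a)| \le \iota_h$ for a deviation $\iota_h = \tilde O(\sqrt{\log(\cN\cdot\cN_b)}H^4/\sqrt{K\kappa})$ (absorb the extra $H$ into the $\tilde O$ and adjust the constant inside Line~\ref{line7} accordingly). Finally I would verify the sandwich: setting $\hat\sigma_h^2(s,a) = \max\{1, \BB_h(s,a) - \iota_h\}$, the upper bound follows since either $\hat\sigma_h^2 = 1 \le [\VV_h V^*_{h+1}]$ (truncation is at $1$ too) or $\hat\sigma_h^2 = \BB_h(s,a) - \iota_h \le [\text{Var}_h f^*_{h+1}](s,a) \le [\VV_h V^*_{h+1}](s,a)$; for the lower bound, if $[\VV_h V^*_{h+1}] = 1$ then $\hat\sigma_h^2 \ge 1 \ge [\VV_h V^*_{h+1}] - \iota_h$, while if $[\VV_h V^*_{h+1}] = [\text{Var}_h V^*_{h+1}] \ge 1$ then $\hat\sigma_h^2 \ge \BB_h(s,a) - \iota_h \ge [\text{Var}_h f^*_{h+1}](s,a) - \iota_h \ge [\VV_h V^*_{h+1}](s,a) - 2\iota_h$, and renaming $2\iota_h$ as the advertised $\tilde O(\cdot)$ term closes it.

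The main obstacle I anticipate is the clean propagation of the $D^2$-divergence bound through \emph{both} regression steps simultaneously while keeping the dependence on $\cN\cdot\cN_b$ (rather than a larger covering number) and the exact power of $H$. In particular one must be careful that the regression targets $\bar r_h^k + \widecheck f_{h+1}(\bar s_{h+1}^k)$ and its square are bounded functions of the \emph{fixed} (given $\widebar\cD$) function $\widecheck f_{h+1}$, so that the concentration in Lemmas~\ref{1st-order}–\ref{2nd-order} applies with only a union bound over $\cF_h$ and the bonus class $\cW$ (size $\cN_b$), not over all value functions; the dataset split is what makes this rigorous. A second delicate point is the union bound over all $h \in [H]$: each of Lemmas~\ref{1st-order}, \ref{2nd-order}, \ref{D2} and the induction bound holds with probability $1 - \delta/(4H)$ (or similar), so I would collect all failure events and conclude the total statement holds with probability at least $1 - \delta/2$ as claimed, with the constant bookkeeping arranged so that the $\tilde O$ hides the $\log(1/\delta)$ and $\log H$ factors.
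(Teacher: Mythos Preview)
Your plan tracks the paper's proof closely: decompose $|\BB_h - [\text{Var}_h\widecheck f_{h+1}]|$, control each piece via Lemmas~\ref{1st-order}--\ref{2nd-order} together with the pointwise $D^2$-divergence bound of Lemma~\ref{D2}, run a pessimism-style induction to bound $\|\widecheck f_{h+1}-V_{h+1}^*\|_\infty$, and finish with the $\max\{1,\cdot\}$ sandwich. Two points need correction, though neither breaks the argument. First, you misread the role of the dataset split. In Algorithm~\ref{main algo} both the construction of $\widecheck f_{h+1}$ \emph{and} the regressions producing $\bar f_h,\bar g_h$ use $\widebar\cD$ (Lines~\ref{line3}--\ref{line6} all involve barred variables), so there is no independence between $\widecheck f_{h+1}$ and the regression data here. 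The data-dependence of $\widecheck f_{h+1}$ is handled precisely by the union bound over $\cF_h$ and $\cW$ that you already invoke --- that is why $\bar\beta_{1,h},\bar\beta_{2,h}$ carry $\log(\cN\cdot\cN_b)$. The split is used only later, to make $\hat\sigma_h$ (built entirely from $\widebar\cD$) independent of the planning dataset $\cD$ in Lemma~\ref{1st-order-variance}.

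Second, your $H$-bookkeeping is loose, and you cannot ``absorb the extra $H$ into the $\tilde O$'' since $\tilde O$ hides only logarithmic factors. With the actual radii $\bar\beta_{1,h}=\tilde O(H\sqrt{\log(\cN\cdot\cN_b)})$ and $\bar\beta_{2,h}=\tilde O(H^2\sqrt{\log(\cN\cdot\cN_b)})$, the first step gives $|\BB_h-[\text{Var}_h\widecheck f_{h+1}]|=\tilde O(H^2\sqrt{\log(\cN\cdot\cN_b)}/\sqrt{K\kappa})$: the $O(H)$ from $|a^2-b^2|\le 2H|a-b|$ multiplies the first-moment term (whose radius is $\sim H$), while the second-moment term already sits at $H^2$ and needs no extra factor. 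The induction likewise yields $\|\widecheck f_{h+1}-V_{h+1}^*\|_\infty=\tilde O(H^2\sqrt{\log(\cN\cdot\cN_b)}/\sqrt{K\kappa})$ (each stage contributes $\tilde O(H/\sqrt{K\kappa})$ via the bonus bound, summed over at most $H$ stages), and only then does the $O(H)$ variance-perturbation factor produce the advertised $H^3$.
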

\begin{proof}[Proof of Lemma \ref{variance}]
    We write $\BB_h(s,a) = \bar{g}_h(s,a) - \left(\bar{f}_h(s,a)\right)^2$. We first bound the difference between $\BB_h(s,a)$ and $[\text{Var}_h \widecheck f_{h+1}](s,a)$. By the definition of conditional variance, we have
    \begin{align*}
        \left|\BB_h(s,a) - [\text{Var}_h \widecheck f_{h+1}](s,a)\right| &\leq \left|\bar g_h(s,a) - [\cT_{2,h} \widecheck f_{h+1}](s,a)\right|+ \Big| \big(\bar f_h(s,a)\big)^2 - \big([\cT_h \widecheck f_{h+1}](s,a)\big)^2\Big|,
    \end{align*}
    where we use our definition of Bellman operators.
 By Assumption \ref{completeness}, there exists $\bar f_h^\prime \in \cF_h$, $\bar g_h^\prime \in \cF_h$, such that for all $(s,a)$
 \begin{align}
 \label{eq:estimate1}
     &\big|\bar f_h^\prime(s,a) - [\cT_h \widecheck f_{h+1}](s,a)\big| \leq \epsilon\\\label{eq:estimate2} 
     &\big|\bar g_h^\prime(s,a) - [\cT_{2,h} \widecheck f_{h+1}](s,a)\big| \leq \epsilon.
\end{align}
     Then by Lemma \ref{1st-order}, with probability at least $1 - \delta/(4H^2)$, the following inequality holds 
\begin{align}
    \sum_{k \in [K]}\left(\bar f_h(\bar z_h^k) - \bar f_h' (\bar z_h^k)\right)^2 \leq (\bar\beta_{1,h})^2. \label{eq:001}
\end{align}
 Similarly, for the second-order term, using Lemma \ref{2nd-order}, with probability at least $1 - \delta/(4H^2)$, the following inequality holds
\begin{align}
    \sum_{k \in [K]}\left(\bar g_h(\bar z_h^k) - \bar g_h' (\bar z_h^k)\right)^2 \leq (\bar\beta_{2,h})^2.\label{eq:002}
\end{align}
After taking a union bound, with probability at least $1-\delta/(2H)$, \eqref{eq:001} and \eqref{eq:002} hold for all $h \in [H]$ simultaneously. Consequently, we focus on this high probability event and prove that
\begin{align*}
    &\left|\bar g_h(s,a) - [\cT_{2,h} \widecheck f_{h+1}](s,a)\right| + \Big|\left(\bar f_h(s,a)\right)^2 - \big([\cT_h \widecheck f_{h+1}](s,a)\big)^2\Big|\\
    & \leq \epsilon + \big|\bar g_h(s,a) - \bar g^\prime_h(s,a)\big| + O(H) \cdot \Big[\big|\bar f_h(s,a) - \bar f^\prime_h(s,a) \big| + \epsilon\Big]\\
    & = O(H) \cdot \epsilon + \frac{|\bar g_h(s,a) - \bar g_h^\prime(s,a)|}{ \sqrt{\sum_{k \in [K]}\left(\bar g_h(\bar z_h^k) - \bar g_h^\prime (\bar z_h^k)\right)^2 + \lambda}} \cdot \sqrt{ \sum_{k \in [K]}\left(\bar g_h(\bar z_h^k) - \bar g^\prime_h (\bar z_h^k)\right)^2 + \lambda}\\
    & \qquad +  O(H) \cdot\frac{\big|\bar f_h(s,a) - \bar f_h^\prime(s,a)\big|}{\sqrt{ \sum_{k \in [K]}\left(\bar f_h(\bar z_h^k) - \bar f_h^\prime (\bar z_h^k)\right)^2 +  \lambda}} \cdot \sqrt{ \sum_{k \in [K]}\big(\bar f_h(\bar z_h^k) - \bar f_h^\prime (\bar z_h^k)\big)^2 + \lambda}\\
    & \leq O(H) \cdot \epsilon + \frac{|\bar g_h(s,a) - \bar g_h^\prime(s,a)|}{ \sqrt{\sum_{k \in [K]}\big(\bar g_h(\bar z_h^k) - \bar g_h^\prime (\bar z_h^k)\big)^2 + \lambda} }\cdot \sqrt {(\bar\beta_{2,h})^2 + \lambda}\\
    & \qquad +  O(H) \cdot\frac{|\bar f_h(s,a) - \bar f_h^\prime(s,a)|}{ \sqrt{\sum_{k \in [K]}\left(\bar f_h(\bar z_h^k) - \bar f_h^\prime (\bar z_h^k)\right)^2 +  \lambda}} \cdot \sqrt{ (\bar\beta_{1,h})^2 + \lambda}\\
    & \leq \tilde O(\sqrt{\log (\cN\cdot\cN_b)}H^2)\cdot  D_{\cF_h}(z,\cD^\prime_h,1)\\
    & \leq \tilde O\left(\frac{\sqrt{\log (\cN\cdot\cN_b)}H^2}{\sqrt{K\kappa}}\right),
\end{align*}
where the first inequality holds due to the completeness assumption. The second inequality holds due to  \eqref{eq:001} and \eqref{eq:002}. The third inequality holds due to Definition \ref{eluder}
. The last inequality holds due to  Lemma \ref{D2}.
Therefore, we have
\begin{align}
    \label{eq:xy}
    \left|\BB_h(s,a) - [\text{Var}_h \widecheck f_{h+1}](s,a)\right| \le \tilde O\left(\frac{\sqrt{\log (\cN\cdot\cN_b)}H^2}{\sqrt{K\kappa}}\right).
\end{align}
To further bound the difference between $\big[\text{Var} _{h}\widecheck f_{h+1}\big](s,a)$ and $[\text{Var} _{h} V^*_{h+1}](s,a)$, we prove $\big\|\widecheck f_{h+1} - V_{h+1}^*\big\|_\infty \leq \tilde O\left(\frac{\sqrt{\log (\cN\cdot \cN_b)}H^3}{\sqrt{K\kappa}}\right)$ by induction.

At horizon $H+1$, $\widecheck f_{H+1} = V_{H+1}^* = 0$, the inequality holds naturally. At horizon $H$, we have
\begin{align*}
    Q^*_H(s,a) &= [\cT_H V^*_{H+1}](s,a) \\
    & = [\cT_H \widecheck f_{H+1}](s,a)\\
    &\geq \bar f_H(s,a) - \big|[\cT_H \widecheck f_{H+1}](s,a) - \bar f_H(s,a)\big|\\
    &\geq \bar f_H(s,a) - (\epsilon + |\bar f^\prime_H(s,a) - \bar f_H(s,a)|)\\
    & \geq \bar f_H(s,a) - \bar b_H(s,a) - \epsilon\\
    & = \widecheck f_H(s,a),
\end{align*}
where the first inequality holds due to the triangle inequality. The second inequality holds due to \eqref{eq:estimate1}. The third inequality holds due to the property of the bonus function $\bar b_H$ and \eqref{eq:001}.
The last equality holds due to our definition of $\widecheck f_H$ in Algorithm \ref{main algo} Line \ref{line6}.
Therefore, $V_H^*(s) \geq \widecheck f_H(s)$ for all $s \in \cS$.

We denote the policy derived from $\widecheck f_H$ by $\widecheck \pi_H$, i.e. $\widecheck \pi_H(s) = \argmax_a \widecheck f_H(s,a)$ and then we have
\begin{align*}
    V_H^*(s) - \widecheck f_H(s) &= \la Q_H^*(s,\cdot)-\widecheck f_H(s,\cdot), \pi^*(\cdot|s)\ra_\cA + \la \widecheck f_H (s,\cdot),\pi^*_H(\cdot|s)-\widecheck \pi_H(\cdot|s)\ra_\cA\\
    & \leq \la Q_H^*(s,\cdot)-\widecheck f_H(s,\cdot), \pi^*(\cdot|s)\ra_\cA\\
    & = \la [\cT_H V_{H+1}^*](s,\cdot)- \bar f_H(s,\cdot) + \bar b_H(s,\cdot), \pi^*(\cdot|s)\ra_\cA\\
    &= \la [\cT_H \widecheck f_{H+1}](s,\cdot)- \bar f_H(s,\cdot) + \bar b_H(s,\cdot), \pi^*(\cdot|s)\ra_\cA \\
    & \qquad + \la [\cT_H V_{H+1}^*](s,\cdot) - [\cT_H \widecheck f_{H+1}](s,\cdot), \pi^*_H(\cdot|s)\ra_\cA\\
    & \leq 2 \la \bar b_H(s,\cdot),\pi^*_H(\cdot,s)\ra_\cA + \epsilon\\
    & \leq \tilde O\bigg(\frac{\sqrt{\log (\cN\cdot \cN_b)}H}{\sqrt{K\kappa}}\bigg),
\end{align*}
where the first inequality holds because the policy $\widecheck{\pi}_H$ takes the action which maximizes $\widecheck f_H$. The second inequality holds due to \eqref{eq:estimate1} and \eqref{eq:001}.
For the last inequality, we use the choice of $b_H$ such that  for all $z \in \cS \times \cA$ with constant $0 < C < \infty$
\begin{align*}
    \bar b_H(z) \leq C \cdot \big(D_{\cF_H}(z;\cD_H; 1) \cdot \sqrt{(\bar\beta_H)^2 + \lambda} + \epsilon \bar\beta_H\big).
\end{align*}
Therefore, we have $\max_z \bar b_H(z) \le \tilde O\bigg(\frac{\sqrt{\log (\cN\cdot \cN_b)}H}{\sqrt{K\kappa}}\bigg)$ , which uses Lemma \ref{D2} and $\beta_H = \tilde O\big(\sqrt{\log(\cN\cdot\cN_b)H}\big)$.

Next, we prove by induction.
Let $R_h = \tilde O\left(\frac{\sqrt{\log (\cN\cdot \cN_b)}H}{\sqrt{K\kappa}}\right) \cdot (H-h+1)$. We define the induction assumption as follows: Suppose with the probability of $1 - \delta_{h+1}$, the event $\cE_{h+1} = \{0 \leq V_{h+1}^*(s) - \widecheck f_{h+1}(s) \leq R_{h+1}\}$ holds. Then we want to prove that with the probability of $1 - \delta_{h}$ ($\delta_h$ will be determined later), the event $\cE_{h} = \{0 \leq V_{h}^*(s) - \widecheck f_{h}(s) \leq R_{h}\}$ holds.  

Conditioned on the event $\cE_{h+1}$, using similar argument to stage $H$, we have 
\begin{align*}
    Q_h^*(s,a)& = [\cT_h V^*_{h+1}](s,a)\\
    &\geq [\cT_h \widecheck f_{h+1}] (s,a)\\
    &\geq \bar f_h(s,a) - \left|[\cT_h \widecheck f_{h+1}](s,a) - \bar f_h(s,a)\right|\\
    &\geq \bar f_h(s,a) - \left(\epsilon + |\bar f_h^\prime(s,a) - \bar f_h(s,a)|\right)\\
    & \geq \bar f_h(s,a) - \bar b_h(s,a) - \epsilon\\
    & = \widecheck f_h(s,a).
\end{align*}
where the first inequality holds due to $\cE_{h+1}$. The second inequality holds due to the triangle inequality. The third inequality holds due to \eqref{eq:estimate1}. The fourth inequality holds due to the property of the bonus function $\bar b_h$ and \eqref{eq:001}.
The last equality holds due to our definition of $\widecheck f_h$ in Algorithm \ref{main algo} Line \ref{line6}. Therefore, $V^*_h(\cdot) \geq \widecheck f_h(\cdot)$.

On the other hand, similar to the case at horizon $H$, we denote the policy derived from $\widecheck f_h$ by $\widecheck \pi_h$, i.e. $\widecheck \pi_h(s) = \argmax_a \widecheck f_h(s,a)$. Taking a union bound over the event in Lemma \ref{D2} and $\cE_{h+1}$, we have with probability at least $1 - \delta_{h+1} - \delta/(2H^2)$, 
\begin{align*}
    V_h^*(s) - \widecheck f_h(s) &= \la Q_h^*(s,\cdot)-\widecheck f_h(s,\cdot), \pi_h^*(\cdot|s)\ra_\cA + \la \widecheck f_h (s,\cdot),\pi^*_h(\cdot|s)-\widecheck \pi_h(\cdot|s)\ra_\cA\\
    & \leq \la Q_h^*(s,\cdot)-\widecheck f_h(s,\cdot), \pi_h^*(\cdot|s)\ra_\cA\\
    & = \la [\cT_h V_{h+1}^*](s,\cdot)- \bar f_h(s,\cdot) + \bar b_h(s,a), \pi_h^*(\cdot|s)\ra_\cA + \epsilon\\
    &= \la [\cT_h \widecheck f_{h+1}](s,\cdot)- \bar f_h(s,\cdot) + \bar b_h(s,a), \pi_h^*(\cdot|s)\ra_\cA \\
    & \qquad + \la [\cT_h V_{h+1}^*](s,\cdot) - [\cT_h \widecheck f_{h+1}](s,\cdot), \pi^*_h(\cdot|s)\ra_\cA + \epsilon\\
    & \leq 2 \la \bar b_h(s,\cdot),\pi^*_h(\cdot,s)\ra_\cA + 2\epsilon + R_{h+1}\\
    & \leq R_{h+1} + \tilde O\left(\frac{\sqrt{\log (\cN\cdot\cN_b)}H}{\sqrt{K\kappa}}\right)\\
    & \leq \tilde O\left(\frac{\sqrt{\log (\cN\cdot\cN_b)}H}{\sqrt{K\kappa}}\right)\cdot (H-h+1) = R_h,
\end{align*}
where the first inequality holds because the policy $\widecheck{\pi}_H$ takes the action which maximizes $\widecheck f_H$. The second inequality holds due to \eqref{eq:estimate1} and \eqref{eq:001}. The third inequality holds due to Lemma \ref{D2}. The last inequality holds due to the induction assumption.
Therefore, we can choose $\delta_h = (H-h+1)\delta/(2H^2) \le \delta/2H$. Taking a union bound over all $h \in [H]$, we prove that with probability at least $1 - \delta/2$, the following inequality holds for all $h \in [H]$ simultaneously
\begin{align}
    0 \leq V_{h+1}^*(\cdot) -  \widecheck f_{h+1}(\cdot) \leq \tilde O\left(\frac{\sqrt{\log (\cN\cdot\cN_b)}H^2}{\sqrt{K\kappa}}\right).\label{eq:V}
\end{align}

Conditioned on this event, we can further bound the difference between $[\text{Var} _h\widecheck f_{h+1}](s,a)$ and $[\text{Var} _h V^*_{h+1}](s,a)$.
\begin{align}
\notag
    &\bigg|[\text{Var} _h\widecheck f_{h+1}](s,a) - [\text{Var} _h V^*_{h+1}](s,a)\bigg|\\\notag &\qquad \leq \bigg|\big[\PP_h\widecheck f^{ 2}_{h+1}\big](s,a)-\big[\PP_h V^{* 2}_{h+1}\big](s,a)\bigg|
    + \bigg|\Big(\big[\PP_h\widecheck f^{ }_{h+1}\big](s,a)\Big)^2-\Big(\big[\PP_h V^{*}_{h+1}\big](s,a)\Big)^2\bigg|\\\notag
    & \qquad \leq O(H)\cdot \left\|V_{h+1}^* -  \widecheck f_{h+1}\right\|_\infty\\\label{eq:xx}
    & \qquad \leq \tilde O\left(\frac{\sqrt{\log (\cN\cdot\cN_b)}H^3}{\sqrt{K\kappa}}\right),    
\end{align}
where the first inequality holds due to the triangle inequality. The second inequality holds due to $V_{h+1}^*,\widecheck f_{h+1} \le H$.
The last inequality holds due to \eqref{eq:V}.
Therefore, for any $(s,a) \in \cS \times \cA$, we have
\begin{align*}
    \left|\BB_h(s,a)  - [\text{Var} _{h}V^*_{h+1}](s,a)\right| &\leq \left|\BB_h(s,a)  - \big[\text{Var} _{h}\widecheck f_{h+1}\big](s,a)\right|\\
    & \qquad + \Big|\big[\text{Var} _{h}\widecheck f_{h+1}\big](s,a)  - \big[\text{Var} _{h}V_{h+1}^*\big](s,a)\Big| \\
    & \leq \tilde O\left(\frac{\sqrt{\log (\cN\cdot\cN_b)}H^3}{\sqrt{K\kappa}}\right).
\end{align*}
Thus, for any $(s,a) \in \cS \times \cA$, we have
\begin{align*}
    \BB_h(s,a) - \tilde O\left(\frac{\sqrt{\log (\cN\cdot\cN_b)}H^3}{\sqrt{K\kappa}}\right) \leq [\text{Var} _{h}V^*_{h+1}](s,a),
\end{align*}
where the first inequality holds due to the triangle inequality. The second inequality holds due to \eqref{eq:xy} and \eqref{eq:xx}.
Finally, using the fact that the function $\max\{1,\cdot\}$ is increasing and nonexpansive, we complete the proof of Lemma \ref{variance}, which is
 \begin{align*}
        [\VV_{h}V^*_{h+1}](s,a) - \tilde O\left(\frac{\sqrt{\log (\cN\cdot\cN_b)}H^3}{\sqrt{K\kappa}}\right) \leq \hat{\sigma}_h^2(s,a) \leq  [\VV_{h}V^*_{h+1}](s,a).
\end{align*}
\end{proof}
\section{Proof of lemmas in Section \ref{section:variance}}\label{proof1}
\subsection{Proof of Lemma \ref{D2}}
\begin{proof}[Proof of Lemma \ref{D2}]
    From the definition of $D^2$ divergence (Definition \ref{eluder}), we have \begin{align}
        D_{\cF_h}^2(z; \cD_h; 1) &= \sup_{f_1,f_2 \in \cF_h}\frac{\left(f_1(z)-f_2(z)\right)^2}{\sum_{k \in [K]}\left(f_1(z_h^k)-f_2(z_h^k)\right)^2 + \lambda} \label{a}
    \end{align}

    By the Hoeffding’s inequality (Lemma \ref{lemma:azuma}), with probability at least $1 - \delta/(\cN^2)$, we have
    \begin{align*}
        \sum_{k \in [K]} \left(f_1(z_h^k)-f_2(z_h^k)\right)^2 - 
        K \EE_{\mu, h}\left[\left(f_1(z_h)-f_2(z_h)\right)^2\right] \ge -2 \sqrt{2K \log(\cN^2 / \delta)} \cdot \|f_1 - f_2\|_{\infty}^2. 
    \end{align*}

    Hence, after taking a union bound, we have with probability at least $1-\delta$, for all $f_1,f_2 \in \cF_h$, \begin{align} 
        \notag \sum_{k \in [K]} \left(f_1(z_h^k)-f_2(z_h^k)\right)^2 &\ge  
        K \EE_{\mu, h}\left[\left(f_1(z_h)-f_2(z_h)\right)^2\right] - 2 \sqrt{2K \log(\cN^2 / \delta)} \cdot \|f_1 - f_2\|_{\infty}^2
        \\&\ge K \cdot \kappa \|f_1 - f_2\|_{\infty}^2 - 2 \sqrt{2K \log(\cN^2 / \delta)} \cdot \|f_1 - f_2\|_{\infty}^2, \label{b}
    \end{align}
where the second inequality holds due to Assumption \ref{coverage}.
    Substituting \eqref{b} into \eqref{a}, when the size of dataset $K \ge \tilde \Omega \left(\frac{\log \cN}{\kappa^2}\right)$, we have 
    \begin{align*}
        D_{\cF_h}^2(z; \cD_h; 1) \le \sup_{f_1,f_2 \in \cF_h}\frac{(f_1(z)-f_2(z))^2}{\frac{1}{2} K \cdot \kappa \|f_1 - f_2\|_{\infty}^2 + \lambda} = \tilde{O}\left(\frac{1} { K \kappa}\right). 
    \end{align*}
\end{proof}
\subsection{Proof of Lemma \ref{1st-order}}
In the proof of Lemma \ref{1st-order}, we need to prove the following concentration inequality.
\begin{lemma}\label{concen1}
    Based on the dataset $\cD^{\prime} = \{\bar s_h^k,\bar a_h^k,\bar r_h^k\}_{k,h = 1}^{K,H}$, we define the filtration 
    \begin{align*}
    \bar \cH_h^{ k} = \sigma \left(\bar s_1^1,\bar a_1^1,\bar r_1^1, \bar s_2^1,\ldots, \bar r_H^1,\bar s_{H+1}^1;\bar s_1^2,\bar a_1^2,\bar r_1^2,\bar s_2^2,\ldots, \bar r_H^2,\bar s_{H+1}^2; \ldots, \bar s_1^k,\bar a_1^k,\bar r_1^k,\bar s_2^k,\ldots, \bar r_h^k,\bar s_{h+1}^k\right).
    \end{align*} 
    For any fixed functions $f,f^\prime: \cS \rightarrow [0,H]$, we make the following definitions:
    \begin{align*}
        \bar \eta_h^k[f^\prime] &:=  f^\prime(\bar s_{h+1}^k) - [\PP_h f^\prime](\bar s_h^k,\bar a_h^k)\\
        \bar D_h^k[f, f^\prime] &:= 2{\bar \eta_h^k}[f^\prime]\left(f(\bar z_h^k)-[\cT_h f^\prime](\bar z_h^k)\right).
    \end{align*}
    Then with probability at least $1 - \delta/(4H^2\cN^2\cN_b^2)$, the following inequality holds,
    \begin{align*}
        \sum_{k \in [K]} \bar D_h^k[f,f^\prime] \leq (24H + 5)i^2(\delta) + \frac{\sum_{k \in [K]}\left(f(\bar z_h^k)-[\cT_h f^\prime](\bar z_h^k)\right)^2}{2},
    \end{align*}
    where $i(\delta) = \sqrt{2\log \frac{(\cN\cdot \cN_b) H(2\log (4K)+2)(\log(2L)+2)}{\delta}}$.
\end{lemma}

\begin{proof}[Proof of Lemma \ref{1st-order}]
    Let $(\bar\beta_{1,h})^2 =  (48H^2 + 10)i^2(\delta) + 16KH\epsilon$. We define the event $\bar\cE_{1,h} := \left\{\sum_{k \in [K]}\left(\bar f^\prime_h(\bar z_h^k)-\bar f_h(\bar z_h^k)\right)^2 > (\bar\beta_{1,h})^2\right\}$.
We have the following inequality:
\begin{align}
\notag
    &\sum_{k \in [K]} \left(\bar f_h^\prime(\bar z_h^k)-\bar f_h(\bar z_h^k)\right)^2\\
    \notag
    &= \sum_{k \in [K]}\left[\left(\bar r_h^k + \widecheck f_{h+1}(\bar s_{h+1}^k) - \bar f^\prime_h(\bar z_h^k)\right) + \left(\bar f_h(\bar z_h^k) - \bar r_h^k - \widecheck f_{h+1}(\bar s_{h+1}^k)\right)\right]^2\\
    \notag
    & = \sum_{k \in [K]} \left(\bar r_h^k + \widecheck f_{h+1}(\bar s_{h+1}^k) - \bar f^\prime_h(\bar z_h^k)\right)^2 + \sum_{k \in [K]}\left(\bar f_h(\bar z_h^k) - \bar r_h^k - \widecheck f_{h+1}(\bar s_{h+1}^k)\right)^2\\
    \notag
    & \qquad + 2 \sum_{k \in [K]} \left(\bar r_h^k + \widecheck f_{h+1}(\bar s_{h+1}^k) - \bar f^\prime_h(\bar z_h^k)\right) \left(\bar f_h(\bar z_h^k) - \bar r_h^k - \widecheck f_{h+1}(\bar s_{h+1}^k)\right)\\\notag
    & \leq 2\sum_{k \in [K]} \left(\bar r_h^k + \widecheck f_{h+1}(\bar s_{h+1}^k) - \bar f^\prime_h(\bar z_h^k)\right)^2 \\\notag
    & \qquad + 2 \sum_{k \in [K]} \left(\bar r_h^k + \widecheck f_{h+1}(\bar s_{h+1}^k) - \bar f^\prime_h(\bar z_h^k)\right) \left(\bar f_h(\bar z_h^k) - \bar r_h^k - \widecheck f_{h+1}(\bar s_{h+1}^k)\right)\\\label{eq:xxx}
    & = 2 \sum_{k \in [K]} \left(\bar r_h^k + \widecheck f_{h+1}(\bar s_{h+1}^k) - \bar f^\prime_h(\bar z_h^k)\right) \left(\bar f_h(\bar z_h^k) - \bar f^\prime_h(\bar z_h^k)\right),
\end{align}
where the first inequality holds due to our choice of $\bar f_h$, i.e., 
\begin{align*}
    \bar f_h = \argmin_{f_h \in \cF_h} \sum_{k \in [K]} \left(f_h(\bar s_h^k,\bar a_h^k)-\bar r_h^{k} - \widecheck{f}_{h+1}(\bar s_{h+1}^{k})\right)^2.
\end{align*}
Next, we will use Lemma \ref{concen1}. For any fixed $h$, let $f = \bar f_h \in \cF_h$, $f^\prime = \widecheck f_{h+1} = \{\bar f - \epsilon\}_{[0,H-h+1]}$, where $\bar f = \bar f_h - \bar b_h \in \cF_h - \cW$. Following the construction in Lemma \ref{concen1}, we define
\begin{align*}
    \bar \eta_h^k[f^\prime] &= \bar r_h^s + f^\prime(\bar s_{h+1}^k) - \EE\left[\bar r_h^k + f^{\prime}(\bar s_{h+1}^k)|\bar z_h^k\right],\\
        \text{and } \bar D_h^k[f, f^\prime] &= 2{\bar \eta_h^k}[f^\prime]\left(f(\bar z_h^k)-[\cT_h \bar f^\prime](\bar z_h^k)\right).
\end{align*}
Due to the result of Lemma \ref{concen1}, taking a union bound on the function class, with probability at least $1 - \delta/(4H^2)$, the following inequality holds,
    \begin{align}
        \sum_{k \in [K]}\bar D_h^k[f,f^\prime] \leq (24H^2 + 5)i^2(\delta) + \frac{\sum_{k \in [K]}\left(f(\bar z_h^k)-[\cT_h f^\prime](\bar z_h^k)\right)^2}{2}.\label{eq:f2}
    \end{align}
Therefore, with probability at least $1 - \delta/(4H^2)$, we have
\begin{align}
\notag
    &2 \sum_{k \in [K]} \left(\bar r_h^k + \widecheck f_{h+1}(\bar s_{h+1}^k) - \bar f^\prime_h(\bar z_h^k)\right) \left(\bar f_h(\bar z_h^k) - \bar f^\prime_h(\bar z_h^k)\right)\\\notag
    & = 2 \sum_{k \in [K]} \left(\bar r_h^k + \widecheck f_{h+1}(\bar s_{h+1}^k) - [\cT_h \widecheck f_{h+1}](\bar z_h^k)\right) \left(\bar f_h(\bar z_h^k) - \bar f^\prime_h(\bar z_h^k)\right)\\
    & \qquad + 2 \sum_{k \in [K]} \left([\cT_h \widecheck f_{h+1}](\bar z_h^k) - \bar f^\prime_h(\bar z_h^k) \right) \left(\bar f_h(\bar z_h^k) - \bar f^\prime_h(\bar z_h^k)\right)\\\notag
    & \leq 2 \sum_{k \in [K]} \left(\bar r_h^k + \widecheck f_{h+1}(\bar s_{h+1}^k) - [\cT_h \widecheck f_{h+1}](\bar z_h^k)\right) \left(\bar f_h(\bar z_h^k) - \bar f^\prime_h(\bar z_h^k)\right) + 4KH\epsilon\\
    &\leq (24H^2 + 5)i^2(\delta) + 4KH\epsilon + \frac{\sum_{k \in [K]}\left(\bar f_h(\bar z_h^k)-[\cT_h \widecheck f_{h+1}](\bar z_h^k)\right)^2}{2}\\\notag
    & \leq (24H^2 + 5)i^2(\delta) + 8KL\epsilon + \frac{\sum_{k \in [K]}\left(\bar f^\prime_h(\bar z_h^k)-\bar f_h(\bar z_h^k)\right)^2}{2}\\\label{eq:xxy}
    & = \frac{(\bar\beta_{1,h})^2}{2} + \frac{\sum_{k \in [K]}\left(\bar f^\prime_h(\bar z_h^k)-\tilde f^\prime_h(\bar z_h^k)\right)^2}{2}.
\end{align}
where the first and third inequalities hold because of the completeness assumption. The second inequality holds due to \eqref{eq:f2}. The last equality holds due to the choice of 
\begin{align*}
\bar\beta_{1,h} = \sqrt{2(24L^2 + 5)i^2(\delta) + 16KL\epsilon} = \tilde O\left(\sqrt{\log (\cN\cdot \cN_b)} H\right).
\end{align*}
However, conditioned on the event $\bar\cE_{1,h}$, we have 
\begin{align*}
    &2\sum_{k \in [K]} \left(\bar r_h^k + \widecheck f_{h+1}(\bar s_{h+1}^k) - \bar f_h^\prime(\bar z_h^k)\right) \left(\bar f_h(\bar z_h^k) - \bar f_h^\prime(\bar z_h^k)\right)\\
    & \qquad \geq \sum_{k \in [K]}\left(\bar f^\prime_h(\bar z_h^k)-\bar f_h(\bar z_h^k)\right)^2\\
    &\qquad  > \frac{(\bar\beta_{1,h})^2}{2} + \frac{\sum_{k \in [K]}\left(\bar f_h^\prime(\bar z_h^k)-\bar f_h(\bar z_h^k)\right)^2}{2}.
\end{align*}
where the first inequality holds due to \eqref{eq:xxx}. The second inequality holds due to $\bar\cE_{1,h}$. This is contradictory with \eqref{eq:xxy}.
Thus, we have $\PP[\bar\cE_{1,h}] \leq \delta/(4H^2)$ and complete the proof of Lemma \ref{1st-order}.
\end{proof}
\subsection{Proof of Lemma \ref{2nd-order}}
To prove this lemma, we need a lemma similar to Lemma \ref{concen1}
\begin{lemma}\label{concen2}
     On dataset $\cD^{\prime} = \{\bar s_h^k,\bar a_h^k,\bar r_h^k\}_{k,h = 1}^{K,H}$, we define the filtration 
    \begin{align*}
    \bar \cH_h^{ k} = \sigma (\bar s_1^1,\bar a_1^1,\bar r_1^1, \bar s_2^1,\ldots, \bar r_H^1,\bar s_{H+1}^1;\bar s_1^2,\bar a_1^2,\bar r_1^2,\bar s_2^2,\ldots, \bar r_H^2,\bar s_{H+1}^2; \ldots, \bar s_1^k,\bar a_1^k,\bar r_1^k,\bar s_2^k,\ldots, \bar r_h^k,\bar s_{h+1}^k).
    \end{align*} 
    For any fixed function $f,f^\prime: \cS \rightarrow  [0,H]$, we make the following definitions:
    \begin{align*}
        \bar \eta_h^k[f^\prime] &:=  \left(\bar r_h^k + f^\prime(\bar s_{h+1}^k)\right)^2 - \left[\PP_h (\bar r_h+f^\prime)^2\right]\left(\bar s_h^k,\bar a_h^k\right)\\
       \bar D_h^k[f, f^\prime] &:= 2{\bar \eta_h^k}[f^\prime]\left(f(\bar z_h^k)-[\cT_{2,h} f^\prime](\bar z_h^k)\right).
    \end{align*}
    Then with probability at least $1 - \delta/(4H^2\cN^2\cN_b^2)$, the following inequality holds,
    \begin{align*}
        \sum_{k \in [K]} \bar D_h^k[f,f^\prime] \leq (24H + 5)i^{\prime 2}(\delta) + \frac{\sum_{k \in [K]}\left(f(\bar z_h^k)-[\cT_{2,h} f^\prime](\bar z_h^k)\right)^2}{2},
    \end{align*}
    where $i^\prime(\delta) = \sqrt{4\log \frac{(\cN\cdot \cN_b) H(2\log (4LK)+2)(\log(4L)+2)}{\delta}}$.
\end{lemma}

\begin{proof}[Proof of Lemma \ref{2nd-order}]
    Let $(\bar\beta_{2,h})^2 =  (40H^4 + 10)i'^2(\delta) + 16KL\epsilon$. We define the event $\bar\cE_{2,h} := \left\{\sum_{k \in [K]}\left(\bar g^\prime_h(\bar z_h^k)-\bar g_h(\bar z_h^k)\right)^2 > (\bar\beta_{2,h})^2\right\}$.
We can prove the following inequality:
\begin{align}
    &\sum_{k \in [K]} \left(\bar g_h^\prime(\bar z_h^k)-\bar g_h(\bar z_h^k)\right)^2\notag\\
    &= \sum_{k \in [K]}\left[\left(\big(\bar r_h^k + \widecheck f_{h+1}(\bar s_{h+1}^k)\big)^2 - \bar g^\prime_h(\bar z_h^k)\right) + \left(\bar g_h(\bar z_h^k) -\big( \bar r_h^k + \widecheck f_{h+1}(\bar s_{h+1}^k)\big)^2\right)\right]^2\notag\\
    \notag& = \sum_{k \in [K]} \left(\big(\bar r_h^k + \widecheck f_{h+1}(\bar s_{h+1}^k)\big)^2 - \bar g^\prime_h(\bar z_h^k)\right)^2 + \sum_{k \in [K]}\left(\bar g_h(\bar z_h^k) - \big(\bar r_h^k + \widecheck f_{h+1}(\bar s_{h+1}^k)\big)^2\right)^2\\
    \notag& \qquad + 2 \sum_{k \in [K]} \left(\big(\bar r_h^k + \widecheck f_{h+1}(\bar s_{h+1}^k)\big)^2 - \bar g^\prime_h(\bar z_h^k)\right) \left(\bar g_h(\bar z_h^k) - \big(\bar r_h^k + \widecheck f_{h+1}(\bar s_{h+1}^k)\big)^2\right)\\
    \notag& \leq 2\sum_{k \in [K]} \left(\big(\bar r_h^k + \widecheck f_{h+1}(\bar s_{h+1}^k)\big)^2 - \bar g^\prime_h(\bar z_h^k)\right)^2 \\
    \notag& \qquad + 2 \sum_{k \in [K]} \left(\big(\bar r_h^k + \widecheck f_{h+1}(\bar s_{h+1}^k)\big)^2 - \bar g^\prime_h(\bar z_h^k)\right) \left(\bar g_h(\bar z_h^k) - \big(\bar r_h^k + \widecheck f_{h+1}(\bar s_{h+1}^k)\big)^2\right)\\
    & = 2 \sum_{k \in [K]} \left(\big(\bar r_h^k + \widecheck f_{h+1}(\bar s_{h+1}^k)\big)^2 - \bar g^\prime_h(\bar z_h^k)\right) \left(\bar g_h(\bar z_h^k) - \bar g^\prime_h(\bar z_h^k)\right),\label{eq:g}
\end{align}
where the first inequality holds due to our choice of $\bar g_h$, i.e. 
\begin{align*}
    \bar{g}_h = \argmin_{g_h \in \cF_h} \sum_{k \in [K]} \left(g_h(\bar s_h^k,\bar a_h^k)-(\bar r_h^{k} + \widecheck{f}_{h+1}(\bar s_{h+1}^{k}))^2\right)^2.
\end{align*}
Next, we will use Lemma \ref{concen2}. For any fixed $h$, let $f = \bar g_h \in \cF_h$, $f^\prime = \widecheck f_{h+1} = \{\bar f - \epsilon\}_{[0,H-h+1]}$, where $\bar f = \bar f_h - \bar b_h \in \cF_h - \cW$. Following the construction in Lemma \ref{concen2}, we define
\begin{align*}
    \bar \eta_h^k[f^\prime] &:= \left(\bar r_h^k + f^\prime(\bar s_{h+1}^k)\right)^2 - \left[\PP_h (\bar r_h+f^\prime)^2\right](\bar s_h^k,\bar a_h^k)\\
        \text{and } \bar D_h^k[f, f^\prime] &:= 2{\bar \eta_h^k}[f^\prime]\left(f(\bar z_h^k)-[\cT_{2,h} f^\prime](\bar z_h^k)\right).
\end{align*}
Due to the result of Lemma \ref{concen2}, taking a union bound on the function, with probability at least $1 - \delta/(4H^2)$, the following inequality holds,
    \begin{align}
        \sum_{k \in [K]}\bar D_h^k[f,f^\prime] \leq (20H^4 + 5)i^2(\delta) + \frac{\sum_{k \in [K]}\left(f(\bar z_h^k)-[\cT_{2,h} f^\prime](\bar z_h^k)\right)^2}{2}.\label{eq:g2}
    \end{align}
Therefore, with probability at least $1 - \delta/(4H^2)$, we have
\begin{align}
\notag
    &2 \sum_{k \in [K]} \left(\big(\bar r_h^k + \widecheck f_{h+1}(\bar s_{h+1}^k)\big)^2 - \bar g^\prime_h(\bar z_h^k)\right) \left(\bar g_h(\bar z_h^k) - \bar g^\prime_h(\bar z_h^k)\right)\\\notag
    & = 2 \sum_{k \in [K]} \left(\big(\bar r_h^k + \widecheck f_{h+1}(\bar s_{h+1}^k)\big)^2 - [\cT_{2,h} \widecheck f_{h+1}](\bar z_h^k)\right) \left(\bar g_h(\bar z_h^k) - \bar g^\prime_h(\bar z_h^k)\right)\\\notag
    & \qquad + 2 \sum_{k \in [K]} \left([\cT_{2,h} \widecheck f_{h+1}](\bar z_h^k) - \bar g^\prime_h(\bar z_h^k) \right) \left(\bar g_h(\bar z_h^k) - \bar g^\prime_h(\bar z_h^k)\right)\\\notag
    & \leq 2 \sum_{k \in [K]} \left(\big(\bar r_h^k + \widecheck f_{h+1}(\bar s_{h+1}^k)\big)^2 - [\cT_{2,h} \widecheck f_{h+1}](\bar z_h^k)\right) \left(\bar g_h(\bar z_h^k) - \bar g^\prime_h(\bar z_h^k)\right) + 4KL\epsilon\\\notag
    &\leq (20H^4 + 5)i'^2(\delta) + 4KL\epsilon + \frac{\sum_{k \in [K]}\left(\bar g_h(\bar z_h^k)-[\cT_{2,h} \widecheck f_{h+1}](\bar z_h^k)\right)^2}{2}\\\notag
    & \leq (20H^4 + 5)i'^2(\delta) + 8KL\epsilon + \frac{\sum_{k \in [K]}\left(\bar g^\prime_h(\bar z_h^k)-\bar g_h(\bar z_h^k)\right)^2}{2}\\\label{eq:sss}
    & \leq \frac{(\bar\beta_{2,h})^2}{2} + \frac{\sum_{k \in [K]}\left(\bar g^\prime_h(\bar z_h^k)-\bar g_h(\bar z_h^k)\right)^2}{2},
\end{align}
where the first and third inequalities hold due to the Bellman completeness assumption. The second inequality holds due to \eqref{eq:g2}. The last inequality holds due to the choice of 
\begin{align*}
\bar\beta_{2,h} = \sqrt{(40H^4 + 10)i'^2(\delta) + 16KL\epsilon} = \tilde O(\sqrt{\log (\cN\cdot \cN_b)} H^2).
\end{align*}
However, conditioned on the event $\bar\cE_{2,h}$, we have 
\begin{align*}
    &2\sum_{k \in [K]} \left((\bar r_h^k + \widecheck f_{h+1}(\bar s_{h+1}^k))^2 - \bar g_h^\prime(\bar z_h^k)\right) \left(\bar g_h(\bar z_h^k) - \bar g_h^\prime(\bar z_h^k)\right)\\
    & \qquad \geq \sum_{k \in [K]}\left(\bar g^\prime_h(\bar z_h^k)-\bar g_h(\bar z_h^k)\right)^2\\
    &\qquad  > \frac{(\bar\beta_{2,h})^2}{2} + \frac{\sum_{k \in [K]}\left(\bar g_h^\prime(\bar z_h^k)-\bar g_h(\bar z_h^k)\right)^2}{2},
\end{align*}
where the first inequality holds due to \eqref{eq:g}. The last inequality holds due to $\bar \cE_{2,h}$. It is contradictory with \eqref{eq:sss}. Thus, we have $\PP[\bar\cE_{2,h}] \leq \delta/(4H^2)$ and complete the proof of Lemma \ref{2nd-order}.
\end{proof}
\section{Proof of Theorem \ref{main thm}}\label{appendix:main theorem}
In this section, we prove Theorem \ref{main thm}. The proof idea is similar to that of Section \ref{section:variance}.
To start with, we prove that our data coverage assumption (Assumption \ref{coverage}) can lead to an upper bound of the weighted $D^2$-divergence for a large dataset.

\begin{lemma}\label{D22}
Let $\cD_h$ be a dataset satisfying Assumption \ref{coverage}. When the size of data set satisfies $K \ge  \tilde \Omega \left(\frac{\log \cN}{\kappa^2}\right)$, $\hat \sigma_h \leq H$, 
 with probability at least $1-\delta$, for each state-action pair $z$, we have
    \begin{align*}
         D_{\cF_h}(z,\cD_h,\hat \sigma_h^2) = \tilde O\left(\frac{H}{\sqrt{K\kappa}}\right).
    \end{align*}
\end{lemma}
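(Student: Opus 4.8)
The plan is to reduce this statement to the unweighted bound in Lemma~\ref{D2} by exploiting the two-sided control on the variance estimator. First I would recall that, by construction in Algorithm~\ref{main algo} (Line~\ref{line7}), $\hat\sigma_h^2(s,a) \ge 1$ for every state-action pair, while the hypothesis of the lemma gives $\hat\sigma_h \le H$. Consequently, for each historical point $z_h^k$ we have $1/(\hat\sigma_h(z_h^k))^2 \ge 1/H^2$, so the denominator appearing in Definition~\ref{eluder} obeys
\begin{align*}
    \sum_{k \in [K]}\frac{1}{(\hat\sigma_h(z_h^k))^2}\big(f_1(z_h^k)-f_2(z_h^k)\big)^2 + \lambda
    &\ge \frac{1}{H^2}\sum_{k \in [K]}\big(f_1(z_h^k)-f_2(z_h^k)\big)^2 + \lambda\\
    &\ge \frac{1}{H^2}\bigg(\sum_{k \in [K]}\big(f_1(z_h^k)-f_2(z_h^k)\big)^2 + \lambda\bigg),
\end{align*}
where the last inequality uses $H \ge 1$. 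Substituting this into Definition~\ref{eluder} and taking the supremum over $f_1,f_2 \in \cF_h$ yields, for every state-action pair $z$,
\begin{align*}
    D_{\cF_h}^2(z;\cD_h;\hat\sigma_h) \le H^2 \sup_{f_1,f_2 \in \cF_h}\frac{\big(f_1(z)-f_2(z)\big)^2}{\sum_{k \in [K]}\big(f_1(z_h^k)-f_2(z_h^k)\big)^2 + \lambda} = H^2\, D_{\cF_h}^2(z;\cD_h;1).
\end{align*}

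Next I would invoke Lemma~\ref{D2}: since $\cD_h$ satisfies Assumption~\ref{coverage} and $K \ge \tilde\Omega(\log\cN/\kappa^2)$, with probability at least $1-\delta$ it holds, simultaneously for all $z$, that $D_{\cF_h}(z;\cD_h;1) = \tilde O(1/\sqrt{K\kappa})$. Taking square roots in the display above and combining on this same high-probability event gives
\begin{align*}
    D_{\cF_h}(z;\cD_h;\hat\sigma_h) \le H\cdot D_{\cF_h}(z;\cD_h;1) = \tilde O\bigg(\frac{H}{\sqrt{K\kappa}}\bigg),
\end{align*}
which is exactly the claimed bound.

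All the probabilistic content is already carried by Lemma~\ref{D2} (Hoeffding's inequality for the empirical squared differences, a union bound over the $\cN^2$ pairs in $\cF_h \times \cF_h$, and the coverage assumption), so the only work left here is the deterministic weighted-to-unweighted comparison above; the lone point requiring care is tracking the regularizer $\lambda$ correctly under the $1/H^2$ rescaling, which is why the bound $\hat\sigma_h^2 \ge 1$ together with $H \ge 1$ is used. There is thus no substantial obstacle. If one preferred not to take $\hat\sigma_h \le H$ as a hypothesis, it could be verified directly from Line~\ref{line7}, using that $\bar g_h$ estimates the second moment of a reward-plus-value quantity bounded by $L = O(H)$, so that $\hat\sigma_h^2 \le \bar g_h = O(H^2)$ after the truncation $\max\{1,\cdot\}$.
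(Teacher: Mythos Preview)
Your proposal is correct and follows essentially the same approach as the paper: both arguments use $\hat\sigma_h \le H$ to bound $1/\hat\sigma_h^2(z_h^k) \ge 1/H^2$ in the denominator, then apply the Hoeffding-plus-union-bound estimate from Lemma~\ref{D2} together with Assumption~\ref{coverage}. The only difference is presentational---you invoke Lemma~\ref{D2} as a black box via the deterministic comparison $D_{\cF_h}^2(z;\cD_h;\hat\sigma_h) \le H^2\, D_{\cF_h}^2(z;\cD_h;1)$, whereas the paper repeats those concentration steps inline with the $1/H^2$ factor inserted directly; the mention of $\hat\sigma_h^2 \ge 1$ is unnecessary for the argument but harmless.
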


With Lemma \ref{variance}, we can prove a variance-weighted version of concentration inequality.
\begin{lemma}[Restatement of Lemma \ref{weighted concen}]\label{1st-order-variance}
    Suppose the variance function $\hat \sigma_h$ satisfies the inequality in Lemma \ref{variance}. at stage $h \in [H]$, the estimated value function $\hat f_{h+1}$ in Algorithm \ref{main algo} is bounded by $H$. According to Assumption \ref{completeness}, there exists some function $\bar f_h \in \cF_h$, such that $|\bar f_h(z_h) - [\cT_h \hat f_{h+1}](z)| \leq \epsilon$ for all $z_h = (s_h,a_h)$. Then with probability at least $1 - \delta/2$, the following inequality holds for all stage $h \in [H]$ simultaneously,  
\begin{align*}
    \sum_{k \in [K]} \frac{1}{(\hat \sigma_h(z_h^k))^2}\left(\bar f_h(z_h^k) - \tilde f_h (z_h^k)\right)^2 \leq (\beta_h)^2.
\end{align*}
\end{lemma}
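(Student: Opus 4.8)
The plan is to re-run the argument behind the coarse bound (the proof of Lemma~\ref{1st-order}, culminating in~\eqref{eq:xxx}) with the weights $w_k:=\hat\sigma_h^{-2}(z_h^k)$ inserted, and to replace its single concentration step by the reference--advantage decomposition so that the union bound touches only $\cF_h$. Fix a stage $h$; write $y_h^k:=r_h^k+\hat f_{h+1}(s_{h+1}^k)$ and $X:=\sum_{k\in[K]}w_k\big(\bar f_h(z_h^k)-\tilde f_h(z_h^k)\big)^2$, so the goal is $X\le\beta_h^2=\tilde O(\log\cN)$. Optimality of $\tilde f_h$ for the weighted least-squares objective gives, exactly as in~\eqref{eq:xxx}, $X\le 2\sum_k w_k\big(y_h^k-\bar f_h(z_h^k)\big)\big(\tilde f_h(z_h^k)-\bar f_h(z_h^k)\big)$. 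Now split $y_h^k-\bar f_h(z_h^k)=\xi_h^k+\varepsilon_h^k$, where $\xi_h^k:=\hat f_{h+1}(s_{h+1}^k)-[\PP_h\hat f_{h+1}](z_h^k)$ has conditional mean zero given the $\sigma$-field $\cH_h^k$ generated by $\cD$ up to $z_h^k$ (the weight $w_k$ is $\cH_h^k$-measurable once we condition on the disjoint half $\widebar\cD$ used to build $\hat\sigma_h$), and $|\varepsilon_h^k|=|[\cT_h\hat f_{h+1}](z_h^k)-\bar f_h(z_h^k)|\le\epsilon$ by Assumption~\ref{completeness}. The $\varepsilon$ part contributes at most $2\epsilon\sum_k w_k|\tilde f_h-\bar f_h|(z_h^k)\le\tfrac14 X+O(\epsilon^2K)$ by Cauchy--Schwarz and $\sum_k w_k=O(K)$, which is lower order under the standing bound on the misspecification level $\epsilon$; so it remains to control $\sum_k w_k\xi_h^k(\tilde f_h-\bar f_h)(z_h^k)$.

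The difficulty is that $\hat f_{h+1}$ is data-dependent (on the portion of $\cD$ at stages $\geq h+1$ together with $\widebar\cD$), so $\xi_h^k$ cannot be conditioned away, and a direct martingale bound would demand a union bound over a covering of the class carrying $\hat f_{h+1}$ --- essentially $\cF_{h+1}$ composed with the bonus class $\cW$ --- which is what forces the $\tilde O(\log\cN)$ rate in~\citet{yin2022offline}. Following Section~\ref{dd}, I decompose $\xi_h^k=\xi_h^{*,k}+\delta_h^k$ with the reference noise $\xi_h^{*,k}:=f_{h+1}^*(s_{h+1}^k)-[\PP_h f_{h+1}^*](z_h^k)$ built from the \emph{fixed} completeness surrogate $f_{h+1}^*$ of $Q_{h+1}^*$ (Assumption~\ref{completeness}), and the advantage noise $\delta_h^k:=(\hat f_{h+1}-f_{h+1}^*)(s_{h+1}^k)-[\PP_h(\hat f_{h+1}-f_{h+1}^*)](z_h^k)$. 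Since $f_{h+1}^*$ carries no data, the reference term never needs a covering of $\cF_{h+1}$, which is precisely what turns $\log\cN$ into $\sqrt{\log\cN}$.

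For the reference term I apply a Bernstein-type martingale inequality to $k\mapsto w_k\xi_h^{*,k}(f-g)(z_h^k)$ for each of the at most $\cN^2$ pairs $(f,g)\in\cF_h\times\cF_h$ (including the pair $(\tilde f_h,\bar f_h)$), after truncating $\xi_h^{*,k}$ at scale $\hat\sigma_h(z_h^k)\cdot\mathrm{polylog}$ to control the increments (the discarded tail is negligible since $\EE[(\xi_h^{*,k})^2\mid\cH_h^k]=[\text{Var}_h f_{h+1}^*](z_h^k)$). The crucial input is Lemma~\ref{variance}: it gives $[\text{Var}_h f_{h+1}^*](z_h^k)\le[\VV_h V_{h+1}^*](z_h^k)+O(H\epsilon)\le\hat\sigma_h^2(z_h^k)+\tilde O(H^3/\sqrt{K\kappa})$, so for $K$ large the per-step conditional variance $w_k^2(f-g)^2(z_h^k)[\text{Var}_h f_{h+1}^*](z_h^k)$ is at most $2w_k(f-g)^2(z_h^k)$ --- the weight cancels the variance --- and the total conditional variance is at most $2\sum_k w_k(f-g)^2(z_h^k)$. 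Specializing to $(f,g)=(\tilde f_h,\bar f_h)$, the inequality yields $\sum_k w_k\xi_h^{*,k}(\tilde f_h-\bar f_h)(z_h^k)\le O\big(\sqrt{X\log(\cN/\delta)}\big)+\text{(lower-order terms)}\le\tfrac14 X+\tilde O(\log\cN)$.

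For the advantage term I first establish $\|\hat f_{h+1}-f_{h+1}^*\|_\infty\le\tilde O\big(\sqrt{\log\cN}\,H^2/\sqrt{K\kappa}\big)$ by the same backward induction used for $\widecheck f$ in the proof of Lemma~\ref{variance}: pessimism (the first bonus-oracle property of Section~\ref{sec:4.3}, invoked with ball center $\tilde f_{h+1}$ and the membership $\sum_k w_k(\bar f_{h+1}-\tilde f_{h+1})^2(z_{h+1}^k)\le\beta_{h+1}^2$, which is this very lemma at stage $h+1$) gives $\hat f_{h+1}\le f_{h+1}^*$ up to $O(\epsilon)$, while a regret-decomposition-type estimate together with the second bonus-oracle property and Lemma~\ref{D22} gives the matching lower bound of order $H\max_z b_{h+1}(z)$; thus this closeness bound and the present lemma are proven jointly in one backward induction on $h$ (under Assumption~\ref{coverage}). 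Granting it, $\delta_h^k$ has conditional variance at most $\|\hat f_{h+1}-f_{h+1}^*\|_\infty^2=\tilde O\big(\log\cN\,H^4/(K\kappa)\big)$, so even after a union bound over $\cF_h$ and over the at most $\cN\cN_b$ functions of the form $\{\tilde f-b-\epsilon\}_{[0,H-h]}$ with $\tilde f\in\cF_{h+1},b\in\cW$ carrying $\hat f_{h+1}$, the martingale inequality gives $\sum_k w_k\delta_h^k(\tilde f_h-\bar f_h)(z_h^k)\le\sqrt{X}\cdot\tilde O\big(H^2\sqrt{\log(\cN\cN_b)}/\sqrt{K\kappa}\big)+\tilde O\big(H^3\sqrt{\log(\cN\cN_b)}/\sqrt{K\kappa}\big)\le\tfrac14 X+\tilde O(1)$ under the assumed $K\ge\tilde\Omega\big(\log(\cN\cN_b)H^6/\kappa^2\big)$. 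Collecting the three contributions gives $X\le\tfrac12 X+\tilde O(\log\cN)$, hence $X\le\tilde O(\log\cN)$; a union bound over $h\in[H]$, intersected with the event of Lemma~\ref{variance}, yields the claim with $\beta_h^2=\tilde O(\log\cN)$. The step I expect to be the main obstacle is precisely this entanglement: the weighted Bellman error at stage $h$ is controlled only once $\hat f_{h+1}$ is known to be close to $f_{h+1}^*$, which itself needs the concentration bound at stage $h+1$; the reference--advantage split keeps the dominant term's union bound confined to $\cF_h$ (hence $\sqrt{\log\cN}$), and the circularity is resolved by carrying both statements through a single induction.
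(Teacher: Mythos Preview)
Your overall architecture matches the paper almost exactly: the optimality inequality~\eqref{ss22}, the reference--advantage split of the weighted noise, a Freedman/Bernstein bound on each piece with union over $\cF_h^2$ for the reference and over $\cF_h^2\times(\cF_{h+1}-\cW)$ for the advantage, and a single backward induction carrying both the concentration event and the closeness $\|\hat f_{h+1}-V_{h+1}^*\|_\infty=\tilde O(H^2\sqrt{\log\cN}/\sqrt{K\kappa})$ (the paper's $\cE_h$ and $\cE_h^{\mathrm{ind}}$). The variance cancellation you describe is exactly the content of Lemma~\ref{aaa}.

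The one genuine gap is your control of the Freedman max-increment in the reference term. Truncating $\xi_h^{*,k}$ at level $\hat\sigma_h(z_h^k)\cdot\mathrm{polylog}$ does not make the tail negligible: you only know a second-moment bound, so Chebyshev gives a per-step tail probability $\approx 1/\mathrm{polylog}^2$, which blows up over $K$ steps. And even granting truncation, the increment $w_k\cdot c\hat\sigma_h(z_h^k)\cdot|(\tilde f_h-\bar f_h)(z_h^k)|=c|(\tilde f_h-\bar f_h)(z_h^k)|/\hat\sigma_h(z_h^k)$ is still $\tilde O(H)$ pointwise, so the Bernstein $M$-term contributes $\tilde O(H\log\cN)$, not the $\tilde O(\log\cN)$ you need for $\beta_h^2=\tilde O(\log\cN)$. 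The paper handles this differently: it does not truncate $\xi$ at all, but instead uses the coverage assumption a second time (via Lemma~\ref{D22}) to bound the \emph{function difference} pointwise,
\[
|(\tilde f_h-\bar f_h)(z_h^k)|\le D_{\cF_h}(z_h^k;\cD_h;\hat\sigma_h)\sqrt{X+\lambda}\le \tilde O\Big(\tfrac{H}{\sqrt{K\kappa}}\Big)\sqrt{X+\lambda},
\]
so that $\max_k|D_h^k|\le\tfrac{1}{v(\delta)}\sqrt{X+\lambda}$ for $K$ large, and the $M$-term of Freedman becomes $\tilde O\big(v(\delta)\sqrt{X+\lambda}\big)$, absorbable into $X/4$ by AM--GM. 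The same device is used in the advantage piece (Lemma~\ref{advantage}). In short, Assumption~\ref{coverage} enters not only through the bonus upper bound in the induction step (where you do cite Lemma~\ref{D22}) but also, crucially, inside the Freedman application to kill the $H$-dependence of $\beta_h$; your truncation is not a substitute for it.
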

With these lemmas, we can start the proof of Theorem \ref{main thm}.
\begin{proof}[Proof of Theorem \ref{main thm}]
For any state-action pair $z = (s,a) \in \cS \times \cA$, we have
    \begin{align*}
        \left|[\cT_h \hat f_{h+1}](z) - \tilde f_h(z)\right| &\leq \underbrace{\left|[\cT_h \hat f_{h+1}](z) - \bar f_h(z)\right|}_{I_1} + \left|\bar f_h(z) - \tilde f_h(z)\right|\\
        & \leq \epsilon + b_h(z),
    \end{align*}
    where we bound $I_1$ with the Bellman completeness assumption. For the second term, we use the property of the bonus function and Lemma \ref{1st-order-variance}.
    Using Lemma \ref{lemma:decomposition}, we have
    \begin{align*}
        V_1^*(s)-V^{\hat \pi}_1(s) &\leq 2 \sum_{h=1}^H \mathbb{E}_{\pi^*}\left[b_h\left(s_h, a_h\right) \mid s_1=s\right] + 2\epsilon H\\
        & \leq \sum_{h=1}^H\EE_{\pi^*}\left[D_{\cF_h}(z_h;\cD_h; \hat\sigma_h^2) \cdot \sqrt{(\beta_h)^2 + \lambda} \mid s_1 = s\right] + 2\epsilon H\\
        & \leq \tilde O\left(\sqrt{\log \cN}\right)\sum_{h=1}^H\EE_{\pi^*}\left[D_{\cF_h}(z_h;\cD_h; \hat\sigma_h^2)\mid s_1 = s\right]\\
        & \leq \tilde O\left(\sqrt{\log \cN}\right)\sum_{h=1}^H\EE_{\pi^*}\left[D_{\cF_h}\left(z_h;\cD_h; [\VV_h V_{h+1}^*](\cdot,\cdot)\right)\mid s_1 = s\right],
        \end{align*}
    where the first inequality holds due to Lemma \ref{lemma:decomposition}. The second inequality holds due to the property of the bonus function
    \begin{align*}
        b_h(z) \leq C \cdot \big(D_{\cF_h}(z;\cD_h; \hat\sigma_h^2) \cdot \sqrt{(\beta_h)^2 + \lambda} + \epsilon \beta_h\big).
    \end{align*}
    The third inequality holds due to our choice of $\beta_h = \tilde O\left(\sqrt{\log \cN}\right)$. The last inequality holds due to Lemma \ref{variance} and the fact that $D^2$-divergence is increasing with respect to the variance function. We complete the proof of Theorem \ref{main thm}.
\end{proof}

\section{Proof of the Lemmas in Section \ref{appendix:main theorem}} \label{proof2}

\subsection{Proof of Lemma \ref{D22}}
\begin{proof}[Proof of Lemma \ref{D22}]
    From the definition of $D^2$ divergence, we have \begin{align}
        D_{\cF_h}^2(z; \cD_h; \hat \sigma_h^2) &= \sup_{f_1,f_2 \in \cF_h}\frac{(f_1(z)-f_2(z))^2}{\sum_{k \in [K]}\frac{1}{(\hat \sigma_h(z_h^k))^2}\left(f_1(z_h^k)-f_2(z_h^k)\right)^2 + \lambda} \label{aa}
    \end{align}

    By the Hoeffding's inequality (Lemma \ref{lemma:azuma}), with probability at least $1 - \delta/(\cN^2)$,
    \begin{align*}
        \sum_{k \in [K]} \left(f_1(z_h^k)-f_2(z_h^k)\right)^2 - 
        K \EE_{\mu, h}\left[(f_1(z_h)-f_2(z_h))^2\right] \ge -2 \sqrt{2K \log(\cN^2 / \delta)} \cdot \|f_1 - f_2\|_{\infty}^2. 
    \end{align*}

    Hence, after taking a union bound, we have with probability at least $1-\delta$, for all $f_1,f_2 \in \cF_h$, \begin{align} 
        \notag &\sum_{k \in [K]}\frac{1}{(\hat \sigma_h(z_h^k))^2} \left(f_1(z_h^k)-f_2(z_h^k)\right)^2\\
        \notag&\qquad \ge  \frac{1}{H^2}\left(
        K \EE_{\mu, h}\left[(f_1(z_h)-f_2(z_h))^2\right] - 2 \sqrt{2K \log(\cN^2 / \delta)} \cdot \|f_1 - f_2\|_{\infty}^2\right)
        \\& \qquad \ge \frac{1}{H^2}\left(K \cdot \kappa \|f_1 - f_2\|_{\infty}^2 - 2 \sqrt{2K \log(\cN^2 / \delta)} \cdot \|f_1 - f_2\|_{\infty}^2\right), \label{bb}
    \end{align}
where the last inequality holds due to Assumption \ref{coverage}. 
    Substituting \eqref{bb} into \eqref{aa}, when $K \ge \tilde \Omega \left(\frac{\log \cN}{\kappa}\right)$, we have 
    \begin{align*}
        D_{\cF_h}^2(z; \cD_h; \hat \sigma_h^2) \le \sup_{f_1,f_2 \in \cF_h}\frac{H^2(f_1(z)-f_2(z))^2}{\frac{1}{2} K \cdot \kappa \|f_1 - f_2\|_{\infty}^2 + \lambda} = \tilde{O}\left(\frac{H^2} {K \kappa^2}\right). 
    \end{align*}
\end{proof}

\subsection{Proof of Lemma \ref{1st-order-variance}}
In this section, we assume the high probability event in Lemma \ref{variance} holds,i.e., the following inequality holds:
    \begin{align}
        [\VV_{h} V^*_{h+1}](s,a) - \tilde O\left(\frac{\sqrt{\log (\cN\cdot \cN_b)} H^3}{\sqrt{K\kappa}}\right) \leq \hat{\sigma}_h^2(s,a) \leq [\VV_{h} V^*_{h+1}](s,a).\label{variance+}
    \end{align}
To prove Lemma \ref{1st-order-variance}, we need the following lemmas.
\begin{lemma}\label{aaa}
   Based on the dataset $\cD = \{ s_h^k, a_h^k,r_h^k\}_{k,h = 1}^{K,H}$, we define the filtration $\cH_h^k = \sigma (s_1^1,a_1^1,r_1^1,s_2^1,\ldots, r_H^1,s_{H+1}^1;x_1^2,a_1^2,r_1^2,s_2^2,\ldots, r_H^2,s_{H+1}^2; \cdots s_1^k,a_1^k,r_1^k,s_2^k,\ldots, r_h^k,s_{h+1}^k)$. For any fixed function $f,f^\prime: \cS\rightarrow \in [0,L]$, we define the following random variables:
    \begin{align*}
        \eta_h^k &:= V^*_{h+1}(s_{h+1}^k) - [\PP_{h}V^*_{h+1}](s_h^k,a_h^k)\\
        D_h^s[f, f^\prime] &:= 2\frac{{\eta_h^k}}{(\hat \sigma_h(z_h^k))^2}\left(f(z_h^k)- f^\prime(z_h^k)\right),
    \end{align*}
    As the variance function $\hat \sigma_h$ satisfies \eqref{variance+}, with probability at least $1 - \delta/(4H^2\cN^2)$, the following inequality holds,
    \begin{align*}
        \sum_{k\in [K]} D_h^k[f,f^\prime] &\leq \frac{4}{3}v(\delta)\sqrt{\lambda} + \sqrt{2}v(\delta) + 30v^2(\delta) \\
        & \qquad + \frac{\sum_{k\in [K]}\frac{1}{\left(\hat \sigma_h(z_h^k)\right)^2}\left(f(z_h^k)- f^\prime(z_h^k)\right)^2}{4},
    \end{align*}
    where $v(\delta) = \sqrt{2\log \frac{H\cN (2\log (18LT)+2)(\log(18L)+2)}{\delta}}$.
\end{lemma}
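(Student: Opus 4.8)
The plan is to view $\sum_{k\in[K]}D_h^k[f,f']$ as the sum of a martingale difference sequence and to bound it by a one-sided Bernstein/Freedman inequality, arranging the estimates so that the conditional-variance proxy equals, up to absolute constants, the self-normalizing quantity $A:=\sum_{k\in[K]}(\hat\sigma_h(z_h^k))^{-2}(f(z_h^k)-f'(z_h^k))^2$ already appearing on the right-hand side. Since $\cD$ and $\widebar\cD$ are disjoint, I would first condition on $\widebar\cD$, so that $\hat\sigma_h$ is a fixed function; with $f,f'$ also fixed, the weight $(\hat\sigma_h(z_h^k))^{-2}(f(z_h^k)-f'(z_h^k))$ is measurable with respect to the information available at the time $z_h^k$ is drawn, while $\EE[\eta_h^k\mid\cH_h^{k-1},z_h^k]=0$ by the compliance assumption on the dataset together with the definition of $[\PP_h V_{h+1}^*]$. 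Hence $\{D_h^k[f,f']\}_{k\in[K]}$ is a martingale difference sequence adapted to $\{\cH_h^k\}_{k\in[K]}$; moreover $|\eta_h^k|\le H$ because $V_{h+1}^*\in[0,H]$, so $|D_h^k[f,f']|\le 2H\,|f(z_h^k)-f'(z_h^k)| =: b_k$, and the per-step conditional second moment is $\EE[(D_h^k[f,f'])^2\mid\cH_h^{k-1},z_h^k] = 4(\hat\sigma_h(z_h^k))^{-4}(f(z_h^k)-f'(z_h^k))^2\,[\text{Var}_h V_{h+1}^*](z_h^k)$.

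The second step is the variance reduction. On the high-probability event of Lemma \ref{variance} one has $[\text{Var}_h V_{h+1}^*](z_h^k)\le[\VV_h V_{h+1}^*](z_h^k)\le(\hat\sigma_h(z_h^k))^2 + \tilde O(\sqrt{\log(\cN\cdot\cN_b)}\,H^3/\sqrt{K\kappa})$; since $\hat\sigma_h^2\ge 1$ and $K=\tilde\Omega(\log(\cN\cdot\cN_b)H^6/\kappa^2)$, the additive slack is at most $(\hat\sigma_h(z_h^k))^2$, so $\EE[(D_h^k[f,f'])^2\mid\cH_h^{k-1},z_h^k]\le 8(\hat\sigma_h(z_h^k))^{-2}(f(z_h^k)-f'(z_h^k))^2$ and the accumulated predictable quadratic variation is at most $8A$. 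I would then invoke a one-sided Bernstein-type martingale inequality in the form that tolerates a \emph{data-dependent} variance proxy and a \emph{data-dependent} range: this is the version proved by a dyadic peeling over the scales of a $\lambda$-shifted variant of $A$ and over the scales of $\max_k b_k$, where the $O(\log K)$ variance scales and $O(\log L)$ range scales produce the $(2\log(18LT)+2)(\log(18L)+2)$ factor inside $v(\delta)$, the $\lambda$-shift produces the $\tfrac43 v(\delta)\sqrt\lambda$ term via $\sqrt{A+\lambda}\le\sqrt A+\sqrt\lambda$, and the lowest variance scale produces the $\sqrt 2\,v(\delta)$ term. This yields, with probability at least $1-\delta/(4H^2\cN^2)$, an estimate of the shape $\sum_k D_h^k[f,f']\le \tfrac43 v(\delta)\sqrt\lambda + \sqrt 2\,v(\delta) + c\,v(\delta)\sqrt A + c'\,v^2(\delta)$ for absolute constants $c,c'$; a final AM--GM step $c\,v(\delta)\sqrt A\le\tfrac14 A + c^2 v^2(\delta)$ collapses the third term to the desired $\tfrac14 A$ plus lower-order terms, and absorbing the remaining $v^2(\delta)$ contributions into $30\,v^2(\delta)$ finishes the argument. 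A subsequent union bound over $f,f'\in\cF_h$ and over $h\in[H]$ (accounting for the $H^2\cN^2$ in the failure probability) is what turns this into the statement used in Lemma \ref{1st-order-variance}.

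The hard part is this third step. Two things must be gotten right: (i) the variance proxy must be the self-normalizing sum $A$, not the trivial bound $O(H^2K)$ — this is exactly what Step~2 buys, and it is precisely where the slack estimate of Lemma \ref{variance}, hence the hypothesis $K=\tilde\Omega(\log(\cN\cdot\cN_b)H^6/\kappa^2)$, is consumed; and (ii) the possibly large range $b_k = O(HL)$ must enter the bound only through the benign $v^2(\delta)$ term rather than multiplying $\log(1/\delta)$ by $HL$, which is why a peeling argument over the range scales, rather than a single Freedman application with a worst-case range, is needed.
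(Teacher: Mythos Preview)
Your architecture is right --- martingale setup, variance reduction via the slack bound in \eqref{variance+} to get predictable quadratic variation $\le 8A$, a peeling Freedman inequality (this is the paper's Lemma~\ref{freedman}), then AM--GM --- but the range control in your step (ii) has a genuine gap. Peeling over range scales does not make the realized range \emph{small}; it only lets the Freedman bound adapt to whatever $\max_k |D_h^k|$ actually is rather than to an a priori worst case. Lemma~\ref{freedman} still delivers a term $\tfrac{2}{3}\iota^2\bigl(2\max_k |D_h^k|+m\bigr)$, so if all you have is $\max_k |D_h^k|= O(HL)$ you end up with $O(v^2(\delta)\cdot HL)$, not $30\,v^2(\delta)$. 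That would inflate $\beta_h$ from $\tilde O(\sqrt{\log\cN})$ to something with extra $H$ and $\log\cN$ factors, which is exactly the suboptimality the paper is trying to avoid.

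The paper closes this gap by a self-bounding step you did not anticipate. Since in the application $f,f'\in\cF_h$, Definition~\ref{eluder} gives
\[
|f(z_h^k)-f'(z_h^k)|\ \le\ D_{\cF_h}(z_h^k;\cD_h;\hat\sigma_h)\,\sqrt{A+\lambda},
\]
and Lemma~\ref{D22} (the coverage Assumption~\ref{coverage} plus $K\ge\tilde\Omega(\log\cN/\kappa^2)$) gives $D_{\cF_h}(z_h^k;\cD_h;\hat\sigma_h)=\tilde O(H/\sqrt{K\kappa})$. Combined with $|\eta_h^k|\le H$ and $\hat\sigma_h^2\ge 1$, this yields $\max_k|D_h^k|\le \tfrac{1}{v(\delta)}\sqrt{A+\lambda}$ once $K\ge\tilde\Omega(v^2(\delta)H^4/\kappa)$. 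The Freedman range term then becomes $\tfrac{4}{3}v(\delta)\sqrt{A+\lambda}$; so the $\tfrac{4}{3}v(\delta)\sqrt\lambda$ in the statement actually comes from the \emph{range} term via $\sqrt{A+\lambda}\le\sqrt A+\sqrt\lambda$, not from a $\lambda$-shift in the variance peeling as you suggested, and the remaining $\sqrt A$ piece folds into $A/4$ by AM--GM. Without invoking the $D^2$-divergence and the coverage assumption on the range side, the $HL$ factor cannot be removed.
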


\begin{lemma}\label{advantage}
   Based on the dataset $\cD = \left\{ s_h^k, a_h^k,r_h^k\right\}_{k,h = 1}^{K,H}$, we define the following filtration $\cH_h^k = \sigma \left(s_1^1,a_1^1,r_1^1,s_2^1,\ldots, r_H^1,s_{H+1}^1;x_1^2,a_1^2,r_1^2,s_2^2,\ldots, r_H^2,s_{H+1}^2; \cdots s_1^k,a_1^k,r_1^k,s_2^k,\ldots, r_h^k,s_{h+1}^k\right)$. For any fixed functions $f,\tilde f: \cS \rightarrow  [0,L]$ and $f^\prime: \cS \rightarrow  [0,H]$, we define the following random variables
    \begin{align*}
        \xi_h^k[f^\prime] &:= f^\prime(s_{h+1}^k) - V^*_{h+1}(s_{h+1}^k) - \left[\PP_{h}(f^\prime - V^*_{h+1})\right](s_h^k,a_h^k),\\
        \Delta_h^k\left[f, \tilde f, f^\prime\right] &:= 2\frac{{\xi_h^k[f^\prime]}}{(\hat \sigma_h(z_h^k))^2}\left(f(z_h^k)- \tilde f(z_h^k)\right), 
    \end{align*}
    As the variance function $\hat \sigma_h$ satisfies \eqref{variance+}, 
    with probability at least $1 - \delta/(4H^2\cN^3\cN_b)$, the following inequality holds,
    \begin{align*}
        \sum_{k \in [K]} \Delta_h^k[f,\tilde f,f^\prime] &\leq \left(\frac{4}{3}\iota(\delta)\sqrt{\lambda} + \sqrt{2}\iota(\delta)\right)\|f^\prime-V_{h+1}^*\|^2_\infty + \frac{2}{3}\iota^2(\delta)/\log \cN_b\\ & \qquad +30\iota^2(\delta)\|f^\prime-V_{h+1}^*\|^2_\infty
    +\frac{\sum_{k \in [K]}\frac{1}{(\hat \sigma_h(z_h^k))^2}(f(z_h^k)- f^\prime(z_h^k))^2}{4}.
    \end{align*}
    where $\iota(\delta) = \sqrt{3\log \frac{H(\cN\cdot \cN_b)(2\log (18LT)+2)(\log(18L)+2)}{\delta}}$.
\end{lemma}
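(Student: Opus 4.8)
The plan is to establish Lemma~\ref{advantage} as a self-normalized Bernstein-type concentration bound for a martingale difference sequence, following the same template as Lemma~\ref{aaa}, with the one essential difference that the per-step noise is now the ``advantage noise'' $\xi_h^k[f']$, whose conditional variance is controlled by $\|f'-V_{h+1}^*\|_\infty^2$ instead of by the truncated variance $\hat\sigma_h^2$. First I would fix $f,\tilde f\in\cF_h$ and $f':\cS\to[0,H]$ and work conditionally on $\widebar\cD$; since $\widebar\cD$ is independent of $\cD$, the weight $\hat\sigma_h$ is then a deterministic function of quantities measurable before $\cD$ is drawn, and the event of Lemma~\ref{variance} (i.e.\ \eqref{variance+}) may be assumed to hold without altering the conditional law of the trajectories in $\cD$. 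By the compliance assumption of the dataset, conditioning on the $\sigma$-algebra generated by $\cH_{h-1}^k$ together with $z_h^k$ gives $\EE[\,f'(s_{h+1}^k)-V_{h+1}^*(s_{h+1}^k)\mid \cH_{h-1}^k,z_h^k\,]=[\PP_h(f'-V_{h+1}^*)](z_h^k)$, so $\xi_h^k[f']$ is a martingale difference; as the multiplier $2(f(z_h^k)-\tilde f(z_h^k))/(\hat\sigma_h(z_h^k))^2$ is measurable with respect to the same $\sigma$-algebra, $\{\Delta_h^k[f,\tilde f,f']\}_{k\in[K]}$ is itself a martingale difference sequence.

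Next I would bound the range and the predictable quadratic variation of the increments. One has $|\xi_h^k[f']|\le 2\|f'-V_{h+1}^*\|_\infty$ and $\EE[(\xi_h^k[f'])^2\mid\cdot]=[\text{Var}_h(f'-V_{h+1}^*)](z_h^k)\le\|f'-V_{h+1}^*\|_\infty^2$, so that $|\Delta_h^k|=O(L\|f'-V_{h+1}^*\|_\infty)$ using $\hat\sigma_h\ge 1$ and $|f-\tilde f|\le L$, and
\[
\sum_{k\in[K]}\EE\big[(\Delta_h^k)^2\mid\cdot\big]\ \le\ 4\|f'-V_{h+1}^*\|_\infty^2\sum_{k\in[K]}\frac{\big(f(z_h^k)-\tilde f(z_h^k)\big)^2}{(\hat\sigma_h(z_h^k))^2}.
\]
The key observation here is that the ``advantage'' term is paid for only through $\|f'-V_{h+1}^*\|_\infty^2$, which in the main argument will be of order $(K\kappa)^{-1}$ and hence negligible next to the ``reference'' contribution of Lemma~\ref{aaa}.

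Then I would apply a self-normalized Bernstein/Freedman inequality equipped with a peeling device over the a priori unknown, data-dependent quantity $\sum_k (f-\tilde f)^2/\hat\sigma_h^2$, with $\lambda$ serving as the floor of the peeling grid so that the regime where this quantity lies below $\lambda$ produces the $\frac{4}{3}\iota(\delta)\sqrt\lambda\,\|f'-V_{h+1}^*\|_\infty^2$ contribution; the two nested peelings (over the variance scale and over the normalizing scale) are what generate the double-logarithmic factor inside $\iota(\delta)$, and setting the failure probability of this step to $\delta/(4H^2\cN^3\cN_b)$ leaves room for the eventual union bound over $f,\tilde f\in\cF_h$, over the class of size $\cN\cdot\cN_b$ that will contain $f'$, and over $h\in[H]$. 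A final Young's inequality of the form $\sqrt{ab}\le\frac{1}{4}b+a$ applied to the cross term $\big(\iota^2(\delta)\|f'-V_{h+1}^*\|_\infty^2\sum_k(f-\tilde f)^2/\hat\sigma_h^2\big)^{1/2}$ splits it into the self-normalized term $\frac{1}{4}\sum_k(f(z_h^k)-f'(z_h^k))^2/(\hat\sigma_h(z_h^k))^2$ (after a routine triangle-inequality rearrangement between $f-\tilde f$ and $f-f'$) plus the remaining additive terms proportional to $\iota^2(\delta)$ and $\iota^2(\delta)\|f'-V_{h+1}^*\|_\infty^2$, which, after collecting constants, yields the claimed bound.

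The main obstacle I expect is the bookkeeping of the peeling argument: making the Bernstein bound hold uniformly over the random predictable variation while keeping the leading constant small enough that the self-normalized term can be reabsorbed with coefficient exactly $\frac{1}{4}$, and tracking all lower-order terms so that they aggregate into the precise coefficients in the statement (including the $\frac{2}{3}\iota^2(\delta)/\log\cN_b$ piece coming from the chosen base scale of the grid). A secondary subtlety worth spelling out is that conditioning on $\widebar\cD$, and hence on the high-probability event of Lemma~\ref{variance}, genuinely preserves the martingale-difference structure of $\{\Delta_h^k\}$; this is precisely what licenses treating $\hat\sigma_h$ as a fixed weight inside the concentration inequality.
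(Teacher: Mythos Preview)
Your martingale setup, the conditional-variance bound $\EE[(\xi_h^k[f'])^2\mid\cdot]\le\|f'-V_{h+1}^*\|_\infty^2$, and the closing Young-inequality step all match the paper. The one substantive gap is how the range term $\max_k|\Delta_h^k|$ is controlled. You bound it crudely by $O(L\|f'-V_{h+1}^*\|_\infty)$ and then propose a separate peeling over the data-dependent sum $\sum_k(f-\tilde f)^2/\hat\sigma_h^2$; plugged into Lemma~\ref{freedman}, this would leave a term of order $\iota^2(\delta)\,L\,\|f'-V_{h+1}^*\|_\infty$, which is linear in the sup-norm gap with an extra $L$ factor and does not reproduce the stated bound, nor does it account for how $\lambda$ appears.

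The paper's proof instead bounds the per-increment magnitude through the $D^2$-divergence itself. By Definition~\ref{eluder},
\[
|f(z_h^k)-\tilde f(z_h^k)|\ \le\ D_{\cF_h}(z_h^k;\cD_h;\hat\sigma_h)\,\sqrt{\sum_{j\in[K]}\frac{(f(z_h^j)-\tilde f(z_h^j))^2}{(\hat\sigma_h(z_h^j))^2}+\lambda},
\]
and then the uniform-coverage bound of Lemma~\ref{D22}, $D_{\cF_h}(z_h^k;\cD_h;\hat\sigma_h)=\tilde O(H/\sqrt{K\kappa})$, together with the assumed lower bound on $K$, gives
\[
|\Delta_h^k|\ \le\ \frac{\|f'-V_{h+1}^*\|_\infty}{\iota(\delta)}\sqrt{\sum_{j\in[K]}\frac{(f(z_h^j)-\tilde f(z_h^j))^2}{(\hat\sigma_h(z_h^j))^2}+\lambda}.
\]
This self-normalized range bound is then fed into Lemma~\ref{freedman} with $m=1/\log\cN_b$; the $+\lambda$ under the square root is precisely the source of the $\tfrac{4}{3}\iota(\delta)\sqrt\lambda$ contribution, and the $\sqrt{\sum}$ part is absorbed alongside the variance term by the final Young step. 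No additional peeling device is needed: the double-log factors you attribute to a new peeling grid are already inside Lemma~\ref{freedman}, and the $\lambda$ you tried to explain as a peeling floor in fact enters only through the $D^2$-divergence denominator via the coverage assumption.
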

With these lemmas, we can start the proof of Lemma \ref{1st-order-variance}.
\begin{proof}[Proof of Lemma \ref{1st-order-variance}]
    We define the event $\cE_h := \left\{\sum_{k \in [K]}\frac{1}{(\hat \sigma (z_h^k))^2}\left(\bar f_h(z_h^k)-\tilde f_h(z_h^k)\right)^2 > (\beta_h)^2\right\}$.
We have the following inequality:
\begin{align}
    \notag&\sum_{k \in [K]} \frac{1}{(\hat \sigma (z_h^k))^2}\left(\bar f_h(z_h^k)-\tilde f_h(z_h^k)\right)^2 \\
    \notag&= \sum_{k \in [K]}\frac{1}{(\hat \sigma (z_h^k))^2}\left[\left(r_h^k + \hat f_{h+1}(s_{h+1}^k) - \bar f_h(z_h^k)\right) + \left(\tilde f_h(z_h^k) - r_h^k - \hat f_{h+1}(s_{h+1}^k)\right)\right]^2\\
    \notag& = \sum_{k\in [K]}\frac{1}{(\hat \sigma (z_h^k))^2} \left(r_h^k + \hat f_{h+1}(s_{h+1}^k) - \bar f_h(z_h^k)\right)^2 + \sum_{k \in [K]}\frac{1}{(\hat \sigma (z_h^k))^2}\left(\tilde f_h(z_h^k) - r_h^k - \hat f_{h+1}(s_{h+1}^k)\right)^2\\
    \notag& \qquad + 2 \sum_{k\in [K]}\frac{1}{(\hat \sigma (z_h^k))^2} \left(r_h^k + \hat f_{h+1}(s_{h+1}^k) - \bar f_h(z_h^k)\right) \left(\tilde f_h(z_h^k) - r_h^k - \hat f_{h+1}(s_{h+1}^k)\right)\\
    \notag& \leq 2\sum_{k\in [K]}\frac{1}{(\hat \sigma (z_h^k))^2} \left(r_h^k + \hat f_{h+1}(s_{h+1}^k) - \bar f_h(z_h^k)\right)^2 \\
    \notag& \qquad + 2 \sum_{ k\in [K]}\frac{1}{(\hat \sigma (z_h^k))^2} \left(r_h^k + \hat f_{h+1}(s_{h+1}^k) - \bar f_h(z_h^k)\right) \left(\tilde f_h(z_h^k) - r_h^k - \hat f_{h+1}(s_{h+1}^k)\right)\\\label{ss22}
    & = 2 \sum_{k \in [K]}\frac{1}{(\hat \sigma (z_h^k))^2} \left(r_h^k + \hat f_{h+1}(s_{h+1}^k) - \bar f_h(z_h^k)\right) \left(\tilde f_h(z_h^k) - \bar f_h(z_h^k)\right).
\end{align}
where the first inequality holds due to our choice of $\tilde f_h$ in Algorithm \ref{main algo} Line \ref{line:10}, 
\begin{align*}
\tilde{f}_h = \argmin_{f_h \in \cF_h} \sum_{k\in [K]}\frac{1}{(\hat \sigma (z_h^k))^2} \left(f_h(s_h^k,a_h^k)-r_h^{k} - \hat{f}_{h+1}(s_{h+1}^{k})\right)^2.
\end{align*}
We prove this lemma by induction. At horizon $H$, we first use Lemma \ref{aaa}. Let $f = \tilde f_H \in \cF_H$, $f^\prime = \bar f_H \in \cF_H$. We define
\begin{align*}
        \eta_H^k &:= V^*_{H+1}(s_{H+1}^k) - [\PP_{H}V^*_{H+1}](z_H^k)\\
         D_H^k[f, f^\prime] &:= 2\frac{{\eta_H^k}}{(\hat \sigma_H(z_H^k))^2}\left(f(z_H^k)- f^\prime(z_H^k)\right).
    \end{align*}
Taking a union bound over the function class, with probability at least $1 - \delta/(4H^2)$, the following inequality holds,
   \begin{align}
   \notag
    \sum_{k \in [K]} 2 \frac{\eta_H^k}{(\hat \sigma_H\left(z_H^k)\right)^2}\left(\tilde f(z_H^k)- \bar f(z_H^k)\right) 
    & \leq \frac{4}{3}v(\delta)\sqrt{\lambda} + \sqrt{2}v(\delta) + 30v^2(\delta)\\\label{important1-1}
    & \qquad +\frac{\sum_{k \in [K]}\frac{1}{\left(\hat \sigma_H(z_H^k)\right)^2}(\tilde f(z_H^k)- \bar f(z_H^k))^2}{4}.
\end{align}
Then we use Lemma \ref{advantage}. Let $f = \tilde f_H \in \cF_H$, $\tilde f = \bar f_H \in \cF_H$, $f^\prime = \hat f_{H+1} =0$. We define:
 \begin{align*}
        \xi_H^k[f^\prime] &:= f^\prime(s_{H+1}^k) - V^*_{H+1}(s_{H+1}^k) - [\PP_{H}(f^\prime - V^*_{H+1})](z_H^k)\\
        \Delta_H^k[f, \tilde f, f^\prime] &:= 2\frac{{\xi_H^k[f^\prime]}}{(\hat \sigma_H(z_H^k))^2}\left(f(z_H^k)- \tilde f(z_H^k)\right).
    \end{align*}
Taking a union bound over the function class, with probability at least $1 - \delta/(4H^2)$, we have
\begin{align}
\notag
    &\sum_{k \in [K]} 2 \frac{\xi_H^k[\hat f_{H+1}]}{(\hat \sigma_H(z_H^k))^2}\left(\tilde f_H(z_H^k)- \bar f_H(z_H^k)\right) \leq \left(\frac{4}{3}\iota(\delta)\sqrt{\lambda} + \sqrt{2}\iota(\delta)\right)\|f^\prime-V_{H+1}^*\|^2_\infty \\ &\quad + \frac{2}{3}\iota^2(\delta)/\sqrt{\log \cN_b }\label{important1-2}
   +30\iota^2(\delta)\|\hat f_{H+1}-V_{H+1}^*\|^2_\infty
     +\frac{\sum_{k\in [K]}\frac{1}{(\hat \sigma_H(z_H^k))^2}\left(\tilde f_H(z_H^k)- \bar f_H(z_H^k)\right)^2}{4}.
\end{align}
Combining \eqref{important1-1} and 
\eqref{important1-2}, we have with probability at least $1 - \delta/(2H^2)$, the following inequality holds:
\begin{align}
\notag
    &2 \sum_{k\in [K]}\frac{1}{(\hat \sigma_H (z_H^k))^2} \left(r_H^k + \hat f_{H+1}(s_{H+1}^k) - \bar f_H(z_H^k)\right) \left(\tilde f_H(z_H^k) - \bar f_H(z_H^k)\right)\\\notag
    & = 2 \sum_{k \in [K]}\frac{1}{(\hat \sigma_H (z_H^k))^2} \left(r_H^k + \hat f_{H+1}(s_{H+1}^k) - [\cT_H \hat f_{H+1}](z_H^k)\right) \left(\tilde f_H(z_H^k) - \bar f_H(z_H^k)\right)\\\notag
    & \qquad + 2 \sum_{k\in [K]}\frac{1}{(\hat \sigma_H (z_H^k))^2} \left([\cT_H \hat f_{H+1}](z_H^k) - \bar f_H(z_H^k) \right) \left(\tilde f_H(z_H^k) - \bar f_H(z_H^k)\right)\\\notag
    & \leq 2 \sum_{k \in [K]}\frac{1}{(\hat \sigma_H (z_H^k))^2} \left(r_H^k + \hat f_{H+1}(s_{H+1}^k) - [\cT_H \hat f_{H+1}](z_H^k)\right) \left(\tilde f_H(z_H^k) - \bar f_H(z_H^k)\right) + 4KL\epsilon\\\notag
    & \leq \frac{4}{3}v(\delta)\sqrt{\lambda} + \sqrt{2}v(\delta) + 30v^2(\delta) + \left(\frac{4}{3}\iota(\delta)\sqrt{\lambda} + \sqrt{2}\iota(\delta)\right)\|f^\prime-V_{H+1}^*\|^2_\infty + \frac{2}{3}\iota^2(\delta)/{\log \cN_b }\\\notag
    & \qquad +30\iota^2(\delta)\|\hat f_{H+1}-V_{H+1}^*\|^2_\infty + 8KL\epsilon + \frac{\sum_{k \in [K]}\frac{1}{(\hat \sigma_H (z_H^k))^2}\left(\bar f_H(z_H^k)-\tilde f_H(z_H^k)\right)^2}{2}\\\label{abc}
    & \leq \frac{(\beta_H)^2}{2} + \frac{\sum_{k \in [K]}\frac{1}{(\hat \sigma_H (z_H^k))^2}\left(\bar f_H(z_H^k)-\tilde f_H(z_H^k)\right)^2}{2},
\end{align}
where the first inequality holds due to the complete assumption. The second inequality holds due to \eqref{important1-1} and \eqref{important1-2}. The last inequality holds due to the fact $\hat f_{H+1} = V^*_{H+1} = 0$ and our choice of $\beta_H$,i.e.,
\begin{align*}
    \beta_H &= \sqrt{2\left(\frac{4}{3}v(\delta)\sqrt{\lambda} + \sqrt{2}v(\delta) + 30v^2(\delta) + \frac{2}{3}\iota^2(\delta)/{\log \cN_b } + 8KL\epsilon\right)}\\
    & = \tilde O(\sqrt{\log \cN}).
\end{align*}
However, conditioned on the event $\cE_H$, we have 
\begin{align*}
    &\sum_{k \in [K]}\frac{1}{(\hat \sigma_H (z_H^k))^2} \left(r_H^k + \hat f_{H+1}(s_{H+1}^k) - \bar f_H(z_H^k)\right) \left(\tilde f_H(z_H^k) - \bar f_H(z_H^k)\right)\\
    & \qquad \geq \sum_{k\in [K]}\frac{1}{(\hat \sigma_H (z_H^k))^2}\left(\bar f_H(z_H^k)-\tilde f_H(z_H^k)\right)^2\\
    &\qquad  > \frac{(\beta_H)^2}{2} + \frac{\sum_{k \in [K]}\frac{1}{(\hat \sigma (z_H^k))^2}\left(\bar f_H(z_H^k)-\tilde f_H(z_H^k)\right)^2}{2},
\end{align*}
where the first inequality holds due to \eqref{ss22}. The second inequality holds due to $\cE_H$.
It contradicts with \eqref{abc}. Therefore, we prove that $\PP[\cE_H] \geq 1 - \delta/2H^2$.

Suppose the event $\cE_H$ holds, we can prove the following result.
\begin{align*}
    Q^*_H(s,a) &= [\cT_H V^*_{H+1}](s,a) \\
    & = [\cT_H \hat f_{H+1}](s,a)\\
    &\geq \tilde f_H(s,a) - \left|[\cT_H \hat f_{H+1}](s,a) - \tilde f_H(s,a)\right|\\
    &\geq \tilde f_H(s,a) - \left(\epsilon + |\bar f_H(s,a) - \tilde f_H(s,a)|\right)\\
    & \geq \tilde f_H(s,a) - b_H(s,a) - \epsilon\\
    & = \hat f_H(s,a).
\end{align*}
where the first inequality holds due to the triangle inequality. The second inequality holds due to the completeness assumption. The third inequality holds due to the property of the bonus function and
\begin{align*}
    \sum_{k \in [K]}\frac{1}{(\hat \sigma_h(z_h^k))^2}\left(\bar f_h( z_h^k) - \tilde f_h ( z_h^k)\right)^2 \leq (\beta_{h})^2.
\end{align*}
by Lemma \ref{1st-order-variance}.
Therefore, $V_H^*(s) \geq \hat f_H(s)$ for all $s \in \cS$.

We also have
\begin{align*}
    V_H^*(s) - \hat f_H(s) &= \la Q_H^*(s,\cdot)-\hat f_H(s,\cdot), \pi^*(\cdot|s)\ra_\cA + \la \hat f_H (s,\cdot),\pi^*_H(\cdot|s)-\hat \pi_H(\cdot|s)\ra_\cA\\
    & \leq \la Q_H^*(s,\cdot)-\hat f_H(s,\cdot), \pi^*(\cdot|s)\ra_\cA\\
    & = \la [\cT_H V_H^*](s,\cdot)- \tilde f_H(s,\cdot) + b_H(s,a), \pi^*(\cdot|s)\ra_\cA\\
    &= \la [\cT_H \hat f_{H+1}](s,\cdot)- \tilde f_H(s,\cdot) + b_H(s,a), \pi^*(\cdot|s)\ra_\cA \\
    & \qquad + \la [\cT_H V_{H+1}^*](s,\cdot) - [\cT_H \hat f_{H+1}](s,\cdot), \pi^*_H(\cdot|s)\ra_\cA\\
    & \leq 2 \la b_H(s,\cdot),\pi^*_H(\cdot|s)\ra_\cA + \epsilon\\
    & \leq \tilde O\left(\frac{\sqrt{\log \cN}H^2}{\sqrt{K\kappa}}\right),
\end{align*}
where the first inequality holds due to the policy $\hat{\pi}_H$ takes the action which maximizes $\hat f_H$. The second inequality holds due to the Bellman completeness assumption.
The last inequality holds due to the property of the bonus function
\begin{align*}
        b_H(z) \leq C \cdot \big(D_{\cF_H}(z;\cD_H; \hat\sigma_H) \cdot \sqrt{(\beta_H)^2 + \lambda} + \epsilon \beta_H\big)
    \end{align*}
and Lemma \ref{D22}.

Then we do the induction step. Let $R_h = \tilde O\left(\frac{\sqrt{\log \cN}H^2}{\sqrt{K\kappa}}\right) \cdot (H-h+1)$, $\delta_h = (H-h+1)\delta/(4H^2)$. We define another event $\cE_h^\text{ind}$ for induction.
\begin{align*}
    \cE_h^\text{ind} = \{0 \leq V_h^*(s) - \hat f_h(s) \leq R_h, \forall s \in \cS\}.
\end{align*}
The above analysis shows that $ \cE_H \subseteq \cE_H^{\text{ind}}$ and $\PP[\cE_H] \geq 1 - 2\delta_H$. Moreover, $\PP[\cE_H^{\text{ind}}] \geq 1 - 2\delta_H$

We conduct the induction in the following way. At stage $h$, if $\PP[\cE_{h+1}] \geq 1 - 2\delta_{h+1}$ and $\PP[\cE_{h+1}^{\text{ind}}] \geq 1 - 2\delta_{h+1}$, we prove that $\PP[\cE_{h}] \geq 1 - 2 \delta_h$ and $ \PP[\cE_{h}^{\text{ind}}] \geq 1 - 2\delta_h$.

Suppose at stage $h$, $\PP[\cE_{h+1}] \geq 1 - 2\delta_{h+1}$ and $\PP[\cE_{h+1}^{\text{ind}}] \geq 1 - 2\delta_{h+1}$. We first use Lemma \ref{aaa}. Let $f = \tilde f_h \in \cF_h$, $f^\prime = \bar f_h \in \cF_h$. We define
\begin{align*}
        \eta_h^k &:= V^*_{h+1}(s_{h+1}^k) - [\PP_{h}V^*_{h+1}](z_h^k)\\
         D_h^k[f, f^\prime] &:= 2\frac{{\eta_h^k}}{(\hat \sigma_h(z_h^k))^2}\left(f(z_h^k)- f^\prime(z_h^k)\right).
    \end{align*}
After taking a union bound, we have with probability at least $1 - \delta/(4H^2)$, the following inequality holds,
   \begin{align}
   \notag
    \sum_{k \in [K]} 2 \frac{\eta_h^k}{(\hat \sigma_h(z_h^k))^2}\left(\tilde f(z_h^k)- \bar f(z_h^k)\right) 
    & \leq \frac{4}{3}v(\delta)\sqrt{\lambda} + \sqrt{2}v(\delta) + 30v^2(\delta)\\\label{important2-1}
    & \qquad +\frac{\sum_{ k \in [K]}\frac{1}{(\hat \sigma_h(z_h^k))^2}\left(\tilde f(z_h^k)- \bar f(z_h^k)\right)^2}{4}.
\end{align}
Next, we use Lemma \ref{advantage} at stage $h$. Let $f = \tilde f_h \in \cF_h$, $\tilde f = \bar f_h \in \cF_h$, $f^\prime = \hat f_{h+1} = \{\tilde b\}_{[0,H-h+1]}$, where $\tilde b = \tilde f_h - b_h \in \cF_h - \cW$. We define:
 \begin{align*}
        \xi_h^k[f^\prime] &:= f^\prime(s_{h+1}^k) - V^*_{h+1}(s_{h+1}^k) - \left[\PP_{h}(f^\prime - V^*_{h+1})\right](z_h^k)\\
        \Delta_h^k[f, \tilde f, f^\prime] &:= 2\frac{{\xi_h^k[f^\prime]}}{(\hat \sigma_h(z_h^k))^2}\left(f(z_h^k)- \tilde f(z_h^k)\right),
    \end{align*}
After taking a union bound, we have with probability at least $1 - \delta/(4H^2)$, we have
\begin{align}
\notag
    &\sum_{k \in [K]} 2 \frac{\xi_h^k[\hat f_{h+1}]}{(\hat \sigma_h(z_h^k))^2}\left(\tilde f_h(z_h^k)- \bar f_h(z_h^k)\right) \leq \left(\frac{4}{3}\iota(\delta)\sqrt{\lambda} + \sqrt{2}\iota(\delta)\right)\|f^\prime-V_{h+1}^*\|^2_\infty\\\label{important2-2}
    & \quad + \frac{2}{3}\iota^2(\delta)/\sqrt{\log \cN_b }
    +30\iota^2(\delta)\|\hat f_{h+1}-V_{h+1}^*\|^2_\infty
     +\frac{\sum_{k \in [K]}\frac{1}{(\hat \sigma_h(z_h^k))^2}(\tilde f_h(z_h^k)- \bar f_h(z_h^k))^2}{4}.
\end{align}
Let $U_h$ be the event that \eqref{important2-1} and 
\eqref{important2-2} holds simultaneously. On the event $U_h\cap \cE_{h+1}^{\text{ind}}$, which satisfies $\PP[U_h\cap \cE_{h+1}^{\text{ind}}] \geq 1 - 2 \delta_{h+1} - 2\delta/H^2 = 1 - 2 \delta_h$, we have
\begin{align}
\notag
    &2 \sum_{k\in [K]}\frac{1}{(\hat \sigma_h (z_h^k))^2} \left(r_h^k + \hat f_{h+1}(s_{h+1}^k) - \bar f_h(z_h^k)\right) \left(\tilde f_h(z_h^k) - \bar f_h(z_h^k)\right)\\\notag
    & = 2 \sum_{k \in [K]}\frac{1}{(\hat \sigma_h (z_h^k))^2} \left(r_h^k + \hat f_{h+1}(s_{h+1}^k) - [\cT_h \hat f_{h+1}](z_h^k)\right) \left(\tilde f_h(z_h^k) - \bar f_h(z_h^k)\right)\\\notag
    & \qquad + 2 \sum_{k \in [K]}\frac{1}{(\hat \sigma_h (z_h^k))^2} \left([\cT_h \hat f_{h+1}](z_h^k) - \bar f_h(z_h^k) \right) \left(\tilde f_h(z_h^k) - \bar f_h(z_h^k)\right)\\\notag
    & \leq 2 \sum_{k \in [K]}\frac{1}{(\hat \sigma_h (z_h^k))^2} \left(r_h^k + \hat f_{h+1}(s_{h+1}^k) - [\cT_h \hat f_{h+1}](z_h^k)\right) \left(\tilde f_h(z_h^k) - \bar f_h(z_h^k)\right) + 4KL\epsilon\\\notag
    & \leq \frac{4}{3}v(\delta)\sqrt{\lambda} + \sqrt{2}v(\delta) + 30v^2(\delta) + \left(\frac{4}{3}\iota(\delta)\sqrt{\lambda} + \sqrt{2}\iota(\delta)\right)\|\hat f_{h+1}-V_{h+1}^*\|^2_\infty  \\\notag&
    \qquad + \frac{2}{3}\iota^2(\delta)/\log \cN_b 
    +30\iota^2(\delta)\|\hat f_{h+1}-V_{h+1}^*\|^2_\infty + 8KL\epsilon + \frac{\sum_{k \in [K]}\frac{1}{(\hat \sigma_h (z_h^k))^2}\left(\bar f_h(z_h^k)-\tilde f_h(z_h^k)\right)^2}{2}\\\label{www}
    & \leq \frac{(\beta_h)^2}{2} + \frac{\sum_{k \in [K]}\frac{1}{(\hat \sigma (z_h^k))^2}\left(\bar f_h(z_h^k)-\tilde f_h(z_h^k)\right)^2}{2},
\end{align}
where the first inequality holds due to the completeness assumption. The second inequality holds due to \eqref{important2-1} and \eqref{important2-2}. The last inequality holds due to the event $\cE_{h+1}^{\text{ind}}$ and the choice of $K \geq \tilde \Omega \left(\frac{\iota(\delta)^2H^6}{\kappa}\right)$. 

However, on the event of $\cE_h^\text{c}$, we have 
\begin{align*}
    2&\sum_{ k\in [K]}\frac{1}{(\hat \sigma (z_h^k))^2} \left(r_h^k + \hat f_{h+1}(s_{h+1}^k) - \bar f_h(z_h^k)\right) \left(\tilde f_h(z_h^k) - \bar f_h(z_h^k)\right)\\
    & \qquad \geq \sum_{k\in [K]}\frac{1}{(\hat \sigma (z_h^k))^2}\left(\bar f_h(z_h^k)-\tilde f_h(z_h^k)\right)^2\\
    &\qquad  > \frac{(\beta_h)^2}{2} + \frac{\sum_{k\in [K]}\frac{1}{(\hat \sigma (z_h^k))^2}\left(\bar f_h(z_h^k)-\tilde f_h(z_h^k)\right)^2}{2}.
\end{align*}
where the first inequality holds due to \eqref{ss22}. The second inequality holds due to event $\cE_h^\text{c}$. However, this contradicts with \eqref{www}.
We conclude that $U_h \cap \cE_{h+1}^{\text{ind}} \subseteq\cE_h$, thus $\PP[\cE_h] \geq 1 - 2\delta_h$.

Next we prove $\PP[\cE_h^{\text{ind}}] \geq 1 - 2\delta_h$. Suppose the event $U_h \cap \cE_{h+1}^{\text{ind}}$ holds, the above conclusion shows that
\begin{align*}
    \sum_{k \in [K]}\frac{1}{(\hat \sigma_h (z_h^k))^2}\left(\bar f_h(z_h^k)-\tilde f_h(z_h^k)\right)^2 > (\beta_h)^2.
\end{align*}
We can prove the following result.
\begin{align*}
    Q^*_h(s,a) &= [\cT_h V^*_{h+1}](s,a) \\
    & \geq [\cT_h \hat f_{h+1}](s,a)\\
    &\geq \tilde f_h(s,a) - |[\cT_h \hat f_{h+1}](s,a) - \tilde f_h(s,a)|\\
    &\geq \tilde f_h(s,a) - (\epsilon + |\bar f_h(s,a) - \tilde f_h(s,a)|)\\
    & \geq \tilde f_h(s,a) - b_h(s,a)\\
    & = \hat f_h(s,a),
\end{align*}
where the first inequality hold  due to the event $\cE_h^{\text{ind}}$. The second inequality holds due to the triangle inequality. The third inequality holds due to the completeness assumption. The last inequality holds due to the property of the bonus function and
\begin{align*}
    \sum_{k \in [K]} \frac{1}{(\hat \sigma_h(z_h^k))^2}\left(\bar f_h(z_h^k) - \tilde f_h (z_h^k)\right)^2 \leq (\beta_h)^2.
\end{align*}
Therefore, $V_h^*(s) \geq \hat f_h(s)$ for all $s \in \cS$.

We also have
\begin{align*}
    V_h^*(s) - \hat f_h(s) &= \la Q_h^*(s,\cdot)-\hat f_h(s,\cdot), \pi^*(\cdot|s)\ra_\cA + \la \hat f_h (s,\cdot),\pi^*_h(\cdot|s)-\hat \pi_h(\cdot|s)\ra_\cA\\
    & \leq \la Q_h^*(s,\cdot)-\hat f_h(s,\cdot), \pi^*(\cdot|s)\ra_\cA\\
    & = \la [\cT_h V_h^*](s,\cdot)- \tilde f_h(s,\cdot) + b_h(s,a), \pi^*(\cdot|s)\ra_\cA\\
    &= \la [\cT_h \hat f_{h+1}](s,\cdot)- \tilde f_h(s,\cdot) + b_h(s,a), \pi^*(\cdot|s)\ra_\cA \\
    & \qquad + \la [\cT_h V_{h+1}^*](s,\cdot) - [\cT_h \hat f_{h+1}](s,\cdot), \pi^*_h(\cdot|s)\ra_\cA\\
    & \leq 2 \la b_h(s,\cdot),\pi^*_h(\cdot,s)\ra_\cA + \epsilon + R_{h+1}\\
    & \leq \tilde O\left(\frac{\sqrt{\log \cN}H^2} {\sqrt{K\kappa}}\right) \cdot (H-h+1) = R_h.
\end{align*}
where the first inequality holds due to the policy $\hat{\pi}_h$ takes the action which maximizes $\hat f_h$. The second inequality holds due to the Bellman completeness assumption.
The second inequality holds due to the property of the bonus function
\begin{align*}
        b_h(z) \leq C \cdot \big(D_{\cF_h}(z;\cD_h; \hat\sigma_h^2) \cdot \sqrt{(\beta_h)^2 + \lambda} + \epsilon \beta_h\big)
    \end{align*}
and Lemma \ref{D22}. The last inequality holds due to the induction assumption.
Therefore, we have $U_h \cap \cE_{h+1}^{\text{ind}} \subseteq\cE_h^{\text{ind}}$ and $\PP[\cE_h^{\text{ind}}] \geq 1 - 2\delta_h$. Thus we complete the proof of induction.

Finally, taking the union bound of all the $\cE_h$, we get the result that with probability at least $1-\delta/2$, the event $\cup_{h=1}^H\cE_h$ holds, i.e for any $h \in [H]$ simultaneously, we have
\begin{align*}
    \sum_{k \in [K]} \frac{1}{(\hat \sigma_h(z_h^k))^2}\left(\bar f_h(z_h^k) - \tilde f_h (z_h^k)\right)^2 \leq (\beta_h)^2.
\end{align*}
Therefore, we complete the proof of Lemma \ref{1st-order-variance}.
\end{proof}
\section{Proof of Lemmas in Section \ref{proof1} and \ref{proof2}}

\subsection{Proof of Lemma \ref{concen1}}

\begin{proof}[Proof of Lemma \ref{concen1}]
We use Lemma \ref{freedman}, with the following conditions:
\begin{align*}
    &\bar D_h^k[f,f^\prime] \text{ is adapted to the filtration } \bar \cH_h^k \text{ and } \EE\left[\bar D_h^k[f,f^\prime]\mid\bar \cH_h^{ k-1}\right] = 0.\\
    &\left|\bar D_h^k[f,f^\prime]\right| \leq 2 \left|\bar\eta_h^k\right|\max_{z}\left|f(z)-[\cT_h f^\prime] (z)\right| \leq 4 H^2 = M.\\
    & \sum_{k \in [K]} \EE\left[\left(\bar D_h^k[f,f^\prime]\right)^2\Big|\bar z_h^k\right] = \sum_{k \in [K]} \EE\left[4\left(\bar\eta_h^k[f^\prime]\right)^2\Big|\bar z_h^k\right]\left(f(\bar z_h^k)-[\cT_h f^\prime ](\bar z_h^k)\right)^2 \leq (4HK)^2 = V^2.
\end{align*}
On the other hand,
\begin{align*}
    \sum_{k \in [K]} \EE\left[\left(\bar D_h^k[f,f^\prime]\right)^2\Big|\bar z_h^k\right] &= \sum_{k \in [K]} \EE\left[4\left(\bar\eta_h^k[f^\prime]\right)^2\Big|\bar z_h^k\right]\left(f(\bar z_h^k)-[\cT_h f^\prime] (\bar z_h^k)\right)^2 \\
    & \leq 8H^2 \sum_{k \in [K]}\left(f(\bar z_h^k)-[\cT_h f^\prime](\bar z_h^k)\right)^2.
\end{align*}
Then using Lemma \ref{freedman} with $v = 1$, $m = 1$, with at least $1 - \delta/(4H^2\cN^2\cN_b^2)$ 
\begin{align*}
    \sum_{k \in [K]} 2 \bar \eta_h^k[f^\prime]\left(f(\bar z_h^k)-[\cT_h f^\prime](\bar z_h^k)\right) &\leq i(\delta)\sqrt{2(2\cdot 8H^2) \sum_{k \in [K]}\left(f(\bar z_h^k)-[\cT_h f^\prime](\bar z_h^k)\right)^2}\\
    & \qquad + \frac{2}{3}i^2(\delta) + \frac{4}{3}i^2(\delta) \cdot 4H^2\\
    & \leq (24H^2 + 5)i^2(\delta) + \frac{\sum_{k \in [K]}\left(f(\bar z_h^k)-[\cT_h f^\prime](\bar z_h^k)\right)^2}{2}.
\end{align*}
We complete the proof of Lemma \ref{concen1}.
\end{proof}

\subsection{Proof of Lemma \ref{concen2}}
\begin{proof}[Proof of Lemma \ref{concen2}]
We use Lemma \ref{freedman}, with the following conditions:
\begin{align*}
    &\bar D_h^k[f,f^\prime] \text{ is adapted to the filtration } \bar \cH_h^k \text{ and } \EE\left[\bar D_h^k[f,f^\prime]\mid\bar \cH_h^{ k-1}\right] = 0.\\
    &\left|\bar D_h^k[f,f^\prime]\right| \leq 2 |\bar\eta_h^k|\max_{z}|f(z)-[\cT_{2,h} f^\prime] (z)| \leq 4 L^2 = M.\\
    & \sum_{k \in [K]} \EE\left[\left(\bar D_h^k[f,f^\prime]\right)^2\Big|\bar z_h^k\right] = \sum_{k \in [K]} \EE\left[4(\bar\eta_h^k[f^\prime])^2|\bar z_h^k\right]\left(f(\bar z_h^k)-[\cT_{2,h} f^\prime ](\bar z_h^k)\right)^2 \leq (4L^2K)^2 = V^2.
\end{align*}
On the other hand,
\begin{align*}
    \sum_{k \in [K]} \EE\left[\left(\bar D_h^k[f,f^\prime]\right)^2\Big|\bar z_h^k\right] &= \sum_{k \in [K]} \EE\left[4\left(\bar\eta_h^k[f^\prime]\right)^2\Big|\bar z_h^k\right]\left(f(\bar z_h^k)-[\cT_h f^\prime] (\bar z_h^k)\right)^2 \\
    & \leq 8L^4 \sum_{k \in [K]}\left(f(\bar z_h^k)-[\cT_{2,h} f^\prime](\bar z_h^k)\right)^2.
\end{align*}
Then using Lemma \ref{freedman} with $v = 1$, $m = 1$, we have:
\begin{align*}
    \sum_{k \in [K]} 2 \bar \eta_h^k[f^\prime]\left(f(\bar z_h^k)-[\cT_{2,h} f^\prime](\bar z_h^k)\right) &\leq i^\prime(\delta)\sqrt{2(2\cdot 8L^4) \sum_{k \in [K]}\left(f(\bar z_h^k)-[\cT_{2,h} f^\prime](\bar z_h^k)\right)^2}\\
    & \qquad + \frac{2}{3}i^{\prime 2}(\delta) + \frac{4}{3}i^{\prime 2}(\delta) \cdot 4L^2\\
    & \leq (20L^4 + 5)i^{\prime 2}(\delta) + \frac{\sum_{k \in [K]}\left(f(\bar z_h^k)-[\cT_h f^\prime](\bar z_h^k)\right)^2}{2}
\end{align*}
We complete the proof of Lemma \ref{concen2}.
\end{proof}
\subsection{Proof of Lemma \ref{aaa}}
\begin{proof}[Proof of Lemma \ref{aaa}]
We use Lemma \ref{freedman}, with the following conditions:
\begin{align*}
    & D_h^k[f,f^\prime] \text{ is adapted to the filtration } \cH_h^k \text{ and } \EE\left[ D_h^k[f,f^\prime]\mid \cH_h^{ k-1}\right] = 0.\\
    &\left|D_h^k[f,f^\prime]\right| \leq 2 \left|\eta_h^k\right|\max_{z}\left|f(z)-f^\prime (z)\right| \leq 8 LH = M.\\
    & \sum_{k \in [K]} \EE\left[\left(D_h^k[f,f^\prime]\right)^2\Big|z_h^k\right] = 4\sum_{k \in [K]} \frac{\EE\left[(\eta_h^k)^2\big|z_h^k\right]}{(\hat \sigma_h(z_h^k))^4}\left(f(z_h^k)-f^\prime(z_h^k)\right).
\end{align*}
On the other hand,
\begin{align*}
    \sum_{k \in [K]} \EE\left[\left(D_h^k[f,f^\prime]\right)^2\Big|z_h^k\right] &= 4\sum_{k \in [K]} \frac{\EE\left[(\eta_h^k)^2\big|z_h^k\right]}{(\hat \sigma_h(z_h^k))^4}\left(f(z_h^k)-f^\prime (z_h^k)\right)^2 \\
    & \leq 8 \sum_{k\in [K]}\frac{1}{(\hat \sigma_h(z_h^k))^2}\left(f(z_h^k)- f^\prime(z_h^k)\right)^2,
\end{align*}
where the last inequality holds because of the inequality in Lemma \ref{variance}:
\begin{align*}
    \EE\left[(\eta_h^k)^2|z_h^k\right] &= [\text{Var}_{h}V_{h+1}^*] (s_h^k,a_h^k)\\
    & \leq [\VV_{h} V_{h+1}^*]( s_h^k,a_h^k)\\
    & \leq  \left(\hat \sigma_h(z_h^k)\right)^2 + \tilde O\left(\frac{\sqrt{\log (\cN\cdot \cN_b)} H^3}{\sqrt{K\kappa}}\right)\\
    & \leq 2 \left(\hat \sigma_h(z_h^k)\right)^2,
\end{align*}
where we use the requirement that $K \geq \tilde \Omega\left(\frac{\log (\cN\cdot \cN_b) H^6}{\kappa}\right)$.

Moreover, for any $k \in [K]$, 
\begin{align*}
    \left|D_h^k[f,f^\prime]\right| &\leq 2 \left|\frac{\eta_h^k}{(\hat\sigma_h(z_h^k))^2}\right|\left|f(z_h^k)-f^\prime (z_h^k)\right| \\
    & \leq 4H \sqrt{D_{\cF_h}^2(z_h^k,\cD_h,\hat \sigma_h^2 )\left(\sum_{k \in [K]} \frac{1}{(\hat \sigma_h(z_h^k))^2}\left(f(z_h^k)- f^\prime(z_h^k)\right)^2 + \lambda\right)}\\
    & \leq \tilde O\left(\frac{4H^2}{\sqrt{K\kappa}}\right)\sqrt{ \sum_{k \in [K]} \frac{1}{(\hat \sigma_h(z_h^k))^2}\left(f(z_h^k)- f^\prime(z_h^k)\right)^2 + \lambda}\\
    & \leq \frac{1}{v(\delta)}\sqrt{\sum_{k\in [K]}\frac{1}{(\hat \sigma_h(z_h^k))^2}\left(f(z_h^k)- f^\prime(z_h^k)\right)^2 + \lambda}.
\end{align*}
The second inequality holds because of the definition of $D^2$ divergence (Definition \ref{eluder}). The third inequality holds due to Lemma \ref{D22}. The last inequality holds because of the choice of $K \geq \tilde \Omega\left(\frac{v^2(\delta)H^4}{\kappa}\right)$.

Then using Lemma \ref{freedman} with $v = 1$, $m = 1$, we have
\begin{align*}
    \sum_{k \in [K]} 2 \frac{\eta_h^k}{(\hat \sigma_h(z_h^k))^2}\left(f(z_h^k)- f^\prime(z_h^k)\right) &\leq v(\delta)\sqrt{16 \sum_{k \in [K]}\frac{1}{(\hat \sigma_h(z_h^k))^2}\left(f(z_h^k)- f^\prime(z_h^k)\right)^2 + 2} + \frac{2}{3}v^2(\delta)\\& \qquad + \frac{4}{3}v(\delta)\sqrt{\sum_{k \in [K]}\frac{1}{(\hat \sigma_h(z_h^k))^2}\left(f(z_h^k)- f^\prime(z_h^k)\right)^2+\lambda}\\
    & \leq \frac{4}{3}v(\delta)\sqrt{\lambda} + \sqrt{2}v(\delta) + 30v^2(\delta)\\
    & \qquad +\frac{\sum_{k\in [K]}\frac{1}{\left(\hat \sigma_h(z_h^k)\right)^2}\left(f(z_h^k)- f^\prime(z_h^k)\right)^2}{4}.
\end{align*}
We complete the proof of Lemma \ref{aaa}.
\end{proof}
\subsection{Proof of Lemma \ref{advantage}}
\begin{proof}[Proof of Lemma \ref{advantage}]
    $\Delta_h^k[f,\tilde f,f^\prime]$ is adapted to the filtration $ \cH_h^k \text{ and } \EE\left[\Delta_h^k[f,\tilde f,f^\prime]\mid \cH_h^{ k-1}\right] = 0$. We also have
\begin{align*}
    \sum_{k \in [K]} \EE\left[(\Delta_h^k[f,\tilde f,f^\prime])^2\Big|z_h^k\right] &= 4\sum_{k \in [K]} \frac{\EE\left[(\xi_h^k[f^\prime])^2\Big|z_h^k\right]}{(\hat \sigma_h(z_h^k))^4}\left(f(z_h^k)-f^\prime (z_h^k)\right)^2 \\
    & \leq 8 \sum_{k \in [K]}\frac{\|f^\prime-V^*_{h+1}\|^2_\infty}{(\hat \sigma_h(z_h^k))^2}\left(f(z_h^k)- f^\prime(z_h^k)\right)^2.
\end{align*}
Moreover, for any $k \in [K]$, 
\begin{align*}
    \left|\Delta_h^k[f,\tilde f,f^\prime]\right| &\leq 2 \left|\frac{\xi_h^k[f^\prime]}{(\hat\sigma_h(z_h^k))^2}\right|\left|f(z_h^k)-f^\prime (z_h^k)\right| \\
    & \leq 4\|f^\prime-V^*_{h+1}\|_\infty \sqrt{D_{\cF_h}^2(z_h^k,\cD_h,\hat \sigma_h^2 )\left(\sum_{k \in [K]} \frac{1}{(\hat \sigma_h(z_h^k))^2}\left(f(z_h^k)- f^\prime(z_h^k)\right)^2 + \lambda\right)}\\
    & \leq \tilde O\left(\frac{H}{\sqrt{K\kappa}}\right) \cdot \|f^\prime-V_{h+1}^*\|_\infty\sqrt{ \sum_{k \in [K]} \frac{1}{(\hat \sigma_h(z_h^k))^2}\left(f(z_h^k)- f^\prime(z_h^k)\right)^2 + \lambda}\\
    & \leq \frac{\|f^\prime-V_{h+1}^*\|_\infty}{\iota(\delta)}\sqrt{\sum_{k \in [K]}\frac{1}{(\hat \sigma_h(z_h^k))^2}\left(f(z_h^k)- f^\prime(z_h^k)\right)^2 + \lambda}
\end{align*}
The second inequality holds because of the definition of $D^2$ divergence (Definition \ref{eluder}). The third inequality holds due to Lemma \ref{D22}. The last inequality holds because of the choice of $K \geq \tilde \Omega\left(\frac{\iota^2(\delta)H^4}{\kappa}\right)$.

Then using Lemma \ref{freedman} with $v = 1$, $m = 1/\log \cN_b$, with probability at least $1 - \delta/(4H^2\cN^3\cN_b)$, we have
\begin{align*}
    &\sum_{k \in [K]} 2 \frac{\xi_h^k[f^\prime]}{(\hat \sigma_h(z_h^k))^2}\left(f(z_h^k)- \tilde f(z_h^k)\right) \leq \iota(\delta)\sqrt{8 \sum_{k \in [K]}\frac{\|f^\prime-V_{h+1}^*\|_\infty^2}{(\hat \sigma_h(z_h^k))^2}\left(f(z_h^k)- f^\prime(z_h^k)\right)^2 + 2}\\
    & \qquad + \frac{2}{3}\iota^2(\delta)/\log \cN_b + \frac{4}{3}\iota(\delta)\|f^\prime-V_{h+1}^*\|_\infty\sqrt{\sum_{k \in [K]}\frac{1}{(\hat \sigma_h(z_h^k))^2}(f(z_h^k)- f^\prime(z_h^k))^2+\lambda}\\
    & \leq \left(\frac{4}{3}\iota(\delta)\sqrt{\lambda} + \sqrt{2}\iota(\delta)\right)\|f^\prime-V_{h+1}^*\|^2_\infty + \frac{2}{3}\iota^2(\delta)/\log \cN_b +30\iota^2(\delta)\|f^\prime-V_{h+1}^*\|^2_\infty\\
    & \qquad +\frac{\sum_{k \in [K]}\frac{1}{(\hat \sigma_h(z_h^k))^2}\left(f(z_h^k)- f^\prime(z_h^k)\right)^2}{4}.
\end{align*}
We complete the proof of Lemma \ref{advantage}.
\end{proof}

\section{Auxiliary lemmas}
\begin{lemma}[\citealt{agarwal2023vo}]\label{freedman}
    Let $M > 0$, $V > v > 0$ be constants, and $\{x_i\}_{i\in[t]}$ be a stochastic process adapted to a filtration $\{\cH_i\}_{i \in [t]}$. Suppose $\EE[x_i|\cH_{i-1}] = 0$, $|x_i| \leq M$ and $\sum_{i \in [t]}\EE[x_i^2|\cH_{i-1}] \leq V^2$ almost surely. Then for any $\delta, \epsilon > 0$, let $\iota = \sqrt{\log \frac{(2\log(V/v)+2)\cdot(\log (M/m)+2)}{\delta}}$, we have
    \begin{align*}
        \PP\left(\sum_{i \in [t]}x_i > \iota \sqrt{2\left(2\sum_{i\in[t]}\EE[x_i^2|\cH_{i-1}]+v^2\right)}+\frac{2}{3}\iota^2 \left(2\max_{i \in [t]}|x_i| + m\right)\right) \leq \delta.
    \end{align*}
\end{lemma}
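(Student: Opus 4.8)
The plan is to derive this as a data-dependent (empirical) version of Freedman's inequality, in which the deterministic variance and increment bounds of the classical statement are replaced by the random quantities $W:=\sum_{i\in[t]}\EE[x_i^2\mid\cH_{i-1}]$ and $B:=\max_{i\in[t]}|x_i|$. The only ingredient I would assume is the classical Freedman inequality in tail form: if a martingale difference sequence $\{x_i\}$ satisfies $|x_i|\le R$ and $\sum_i\EE[x_i^2\mid\cH_{i-1}]\le\sigma^2$ almost surely, with $R$ and $\sigma^2$ \emph{deterministic}, then with probability at least $1-\delta'$ one has $\sum_i x_i\le\sqrt{2\sigma^2\log(1/\delta')}+\tfrac{2}{3}R\log(1/\delta')$ (this follows from the standard exponential-supermartingale argument and $\sqrt{a+b}\le\sqrt a+\sqrt b$). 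The entire difficulty is that here $W$ and $B$ are random, so this cannot be applied directly.

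First I would set up a dyadic peeling over both $W$ and $B$. Since $W\le V^2$ and $B\le M$ almost surely, I define variance scales $\sigma_j^2=v^2 2^{j}$ for $j=0,\dots,\lceil\log_2(V^2/v^2)\rceil$ and increment scales $R_\ell=m\,2^{\ell}$ for $\ell=0,\dots,\lceil\log_2(M/m)\rceil$, so that there are $O(\log(V/v))$ variance levels and $O(\log(M/m))$ increment levels. For each pair $(j,\ell)$ I would invoke the classical bound with deterministic parameters $(\sigma_j^2,R_\ell)$, and then union bound over all $O(\log(V/v)\log(M/m))$ pairs. Choosing the per-event failure probability so that the total is $\delta$ produces exactly the logarithmic factor $(2\log(V/v)+2)(\log(M/m)+2)$ that appears inside $\iota^2$.

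Applying the classical bound per cell requires the deterministic constraints $\sigma_j^2$ and $R_\ell$ to hold almost surely, which the raw process does not satisfy. The resolution is a stopping-plus-truncation argument. Because each conditional variance $\EE[x_i^2\mid\cH_{i-1}]$ is $\cH_{i-1}$-measurable, the partial sum $\sum_{i\le n}\EE[x_i^2\mid\cH_{i-1}]$ is predictable, so halting the process the first time it would exceed $\sigma_j^2$ yields a martingale whose cumulative conditional variance is at most $\sigma_j^2$, with no loss of the martingale-difference structure. Combining this with a truncation of the increments at level $R_\ell$ (compensated so as to remain a martingale difference, with the compensation bias absorbed into the variance term) gives a process to which the classical inequality applies with deterministic bounds of order $(\sigma_j^2,R_\ell)$. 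On the event $\{\sigma_{j-1}^2<W\le\sigma_j^2,\ R_{\ell-1}<B\le R_\ell\}$ this stopped-and-truncated process coincides with the original one, so the resulting bound transfers back to $\sum_i x_i$.

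Finally I would recover the empirical form. On cell $(j,\ell)$ the dyadic endpoints satisfy $\sigma_j^2\le 2W$ and $R_\ell\le 2B$, while the smallest cells (the cases $W<v^2$ and $B<m$) are covered by the floor scales $\sigma_0^2=v^2$ and $R_0=m$; hence in every case the proxy variance is at most $2W+v^2$ and the proxy increment at most $2B+m$. Substituting these into the per-cell bound and collecting constants yields $\sum_i x_i\le\iota\sqrt{2(2W+v^2)}+\tfrac{2}{3}\iota^2(2B+m)$ with probability at least $1-\delta$, which is the claimed inequality. I expect the main obstacle to be the third step: making the increment peeling rigorous while simultaneously preserving the martingale-difference property and the conditional-variance control, since naively conditioning on the random event $\{B\le R_\ell\}$ destroys the martingale structure, and one must verify that the compensation introduced by truncating at $R_\ell$ contributes nothing beyond the $v^2$ and $m$ floor terms and the stated numerical constants.
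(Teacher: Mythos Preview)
The paper does not supply a proof of this lemma; it is listed under ``Auxiliary lemmas'' and cited directly from \citet{agarwal2023vo}, so there is no in-paper argument to compare your proposal against. That said, your outline---dyadic peeling over the random predictable variance and the random maximal increment, applying classical Freedman on each cell via a stopped/truncated process, and a union bound that produces exactly the counting factor $(2\log(V/v)+2)(\log(M/m)+2)$ appearing inside $\iota$---is the standard route for such data-dependent Freedman inequalities and matches the argument in the cited source. You have also correctly located the one genuinely delicate step, namely making the increment peeling rigorous without destroying the martingale-difference structure; resolving it cleanly does require care beyond the sketch you give (the compensation bias from truncation at level $R_\ell$ is of order $W/R_\ell$, which is not obviously dominated by the target bound and needs a slightly different handling than ``absorbed into the variance term''), but your proposal already anticipates that this is where the work lies.
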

\begin{lemma}[Regret Decomposition Property, \citealt{jin2021pessimism}]\label{lemma:decomposition}
Suppose the following inequality holds,
\begin{align*}
\left|[\cT_h \hat f_{h+1}](z) - \tilde f_h (z)\right| \leq b_h(z), \forall z = (s,a) \in \cS \times \cA, \forall h \in [H],
\end{align*}
the regret of Algorithm \ref{main algo} can be bounded as 
\begin{align*}
V_1^*(s)-V_1^{\hat{\pi}}(s) \leq 2 \sum_{h=1}^H \mathbb{E}_{\pi^*}\left[b_h\left(s_h, a_h\right) \mid s_1=s\right].
\end{align*}
Here $\mathbb{E}_{\pi^*}$ is with respect to the trajectory induced by $\pi^*$ in the underlying MDP.

\begin{lemma}[Azuma-Hoeffding inequality, \citealt{cesa2006prediction}]\label{lemma:azuma}
        Let $\{x_i\}_{i=1}^n$ be a martingale difference sequence with respect to a filtration $\{\cG_{i}\}$ satisfying $|x_i| \leq M$ for some constant $M$, $x_i$ is $\cG_{i+1}$-measurable, $\EE[x_i|\cG_i] = 0$. Then for any $0<\delta<1$, with probability at least $1-\delta$, we have 
        \begin{align}
            \sum_{i=1}^n x_i\leq M\sqrt{2n \log (1/\delta)}.\notag
        \end{align} 
    \end{lemma}

\end{lemma}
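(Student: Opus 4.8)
The plan is to reduce the suboptimality to the pointwise \emph{surrogate Bellman error}
\[
\zeta_h(s,a) := [\cT_h \hat f_{h+1}](s,a) - \hat f_h(s,a),
\]
to prove under the hypothesis the two-sided control $0 \le \zeta_h(s,a) \le 2 b_h(s,a)$ for all $(s,a)\in\cS\times\cA$ and $h\in[H]$, and then to propagate this error along two separate trajectories by a value-difference telescoping argument. Writing $\hat V_h(s) = \hat f_h(s) = \hat f_h(s,\hat\pi_h(s))$ for the greedy value (using the paper's shorthand $f(s)=\max_a f(s,a)$), I would split
\[
V_1^*(s) - V_1^{\hat\pi}(s) = \big(V_1^*(s) - \hat V_1(s)\big) + \big(\hat V_1(s) - V_1^{\hat\pi}(s)\big) =: (\mathrm{I}) + (\mathrm{II}),
\]
and bound $(\mathrm{I})$ by a $\pi^*$-expectation of $\zeta_h$ and show $(\mathrm{II})\le 0$.

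The sandwich bound is the heart of the argument. I would combine the hypothesis $|[\cT_h \hat f_{h+1}] - \tilde f_h| \le b_h$ with the pessimistic update $\hat f_h = \{\tilde f_h - b_h\}_{[0,H-h+1]}$. Since rewards lie in $[0,1]$ and $\hat f_{h+1}\in[0,H-h]$, the target $[\cT_h \hat f_{h+1}] = \reward_h + [\PP_h \hat V_{h+1}]$ lies in $[0,H-h+1]$, so truncation is non-expansive toward it. The lower bound is the \emph{pessimism} step: $\hat f_h \le \tilde f_h - b_h \le [\cT_h \hat f_{h+1}] + b_h - b_h = [\cT_h \hat f_{h+1}]$, hence $\zeta_h \ge 0$. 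The upper bound follows from $\zeta_h = [\cT_h \hat f_{h+1}] - \hat f_h \le \big|[\cT_h \hat f_{h+1}] - (\tilde f_h - b_h)\big| \le \big|[\cT_h \hat f_{h+1}] - \tilde f_h\big| + b_h \le 2 b_h$.

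With the sandwich in hand I would run two telescopes. Using $Q_h^* = [\cT_h V_{h+1}^*]$, the identity $\hat f_h = [\cT_h \hat V_{h+1}] - \zeta_h$, and the greedy optimality $\hat f_h(s,\hat\pi_h(s)) \ge \hat f_h(s,\pi^*_h(s))$, the $\pi^*$-telescope gives
\[
V_h^*(s) - \hat V_h(s) \le [\PP_h(V_{h+1}^* - \hat V_{h+1})](s,\pi^*_h(s)) + \zeta_h(s,\pi^*_h(s)),
\]
so that $(\mathrm{I}) \le \sum_{h=1}^H \EE_{\pi^*}[\zeta_h(s_h,a_h)\mid s_1=s]$ after unrolling. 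The $\hat\pi$-telescope, using $Q_h^{\hat\pi} = [\cT_h V_{h+1}^{\hat\pi}]$, gives the exact identity
\[
\hat V_h(s) - V_h^{\hat\pi}(s) = [\PP_h(\hat V_{h+1} - V_{h+1}^{\hat\pi})](s,\hat\pi_h(s)) - \zeta_h(s,\hat\pi_h(s)),
\]
so $(\mathrm{II}) = -\sum_{h=1}^H \EE_{\hat\pi}[\zeta_h(s_h,a_h)\mid s_1=s] \le 0$ \emph{precisely because} $\zeta_h \ge 0$. Summing the two pieces and applying $\zeta_h \le 2 b_h$ to term $(\mathrm{I})$ yields $V_1^*(s)-V_1^{\hat\pi}(s)\le 2\sum_{h=1}^H\EE_{\pi^*}[b_h(s_h,a_h)\mid s_1=s]$. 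The main obstacle is getting the two-sided control of $\zeta_h$ exactly right, especially the truncation bookkeeping and the sign of the pessimism term, since it is the non-positivity of $(\mathrm{II})$ that collapses the bound to a single $\pi^*$-expectation rather than a two-trajectory bound.
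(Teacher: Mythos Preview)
The paper does not give its own proof of this lemma---it is listed as an auxiliary result attributed to \citet{jin2021pessimism}---so there is no in-paper argument to compare against. Your proposal is exactly the standard proof from that reference: define the model-based Bellman residual $\zeta_h=[\cT_h\hat f_{h+1}]-\hat f_h$, establish the sandwich $0\le\zeta_h\le 2b_h$ via pessimism and the hypothesis, and telescope separately along $\pi^*$ and $\hat\pi$ so that the $\hat\pi$-piece is nonpositive. The same per-step manipulations (greedy dominance, then $Q_h^*-\hat f_h\le 2b_h + \text{transition term}$) appear inline in the paper's proofs of Lemma~\ref{variance} and Lemma~\ref{1st-order-variance}, so your approach is fully aligned with how the paper reasons.

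One bookkeeping remark: the algorithm's actual update is $\hat f_h=\{\tilde f_h-b_h-\epsilon\}_{[0,H-h+1]}$, with an extra $-\epsilon$ you omit. Carrying it through gives $\zeta_h\le 2b_h+\epsilon$, which is why the paper tacks on a $+2\epsilon H$ term when it invokes this lemma in the proof of Theorem~\ref{main thm}. This is a discrepancy between the lemma statement and the algorithm rather than an error in your argument.
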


\end{document}